\newcommand{\bzero}{\mathbf{0}}
\newcommand{\ba}{\mathbf{a}}
\newcommand{\bA}{\mathbf{A}}
\newcommand{\bC}{\mathbf{C}}
\newcommand{\E}{\mathbb{E}}
\newcommand{\cF}{\mathcal{F}}
\newcommand{\bI}{\mathbf{I}}
\newcommand{\cN}{\mathcal{N}}
\newcommand{\bbP}{\mathbb{P}}
\newcommand{\cP}{\mathcal{P}}
\newcommand{\bq}{\mathbf{q}}
\newcommand{\bQ}{\mathbf{Q}}
\newcommand{\R}{\mathbb{R}}
\newcommand{\bbS}{\mathbb{S}}
\newcommand{\bu}{\mathbf{u}}
\newcommand{\cU}{\mathcal{U}}
\newcommand{\bv}{\mathbf{v}}
\newcommand{\bV}{\mathbf{V}}
\newcommand{\cV}{\mathcal{V}}
\newcommand{\bcV}{\boldsymbol{\cV}}
\newcommand{\bx}{\mathbf{x}}
\newcommand{\Z}{\mathbb{Z}}
\newcommand{\bSigma}{\boldsymbol{\Sigma}}
\newcommand{\bxi}{\boldsymbol{\xi}}
\DeclareMathOperator*{\argmax}{arg\,max}
\newcommand{\tT}{\mathrm{T}}
\newcommand{\bPr}{\bbP}
\newcommand{\DK}{\mbox{D-Krasulina}}
\newcommand{\DMK}{\mbox{DM-Krasulina}}
\newcommand{\qed}{\hfill$\blacksquare$}
\begin{document}

\title{Distributed Stochastic Algorithms for High-rate Streaming Principal Component Analysis}	

\author{Haroon~Raja%
    \thanks{Haroon Raja is a research fellow in the Department of Electrical Engineering and Computer Science at the University of Michigan, Ann Arbor, MI (Email: {\tt hraja@umich.edu}).}%
    ~~and~Waheed~U.~Bajwa%
    \thanks{(Corresponding Author) Waheed U. Bajwa is an associate professor in the Department of Electrical and Computer Engineering at Rutgers University--New Brunswick, NJ (Email: {\tt waheed.bajwa@rutgers.edu}).}~%
\thanks{The work reported in this paper has been supported in part by the National Science Foundation under awards CCF-1453073, CCF-1907658, and OAC-1940074, by the Army Research Office under award W911NF-17-1-0546, and by the DARPA Lagrange Program under ONR/NIWC contract N660011824020.}
}

\maketitle

\begin{abstract}
This paper considers the problem of estimating the principal eigenvector of a covariance matrix from independent and identically distributed data samples in streaming settings. The streaming rate of data in many contemporary applications can be high enough that a single processor cannot finish an iteration of existing methods for eigenvector estimation before a new sample arrives. This paper formulates and analyzes a distributed variant of the classical Krasulina's method (\DK) that can keep up with the high streaming rate of data by distributing the computational load across multiple processing nodes. The analysis shows that---under appropriate conditions---\DK~converges to the principal eigenvector in an order-wise optimal manner; i.e., after receiving $M$ samples across all nodes, its estimation error can be $O(1/M)$. In order to reduce the network communication overhead, the paper also develops and analyzes a mini-batch extension of \DK, which is termed \DMK. The analysis of \DMK~shows that it can also achieve order-optimal estimation error rates under appropriate conditions, even when some samples have to be discarded within the network due to communication latency. Finally, experiments are performed over synthetic and real-world data to validate the convergence behaviors of \DK~and \DMK~in high-rate streaming settings.
\end{abstract}

\begin{keywords}
    Distributed algorithms; Krasulina's method; mini-batch optimization; principal component analysis; stochastic methods
\end{keywords}

\section{Introduction}
\label{sec:Introduction}
Dimensionality reduction and feature learning methods such as \emph{principal component analysis} (PCA), sparse PCA, independent component analysis, and autoencoder form an important component of any machine learning pipeline. For data lying in a $d$-dimensional space, such methods try to find the \mbox{$k \ll d$} variables/features that are most relevant for solving an application-specific task (e.g., classification, regression, estimation, data compression, etc.). The focus of this work is on PCA, where the objective is to compute $k$-features that capture most of the variance in data. The proliferation of \emph{big data} (both in terms of dimensionality and number of samples) has resulted in an increased interest in developing new algorithms for PCA due to the fact that classical numerical solutions (e.g., power iteration and Lanczos method~\citep{golub2012matrix}) for computing eigenvectors of symmetric matrices do not scale well with high dimensionality and large sample sizes. The main interest in this regard has been on developing algorithms that are cheap in terms of both memory and computational requirements as a function of dimensionality and number of data samples.

In addition to high dimensionality and large number of samples, another defining characteristic of modern data is their streaming nature in many applications; examples of such applications include the internet-of-things, high-frequency trading, meteorology, video surveillance, autonomous vehicles, social media analytics, etc. Several stochastic methods have been developed in the literature to solve the PCA problem in streaming settings~\citep{krasulina1969method,oja,sanger1989optimal,Warmuth2007,zhang2016global}. These methods operate under the \emph{implicit} assumption that the data arrival rate is slow enough so that each sample can be processed before the arrival of the next one. But this may not be true for many modern applications involving high-rate streaming data. To overcome this obstacle corresponding to high-rate streaming data, this paper proposes and analyzes distributed and distributed, mini-batch variants of the classical Krasulina's method~\citep{krasulina1969method}. Before providing details of the proposed methods and their relationship to prior work, we provide a brief overview of the streaming PCA problem.

\subsection{Principal Component Analysis (PCA) from Streaming Data}\label{subsec:ReviewPCA}
For data lying in $\mathbb{R}^d$, PCA learns a $k$-dimensional subspace with maximum data variance. Let $\bx\in\R^d$ be a random vector that is drawn from some unknown distribution $\mathcal{P}_{x}$ with zero mean and $\bSigma$ covariance matrix. For the constraint set $\bcV:=\{\bV\in\mathbb{R}^{d\times k}:\;\bV^{\tT}\bV=\bI\}$, we can pose PCA as the following constrained optimization problem:
\begin{align}\label{eqn:PCA}
\bQ^* := \argmax_{\bV \in \bcV} \E_{\cP_{x}}\Big\{\textsf{Tr}(\bV^{\tT}\bx\bx^\tT \bV)\Big\},
\end{align}
where $\textsf{Tr}(.)$ denotes the trace operator. The solution for the \emph{statistical risk maximization} problem~\eqref{eqn:PCA} is the matrix $\bQ^*$ with top $k$ eigenvectors of $\bSigma$. In practice, however, \eqref{eqn:PCA} cannot be solved in its current form since $\mathcal{P}_x$ is unknown. But if we have $T$ data samples, $\{\bx_t\}_{t=1}^T$, drawn independently from $\cP_x$, then we can accumulate these data samples to calculate the sample covariance matrix as:
\begin{align}
    \bar{\bA}_T :=\frac{1}{T}\sum_{t=1}^{T}{\bA_t},
\end{align}
where $\bA_t:=\bx_t \bx_{t}^{\tT}$. Instead of solving~\eqref{eqn:PCA}, we can now solve an \emph{empirical risk maximization} problem
\begin{align}\label{eqn:ERM}
\bQ := \argmax_{\bV \in \bcV} \textsf{Tr}(\bV^{\tT} \bar{\bA}_T \bV) = \argmax_{\bV \in \bcV}\frac{1}{T}\sum_{t=1}^{T}\textsf{Tr}(\bV^{\tT}\bA_t \bV).
\end{align}
In principle, we can solve \eqref{eqn:ERM} by computing the \emph{singular value decomposition} (SVD) of sample covariance $\bar{\bA}_T$. But this is a computationally intensive task that requires $O(d^3)$ multiplications and that has a memory overhead of $O(d^2)$. In contrast, the goal in high-dimensional PCA problems is often to have $O(d^2 k)$ computational complexity and $O(dk)$ memory complexity~\citep{pmlr-v51-li16b}.

More efficient (and hence popular) approaches for PCA use methods such as the power/orthogonal iteration and Lanczos method~\citep[Chapter~8]{golub2012matrix}. Although these methods improve overall computational complexity of PCA to $O(d^2 k)$, they still have memory requirements on the order of $O(d^2)$. In addition, these are \emph{batch} methods that require computing the sample covariance matrix $\bar{\bA}_T$, which results in $O(d^2 T)$ multiplication operations. Further, in streaming settings where the goal is real-time decision making from data, it is infeasible to compute $\bar{\bA}_T$. Because of these reasons, stochastic approximation methods such as Krasulina's method~\citep{krasulina1969method} and Oja's rule~\citep{oja} are often favored for the PCA problem. Both these are simple and extremely efficient algorithms, achieving $O(d)$ computational and memory complexity per iteration, for computing the principal eigenvector (i.e., $k=1$) of a covariance matrix in streaming settings. Recent years in particular have seen an increased popularity of these algorithms and we will discuss these recent advances in Section~\ref{subsec:RelatedWork}.

Both Oja's rule and Krasulina's method share many similarities. In this paper, we focus on Krasulina's method with the understanding that our findings can be mapped to Oja's rule through some tedious but straightforward calculations. Using $t$ for algorithmic iteration, Krasulina's method estimates the top eigenvector by processing one data sample in each iteration as follows:\footnote{In contrast, the iterate of Oja's rule is given by $\bv_t = \bv_{t-1} + \gamma_t \left(\bx_t \bx_t^\tT \bv_{t-1} - \bv_{t-1}^{\tT}\bx_t \bx_t^\tT \bv_{t-1} \bv_{t-1}\right).$}
\begin{align}\label{eqn:Krasulina}
\bv_t = \bv_{t-1} + \gamma_t \Bigg(\bx_t\bx_t^\tT \bv_{t-1} - \frac{\bv_{t-1}^{\tT}\bx_t\bx_t^\tT \bv_{t-1} \bv_{t-1}}{\|\bv_{t-1}\|_2^2}\Bigg),
\end{align}
where $\gamma_t$ denotes the step size. Going forward, we will be using $\bA_t$ in place of $\bx_t \bx_t^\tT$ in expressions such as~\eqref{eqn:Krasulina} for notational compactness. In practice, however, one should neither explicitly store $\bA_t$ nor explicitly use it for calculation purposes.

Note that one can interpret Krasulina's method as a solution to an optimization problem. Using Courant--Fischer Minimax Theorem~\cite[Theorem~8.1.2]{golub2012matrix}, the top eigenvector computation (i.e., \emph{1-PCA}, which is the $k=1$ version of~\eqref{eqn:PCA}) can be posed as the following optimization problem:
\begin{align}\label{eqn:RayleighQuotient}
\bq_1 := \arg\min_{\bv\in\R^d}f(\bv)=\arg\min_{\bv\in\R^d}\frac{-\bv^{\tT} \bA_t \bv}{\|\bv\|_2^2}.
\end{align}
In addition, the gradient of the function $f(\bv)$ defined in~\eqref{eqn:RayleighQuotient} is:
\begin{align}\label{eqn:Gradient}
\nabla f(\bv)= \frac{1}{\|\bv\|_2^2}\Bigg(- \bA_t \bv + \frac{(\bv^{\tT}\bA_t \bv) \bv}{\|\bv\|_2^2}\Bigg).
\end{align}
Looking at~\eqref{eqn:Krasulina}--\eqref{eqn:Gradient}, we see that~\eqref{eqn:Krasulina} is very similar to applying \emph{stochastic gradient descent} (SGD) to the nonconvex problem~\eqref{eqn:RayleighQuotient}, with the only difference being the scaling factor of $1/\|\bv\|_2^2$. Nonetheless, since \eqref{eqn:RayleighQuotient} is a nonconvex problem and we are interested in global convergence behavior of Krasulina's method, existing tools for analysis of the standard SGD problem~\citep{Bottou.ConfCOMPSTAT10,recht:NIPS11,dekel:JMLR12,reddi2016bfast,reddi2016fast} do not lend themselves to the fastest convergence rates for Krasulina's method. Despite its nonconvexity, however, \eqref{eqn:RayleighQuotient} has a somewhat benign optimization landscape and a whole host of algorithmic techniques and analytical tools have been developed for such structured nonconvex problems in recent years that guarantee fast convergence to a global solution. In this paper, we leverage some of these recent developments to guarantee near-optimal global convergence of two variants of Krasulina's method in the case of high-rate streaming data.

Before proceeding further, it is worth noting that while Krasulina's method primarily focuses on the 1-PCA problem, it \emph{can} be used to solve the $k$-PCA problem. But such an \emph{indirect} approach, which involves repeated use of the Krasulina's method $k$ times, can be inefficient in terms of sample complexity~\cite[Section~1]{allen2016first}. We leave investigation of a near-optimal direct method for the $k$-PCA problem involving high-rate streaming data for future work.

\subsection{Our Contributions}\label{subsec:Contributions}
In this paper, we propose and analyze two distributed variants of Krasulina's method for estimating the top eigenvector of a covariance matrix from fast streaming, independent and identically distributed (i.i.d.) data samples. Our theoretical analysis, as well as numerical experiments on synthetic and real data, establish near-optimality of the proposed algorithms. In particular, our analysis shows that the proposed algorithms can achieve the optimal convergence rate of $O(1/M)$ for 1-PCA after processing a total of $O(M)$ data samples (see \cite[Theorem~1.1]{jain2016streaming} and \cite[Theorem~6]{allen2016first}). In terms of details, following are our key contributions:

\begin{enumerate}
\item Our first contribution corresponds to the scenario in which there is a mismatch of \mbox{$N \in \Z_+ > 1$} between the data streaming rate and the processing capability of a single processor, i.e., one iteration of Krasulina's method on one processor takes as long as $N$ data arrival epochs. Our solution to this problem, which avoids discarding of samples, involves splitting the data stream into $N$ parallel streams that are then input to $N$ interconnected processors. Note that this splitting effectively reduces the streaming rate at each processor by a factor of $N$. We then propose and analyze a distributed variant of Krasulina's method---termed \DK---that solves the 1-PCA problem for this distributed setup consisting of $N$ processing nodes. Our analysis shows that \DK~can result in an improved convergence rate of $O(1/Nt)$ after $t$ iterations (Theorem~\ref{thm:FinalResult}), as opposed to the $O(1/t)$ rate for the classical Krasulina's method at any one of the nodes seen in isolation. Establishing this result involves a novel analysis of Krasulina's method that brings out the dependence of its convergence rate on the variance of the sample covariance matrix; this analysis coupled with a variance reduction argument leads to the convergence rate of $O(1/Nt)$ for \DK~under appropriate conditions.

\item Mini-batching of data samples has long been promoted as a strategy in stochastic methods to reduce the wall-clock time. Too large of a mini-batch, however, can have an adverse effect on the algorithmic performance; see, e.g., \cite[Sec.~VIII]{shamir2014distributed}. One of the challenges in mini-batched stochastic methods, therefore, is characterizing the mini-batch size that leads to near-optimal convergence rates in terms of the number of processed samples. In~\citep{agarwal2011distributed,cotter2011better,dekel:JMLR12,shamir2014distributed,ruder2016overview,golmant2018computational,goyal2017accurate}, for example, the authors have focused on this challenge for the case of mini-batch SGD for convex and nonconvex problems. In the case of nonconvex problems, however, the guarantees only hold for convergence to first-order stationary points. In contrast, our second contribution is providing a comprehensive understanding of the global convergence behavior of mini-batch Krasulina's method. In fact, our analysis of \DK~is equivalent to that of a mini-batch (centralized) Krasulina's method that uses a mini-batch of $N$ samples in each iteration. This analysis, therefore, already guarantees near-optimal convergence rate with arbitrarily high probability, as opposed to $3/4$ probability for~\citep{yang2018history}, for an appropriately mini-batched Krasulina's method in a centralized setting. In addition, in the case of high-rate streaming data that requires splitting the data stream into $N$ parallel ones, we characterize the \emph{global} convergence behavior of a mini-batch generalization of \DK---termed \DMK---in terms of the mini-batch size. This involves specifying the conditions under which mini-batches of size $B/N$ per node can lead to near-optimal convergence rate of $O(1/Bt)$ after $t$ iterations of \DMK~(Theorem~\ref{thm:DistributedMiniBatch}). An implication of this analysis is that for a fixed (network-wide) sample budget of $T$ samples, \DMK~can achieve $O(1/T)$ rate after $t := T/B$ iterations provided the (network-wide) mini-batch size $B$ satisfies $B=O(T^{1-\frac{2}{c_0}})$ for some constant $c_0>2$ (Corollary~\ref{cor:BatchNoLatency}).

\item Our next contribution is an extended analysis of \DMK~that concerns the scenario where (computational and/or communication) resource constraints translate into individual nodes still receiving more data samples than they can process in one iteration of \DMK. This resource-constrained setting necessitates \DMK~dropping $\mu \in \Z_+$ samples across the network in each iteration. Our analysis in this setting shows that such loss of samples need not result in sub-optimal performance. In particular, \DMK~can still achieve near-optimal convergence rate as a function of the number of samples arriving in the network---for both infinite-sample and finite-sample regimes---as long as $\mu= O(B)$ (Corollary~\ref{cor:latency}).
	
\item We provide numerical results involving both synthetic and real-world data to establish the usefulness of the proposed algorithms, validate our theoretical analysis, and understand the impact of the number of dropped samples per iteration of \DMK~on the convergence rate. These results in particular corroborate our findings that increasing the mini-batch size improves the performance of \DMK~up to a certain point, after which the convergence rate starts to decrease.
\end{enumerate}

\subsection{Related Work}\label{subsec:RelatedWork}
Solving the PCA problem efficiently in a number of settings has been an active area of research for decades. \citep{krasulina1969method,oja} are among the earliest and most popular methods to solve PCA in streaming data settings. Several variants of these methods have been proposed over the years, including~\citep{BinYang1995,Chatterjee2005,Doukopoulos2008}. Like earlier developments in stochastic approximation methods~\citep{robbins:AMS51}, such variants were typically shown to converge asymptotically. Convergence rate analyses for stochastic optimization in finite-sample settings~\citep{shapiro2000rate,linderoth2006empirical} paved the way for non-asymptotic convergence analysis of different variants of the stochastic PCA problem, which is fundamentally a nonconvex optimization problem. Because of the vastness of literature on (stochastic) PCA, this work is tangentially or directly related to a number of such prior works. We review some of these works in the following under the umbrellas of different problem setups, with the understanding that the resulting lists of works are necessarily incomplete. Much of our discussion in the following focuses on solving the PCA problem in (fast) streaming and distributed data settings, which is the main theme in this paper.

\textbf{Sketching for PCA.} Sketching methods have long been studied in the literature for solving problems involving matrix computations; see~\citep{woodruff2014sketching} for a review of such methods. The main idea behind these methods is to compress data using either randomized or deterministic sketches and then perform computations on the resulting low-dimensional data. While sketching has been used as a tool to solve the PCA problem in an efficient manner (see, e.g., \citep{Warmuth2007,halko2011finding,liberty2013simple,leng2015online,karnin2015online}), the resulting methods cannot be used to \emph{exactly} solve \eqref{eqn:PCA} in the fast streaming settings of this paper.

\textbf{Online PCA.} The PCA problem has also been extensively studied in online settings. While such settings also involve streaming data, the main goal in online PCA is to minimize the cumulative subspace estimation error over the entire time horizon of the algorithm. The online PCA framework, therefore, is especially useful in situations where either the underlying subspace changes over time or there is some adversarial noise in the sampling process. Some of the recent works in this direction include~\citep{garber2015online,allen2017follow,garber2018regret,marinov2018streaming,kotlowski2019bandit}.

\textbf{Stochastic convex optimization for PCA.} One approach towards solving~\eqref{eqn:ERM} in streaming settings is to relax the PCA problem to a convex optimization problem and then use SGD to solve the resulting stochastic convex optimization problem~\citep{arora2013stochastic,Garber,nie2016online}. The benefit of this approach is that now one can rely on rich literature for solving stochastic convex problems using SGD. But the tradeoff is that one now needs to store an iterate of dimension $\R^{d\times d}$, as opposed to an iterate of dimension $\R^{d\times k}$ when we solve the PCA problem in its original nonconvex form. Due to these high memory requirements of $O(d^2)$, we limit ourselves to solving PCA in the nonconvex form.

\textbf{Streaming PCA and nonconvex optimization.} The PCA problem in the presence of streaming data can also be tackled as an explicit constrained nonconvex optimization program~\citep{zhang2016global,DeSa2014}. In~\citep{zhang2016global}, for instance, the problem is solved as an optimization program over the Grassmannian manifold. The resulting analysis, however, relies on the availability of a good initial guess. In contrast, the authors in \citep{DeSa2014} analyze the use of the SGD for solving certain nonconvex problems that include PCA. The resulting approach, however, requires the step size to be a significantly small constant for eventual convergence (e.g., $10^{-12}$ for the Netflix Prize dataset); this translates into slower convergence in practice.

\textbf{Classical stochastic approximation methods for PCA.} %
Recent years have seen an increased interest in understanding the global convergence behavior of classical stochastic approximation methods such as Krasulina's method~\citep{krasulina1969method} and Oja's rule \citep{oja} for the PCA problem in non-asymptotic settings~\citep{allen2016first,Chatterjee2005,Hardt2015,Shamir,Shamir2015,jain2016streaming,pmlr-v51-li16b,tang2019exponentially,henriksen2019adaoja,amid2019implicit}. Some of these works, such as~\citep{Shamir} and~\citep{Shamir2015}, use variance reduction techniques to speed-up the algorithmic convergence. Such works, however, require multiple passes over the data, which makes them ill-suited for fast streaming settings. The analysis in~\citep{Shamir} and~\citep{Shamir2015} also requires an initialization close to the true subspace, which is somewhat unlikely in practice. Among other works, the authors in~\citep{allen2016first} provide eigengap-free convergence guarantees for Oja's rule. Since the results in this work do not take into account the variance of data samples, they do not generalize to mini-batch/distributed streaming settings. The authors in~\citep{jain2016streaming} do provide variance-dependent guarantees for Oja's rule, which makes this work the most relevant to ours. In particular, the authors in~\citep{yang2018history} have extended the initial analysis in~\citep{jain2016streaming} to mini-batch settings. However, the results derived in~\citep{yang2018history} only hold with probability $3/4$, which is in sharp contrast to the results of this paper. Note that while one could increase the probability of success in~\citep{yang2018history} through multiple algorithmic runs, this is not a feasible strategy in streaming settings.

\textbf{Distributed PCA and streaming data.} Several recent works such as \citep{balcan2016communication,boutsidis2016optimal,garber2017communication,de2017accelerated} have focused on the PCA problem in distributed settings. Among these works, the main focus in \citep{balcan2016communication,boutsidis2016optimal,garber2017communication} is on improving the communications efficiency. This is accomplished in \citep{balcan2016communication,boutsidis2016optimal} by sketching the local iterates and communicating the resulting compressed iterates to a central server in each iteration. In contrast, \cite{garber2017communication} provides a batch solution in which every node in the network first computes the top eigenvector of its local (batch) covariance matrix and then, as a last step of the algorithm, all the local eigenvector estimates are summed up at a central server to provide an eigenvector estimate for the global covariance matrix. In contrast to these works, our focus in this paper is on establishing that distributed (mini-batch) variants of stochastic approximation methods such as Oja's rule and Krasulina's method can lead to improved convergence rates, as a function of the number of samples, for the PCA problem in fast streaming settings. In this regard, our work is more closely related to~\citep{de2017accelerated}, where the authors use the momentum method to accelerate convergence of power method and further extend their work to stochastic settings. However, the approach of \citep{de2017accelerated} relies on a variance reduction technique that requires a pass over the complete dataset every once in a while; this is impractical in streaming settings. In addition, theoretical guarantees in~\citep{de2017accelerated} are based on the assumption of a ``good'' initialization; further, an implicit assumption in~\citep{de2017accelerated} is that inter-node communications is fast enough that there are no communication delays.

\textbf{Connections to stochastic nonconvex optimization.} Recent years have also seen an increased focus on understanding (variants of) SGD for general (typically unconstrained) stochastic nonconvex optimization problems. Among such works, some have focused on classical SGD~\citep{ge2015escaping,hazan2016graduated,hazan2017near}, some have studied variance-reduction variants of SGD~\citep{reddi2016stochastic,reddi2016bfast}, and some have investigated accelerated variants of stochastic nonconvex optimization~\citep{allen2018natasha,allen2018make}. In particular, works such as \citep{reddi2016stochastic,allen2016variance} are directly relevant to this paper since these works also use mini-batches to reduce sample variance and improve on SGD convergence rates. While (implicit, through the distributed framework, and explicit) mini-batching is one of the key ingredients of our work also, this paper differs from such related works because of its ability to prove convergence to a global optimum of the 1-PCA problem. In contrast, aforementioned works only provide guarantees for convergence to first-order stationary points of (typically unconstrained) stochastic nonconvex optimization problems.

\subsection{Notational Convention and Paper Organization}\label{subsec:paper_Organization}
We use lower-case ($a$), bold-faced lower-case ($\ba$), and bold-faced upper-case ($\bA$) letters to represent scalars, vectors, and matrices, respectively. Given a scalar $a$ and a vector $\ba$, $\lceil a \rceil$ denotes the smallest integer greater than or equal to $a$, while $\|\ba\|_2$ denotes the $\ell_2$-norm of $\ba$. Given a matrix $\bA$, $\|\bA\|_2$ denotes its spectral norm and $\|\bA\|_F$ denotes its Frobenius norm. In addition, assuming $\bA \in \R^{d \times d}$ to be a positive semi-definite matrix, $\lambda_i(\bA)$ denotes its $i$-th largest eigenvalue, i.e., $\|\bA\|_2 := \lambda_1(\bA) \geq \lambda_2(\bA) \geq \dots \geq \lambda_d(\bA) \geq 0$. Whenever obvious from the context, we drop $\bA$ from $\lambda_i(\bA)$ for notational compactness. Finally, $\E\{\cdot\}$ denotes the expectation operator, where the underlying probability space $(\Omega,\cF,\bbP)$ is either implicit from the context or is explicitly pointed out in the body.

The rest of this paper is organized as follows. We first provide a formal description of the problem and the system model in Section~\ref{sec:ProblemFormulation}. The two proposed variants of Krasulina's method that can be used to solve the 1-PCA problem in fast streaming settings are then presented in Section~\ref{sec:DistributedPCA}. In Section~\ref{sec:ConvergenceAnalysis}, we provide theoretical guarantees for the proposed algorithms, while proofs / outlines of the proofs of the main theoretical results are provided in Section~\ref{sec:Proofs}. Finally, numerical results using both synthetic and real-world data are presented in Section~\ref{sec:NumericalResults}, while appendices are used for detailed proofs of some of the theoretical results.

\section{Problem Formulation and System Model}\label{sec:ProblemFormulation}

Our goal is to use some variants of Krasulina's method (cf.~\eqref{eqn:Krasulina}) in order to obtain an estimate of the top eigenvector of a covariance matrix from independent and identically distributed (i.i.d.) data samples that are fast streaming into a system. The algorithms proposed in this regard and their convergence analysis rely on the following sets of assumptions concerning the data and the system.

\subsection{Data Model}\label{subsec:DataModel}
We consider a streaming data setting where a new data sample $\bx_{t'} \in \R^d$ independently drawn from an unknown distribution $\cP_{x}$ arrives at a system at each sampling time instance $t'$. We assume a uniform data arrival rate of $R_s$ samples per second and, without loss of generality, take the data arrival index $t' \geq 1$ to be an integer. We also make the following assumptions concerning our data, which aid in our convergence analysis.
\begin{enumerate}[label=$\mathrm{\bf{[A\arabic*]}}$]
\item \label{Assum:A1} \emph{(Zero-mean, norm-bounded samples)} Without loss of generality, the data samples have zero mean, i.e., $\mathbb{E}_{\cP_{x}}\{\bx_{t'}\}=0$. In addition, the data samples are almost surely bounded in norm, i.e., $\|\bx_{t'}\|_2\leq r$, where we let the bound $r\geq 1$ without loss of generality.
\item \label{Assum:A2} \emph{(Spectral gap of the covariance matrix)} The largest eigenvalue of $\bSigma:=\E_{\cP_{x}}\{\bx_{t'} \bx_{t'}^\tT\}$ is strictly greater than the second largest eigenvalue, i.e., $\lambda_1(\bSigma) > \lambda_2(\bSigma) \geq \lambda_3(\bSigma) \geq \dots \geq \lambda_d(\bSigma) \geq 0$.
\end{enumerate}
\noindent Note that both Assumptions~\ref{Assum:A1} and~\ref{Assum:A2} are standard in the literature for convergence analysis of Krasulina's method and Oja's rule (cf.~\cite{Balsubramani2015,oja,allen2016first,jain2016streaming}).

We also associate with each data sample $\bx_{t'}$ a rank-one random matrix $\bA_{t'}:=\bx_{t'} \bx_{t'}^{\tT}$, which is a trivial unbiased estimate of the population covariance matrix $\bSigma$. We then define the variance of this unbiased estimate as follows.
\begin{definition}[Variance of sample covariance matrix]\label{def:SampleVariance}
We define the variance of the sample covariance matrix $\bA_{t'}:=\bx_{t'} \bx_{t'}^{\tT}$ as follows:
	   $$\sigma^2:=\mathbb{E}_{\cP_{x}}\left\{\big\|\bA_{t'} - \bSigma\big\|_F^2\right\}.$$
\end{definition}
Note that all moments of the probability distribution $\mathcal{P}_x$ exist by virtue of the norm boundedness of $\bx_{t'}$ (cf.~Assumption~\ref{Assum:A1}). The variance $\sigma^2$ of the sample covariance matrix $\bA_{t'}$ as defined above, therefore, exists and is finite.

The two algorithms proposed in this paper, namely, \DK~and \DMK, are initialized with a random vector $\bv_0\in\R^d$ that is randomly generated over the unit sphere in $\R^d$ with respect to the uniform (Haar) measure. All analysis in this paper is with respect to the natural probability space $(\Omega,\cF,\bbP)$ given by the stochastic process $(\bv_0, \bx_1, \bx_2, \dots)$ and filtered versions of this probability space.

\subsection{System Model}\label{subsec:HighrateStreaming}
Let $R_p$ denote the number of data samples that a single processing node in the system can process in one second using an iteration of the form~\eqref{eqn:Krasulina}. The focus of this paper is on the high-rate streaming setting, which corresponds to the setup in which the data arrival rate $R_s$ is strictly greater than the data processing rate $R_p$. A naive approach to deal with this computation--streaming mismatch is to discard (per second) a fraction $\alpha := R_s/R_p$ of samples in the system. Such an approach, however, leads to an equivalent reduction in the convergence rate by $\alpha$. We pursue an alternative to this approach in the paper that involves the simultaneous use of $N\geq \lceil\alpha\rceil$ interconnected processors, each individually capable of processing $R_p$ samples per second, within the system. In particular, we advocate the use of such a network of $N$ processors in the following two manners to achieve near-optimal convergence rates (as a function of the number of samples arriving at the system) for estimates of the top eigenvector of $\bSigma$ in high-rate streaming settings.

\subsubsection{Distributed Processing Over a Network of Processors}\label{subsec:Distributed}

\begin{figure}
	\centering
	\subfigure[]
        {\includegraphics[width=0.4\columnwidth]{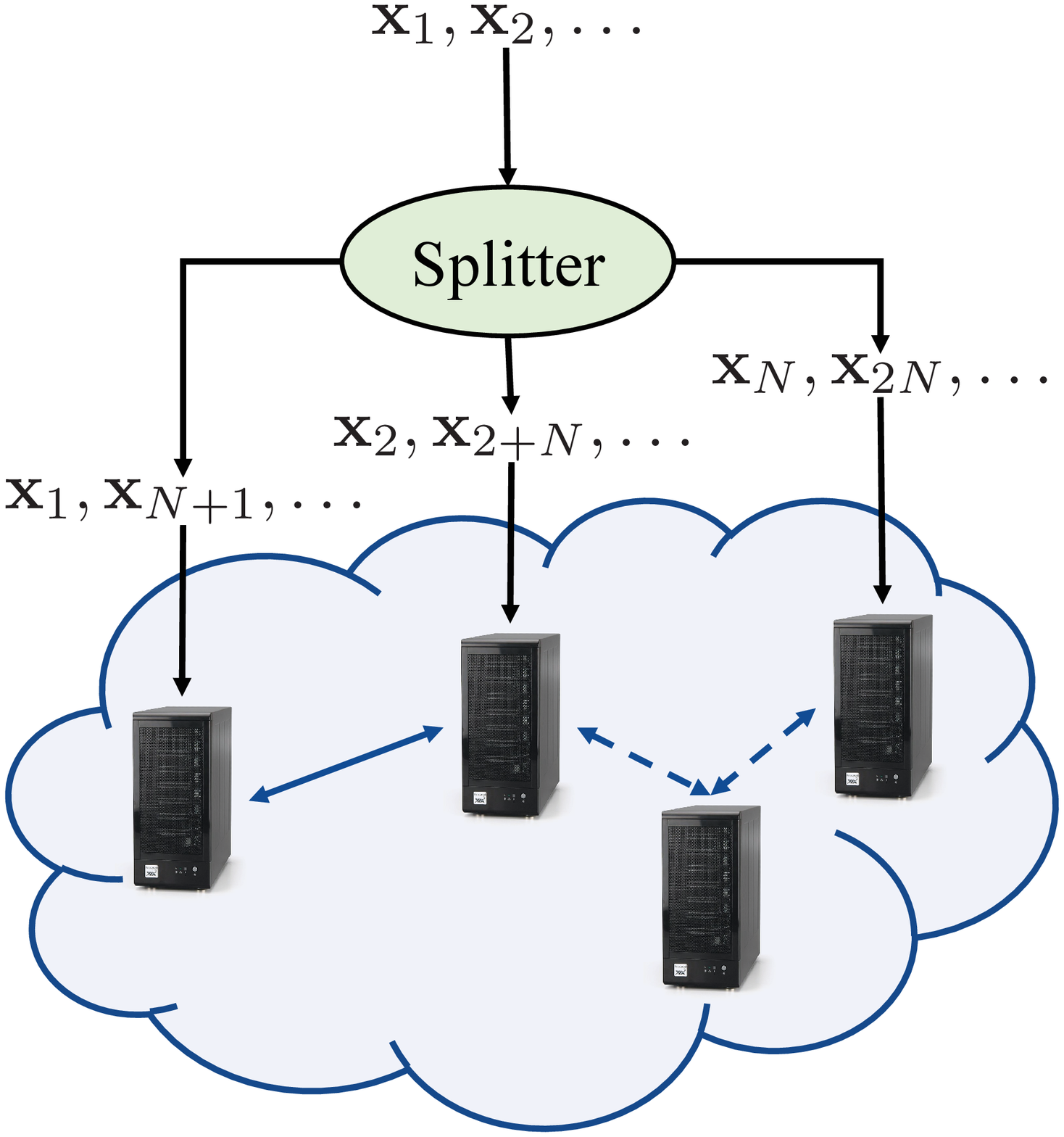}}
	\qquad
	\subfigure[]
        {\includegraphics[width=0.4\columnwidth]{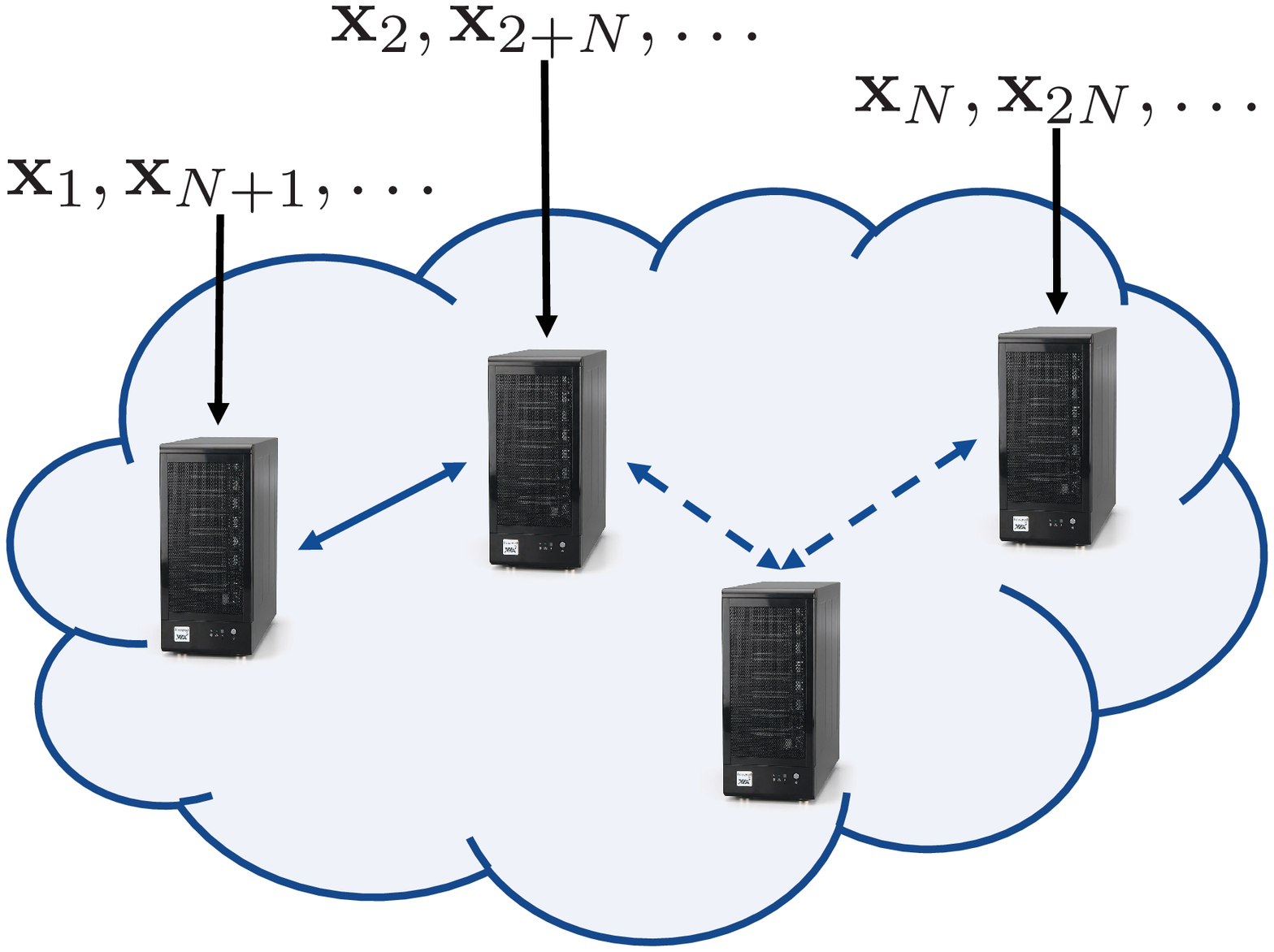}}
    \caption{The distributed PCA problem, which involves distributed processing of data over a network of $N$ processors, can
        arise in two contexts. (a) A data splitter can split a data stream into $N$ parallel streams, one for each processor in the network. In relation to the original data stream, this effectively reduces the data arrival rate for each parallel stream by a factor of $N$. (b) Data can be inherently distributed, as in the Internet-of-Things systems, and can arrive at $N$ different processing nodes as $N$ separate data streams.%
    }
	\label{fig:DistributedNetwork}
\end{figure}

We assume the fast data stream terminates into a data splitter, which splits the original stream with data rate $R_s$ samples per second into $N$ parallel streams, each with data rate $R_s/N$ samples per second, that are then synchronously input to the interconnected network of $N$ processors; see Figure~\ref{fig:DistributedNetwork}(a) for a schematic rendering of such splitting. In order to simplify notation in this setting, we reindex the data samples associated with the $i$-th processor / data stream in the following as $\{\bx_{i,t}\}_{t \in \Z_+}$, where the reindexing map $(i,t) \mapsto t'$ is simply defined as $t' = i + (t-1)N$.

We also assume the network of processors implements some message passing protocol that allows it to compute sums of locally stored vectors, i.e., $\sum_{i=1}^N \ba_i$ for the set of local vectors $\{\ba_i\}_{i=1}^N$, within the network. This could, for instance, be accomplished using either \textsf{Reduce} or \textsf{AllReduce} primitives within most message passing implementations. We let $R_c$ denote the number of these primitive (sum) operations that the message passing protocol can carry out per second in the network of $N$ processors. Note that this parameter depends upon the message passing implementation, number of nodes in the network, topology of the network, and inter-node communications bandwidth, all of which are being abstracted here through $R_c$.

Data splitting among this network of $N$ processors effectively slows down the data streaming rate at each processing node by a factor of $N$. It is under this system model that we present a distributed variant of Krasulina's method, termed \DK, in Section~\ref{ssec:DK} that operates under the assumption of $N \geq \tfrac{R_s}{R_p} + \tfrac{R_s}{R_c}$. The main analytical challenge for \DK~is understanding the scenarios under which this distributed processing over a network of processors still yields near-optimal performance; we address this challenge in Section~\ref{subsec:MainResult}.

\begin{remark}
It is straightforward to see that our developments in this paper are also applicable to the setting in which data naturally arrives in a distributed manner at $N$ different nodes, as in Figure~\ref{fig:DistributedNetwork}(b). In addition, our analysis of \DK~is equivalent to that of a mini-batch Krasulina's method running on a powerful-enough single processor that uses a mini-batch of $N$ samples in each iteration.
\end{remark}

\subsubsection{Distributed Processing Coupled with Mini-batching}\label{subsec:Minibatching}
Mini-batching in (centralized) stochastic methods, as discussed in Section~\ref{subsec:Contributions}, helps reduce the wall-clock time by reducing the number of read operations per iteration. Mini-batching of samples in distributed settings has the added advantage of reduction in the average number of primitive (sum) operations per processed sample, which further reduces the wall-clock time. It is in this vein that we put forth a mini-batched variant of \DK, which is termed \DMK, in Section~\ref{subsec:MinibatchKrasulina}.

Similar to the case of \DK~(cf.~Figure~\ref{fig:DistributedNetwork}), there are several equivalent system models that can benefit from the \DMK~framework. In keeping with our theme of fast streaming data, as well as for the sake of concreteness, we assume the system buffers (i.e., mini-batches) $B := b N \geq \lceil R_s/R_p \rceil$ samples of the incoming data stream every $B/R_s$ seconds for some parameter $b \in \Z_+$. This \emph{network-wide} mini-batch of $B$ samples is then split into $N$ parallel (local) mini-batches, each comprising $b = B/N$ samples, which are then synchronously input to the interconnected network of $N$ processors at a rate of $R_s/N$ samples per second and collaboratively processed by \DMK. In each iteration $t$ of \DMK, therefore, the network processes a total of $B \geq N$ samples, as opposed to $N$ samples for \DK. In order to simplify notation in this mini-batched distributed setting, we reindex the $b$ data samples in the mini-batch associated with the $i$-th processor in iteration $t$ of \DMK~as $\{\bx_{i,j,t}\}_{j=1,t\in\Z_+}^{j=b}$, where the reindexing map $(i,j,t) \mapsto t'$ is defined as $t'=j+(i-1)b+(t-1)B$.

The \DMK~framework can process all data samples arriving at the system as long as $N \geq \tfrac{R_s}{R_p} + \tfrac{R_s}{b R_c}$. However, when this condition is violated due to faster streaming rate $R_s$, slower processing rate $R_p$, slower summation rate $R_c$, or any combination thereof, it becomes necessary for \DMK~to discard $\mu := \left(\tfrac{b R_s}{R_p} + \tfrac{R_s}{R_c}\right) - B$ samples at the splitter per iteration. The main analytical challenges for \DMK~are, therefore, twofold: first, assuming $\mu = 0$, characterize the mini-batch size $B$ that leads to near-optimal convergence rates for \DMK~in terms of the total number of samples arriving at the system; second, when discarding of samples becomes necessary, characterize the interplay between $B$ and $\mu$ that allows \DMK~to still achieve (order-wise) near-optimal convergence rates. We address both these challenges in Section~\ref{subsec:ExtensionMiniBatch}. 
\section{Proposed Distributed Stochastic Algorithms}\label{sec:DistributedPCA}
We now formally describe the two stochastic algorithms, termed \DK~and \DMK, that can be used to solve the 1-PCA problem from high-rate streaming data under the two setups described in Section~\ref{subsec:Distributed} and Section~\ref{subsec:Minibatching}, respectively.

\subsection{Distributed Krasulina's Method (\DK) for High-rate Streaming Data}\label{ssec:DK}
Recall from the discussion in Section~\ref{subsec:Distributed} that each node $i$ in the network receives data sample $\bx_{i,t}$ in iteration $t$ of the distributed implementation, which comprises $N$ processing nodes. Unlike the centralized Krasulina's method (cf.~\eqref{eqn:Krasulina}), therefore, any distributed variant of Krasulina's method needs to process $N$ samples in every iteration $t$. Using $\bA_{i,t}$ as a shorthand for $\bx_{i,t}\bx_{i,t}^\tT$, one natural extension of \eqref{eqn:Krasulina} that processes $N$ samples in each iteration is as follows:
\begin{align}\label{eqn:MiniBatchOjaRule}
    \bv_t = \bv_{t-1} + \gamma_t {\Bigg(\frac{1}{N}\sum_{i=1}^{N}\bA_{i,t} \bv_{t-1}-\frac{1}{\|\bv_{t-1}\|_2^2}\Big(\bv_{t-1}^{\tT}\frac{1}{N}\sum_{i=1}^{N}\bA_{i,t} \bv_{t-1} \bv_{t-1}\Big)\Bigg)} = \bv_{t-1} + \gamma_t \bxi_t.
\end{align}
One natural question here is whether \eqref{eqn:MiniBatchOjaRule} can be computed within our distributed framework. The answer to this is in the affirmative under the assumption $N \geq \tfrac{R_s}{R_p} + \tfrac{R_s}{R_c}$, with the implementation (termed \DK) formally described in Algorithm~\ref{algo:Distributed_Krasulina}.

\begin{algorithm}[h]
	\textbf{Input:} Incoming data streams at $N$ processors, expressed as $\left\{\bx_{i,t} \stackrel{\text{i.i.d.}}{\sim} \cP_x\right\}_{i=1,t \in \Z_+}^N$, and a step-size sequence $\left\{\gamma_t \in \R_+\right\}_{t \in \Z_+}$\\
	\textbf{Initialize:} All processors initialize with $\bv_0\in\R^d$ randomly generated over the unit sphere%
    \vspace{-\baselineskip}%
	\algsetup{indent=1em}
	\begin{algorithmic}[1]
		\FOR{$t=1,2,\dots$,}
		\STATE \textbf{(In Parallel)} Processor $i$ receives data sample $\bx_{i,t}$ and updates $\bxi_{i,t}$ locally as follows:
		  $$\forall i \in \{1,\dots,N\}, \quad \bxi_{i,t} \gets \bx_{i,t}\bx_{i,t}^\tT \bv_{t-1}-\frac{\bv_{t-1}^{\tT}\bx_{i,t}\bx_{i,t}^\tT \bv_{t-1} \bv_{t-1}}{\|\bv_{t-1}\|_2^2}$$
		\STATE Compute $\bxi_t\leftarrow \tfrac{1}{N}\sum_{i=1}^{N}\bxi_{i,t}$ in the network using a distributed vector-sum subroutine
		\STATE Update eigenvector estimate in the network as follows: $\bv_t\leftarrow \bv_{t-1}+\gamma_t \bxi_t$
		\ENDFOR
	\end{algorithmic}
	{\bf Return:} An estimate $\bv_t$ of the eigenvector $\bq^*$ of $\bSigma$ associated with $\lambda_1(\bSigma)$
	\caption{Distributed Krasulina's Method (\DK)}
	\label{algo:Distributed_Krasulina}
\end{algorithm}

Notice that unlike classical Krasulina's method, which processes a total of $t$ samples after $t$ iterations, \DK~processes a total of $Nt$ samples after $t$ iterations in order to provide an estimate $\bv_t$ of the top eigenvector $\bq^*$ of $\bSigma$. Another natural question, therefore, is whether the estimate $\bv_t$ returned by \DK~can converge to $\bq^*$ at the near-optimal rate of $O\left(1/\text{\# of processed samples}\right)$. Convergence analysis of \DK~in Section~\ref{sec:ConvergenceAnalysis} establishes that the answer to this is also in the affirmative under appropriate conditions that are specified in Theorem~\ref{thm:FinalResult}. An important interpretation of this result is that our proposed distributed implementation of Krasulina's method can lead to linear speed-up as a function of the number of processing nodes $N$ in the network.

\subsection{Mini-batched \DK~(\DMK) for High-rate Streaming Data}\label{subsec:MinibatchKrasulina}
The distributed, mini-batched setup described in Section~\ref{subsec:Minibatching} entails each node $i$ receiving a mini-batch of $b = B/N$ data samples, $\{\bx_{i,j,t}\}_{j=1}^b$, in each iteration $t$, for a total of $B = bN$ samples across the network in every iteration. Similar to \eqref{eqn:MiniBatchOjaRule}, these $B$ samples can in principle be processed by the following variant of the original Krasulina's iteration:
\begin{align}\label{eqn:DistributedMiniBatchOjaRule}
    \bv_t = \bv_{t-1} + \gamma_t \underbrace{\Bigg(\frac{1}{B}\sum_{i=1}^{N}\sum_{j=1}^{b}\bA_{i,j,t} \bv_{t-1}-\frac{1}{\|\bv_{t-1}\|_2^2}\Big(\bv_{t-1}^{\tT}\frac{1}{B}\sum_{i=1}^{N}\sum_{j=1}^{b}\bA_{i,j,t} \bv_{t-1} \bv_{t-1}\Big)\Bigg)}_{\bxi_t},
\end{align}
where $\bA_{i,j,t}$ is a shorthand for $\bx_{i,j,t}\bx_{i,j,t}^\tT$. Practical computation of \eqref{eqn:DistributedMiniBatchOjaRule} within our distributed frame\-work, however, requires consideration of two different scenarios.
\begin{itemize}
	\item \emph{Scenario 1:} The mini-batched distributed framework satisfies $N \geq \tfrac{R_s}{R_p} + \tfrac{R_s}{b R_c}$. This enables incorporation of every sample arriving at the system into the eigenvector estimate.
    \item \emph{Scenario 2:} The mini-batched distributed framework leads to the condition $N < \tfrac{R_s}{R_p} + \tfrac{R_s}{b R_c}$. This necessitates discarding of $\mu = \left(\tfrac{b R_s}{R_p} + \tfrac{R_s}{R_c}\right) - B$ samples per iteration in the system. Stated differently, the system receives $B+\mu$ samples per iteration in this scenario, but only $B$ samples per iteration are incorporated into the eigenvector estimate.
\end{itemize}
We now formally describe the algorithm (termed \DMK) that implements \eqref{eqn:DistributedMiniBatchOjaRule} under both these scenarios in Algorithm~\ref{algo:DMB_Krasulina}.

\begin{algorithm}[h]
\textbf{Input:} Incoming streams of mini-batches $\left\{\bx_{i,j,t} \stackrel{\text{i.i.d.}}{\sim} \cP_x\right\}_{i,j=1,t \in \Z_+}^{N,b}$ at $N$ processors, size of the network-wide mini batch $B := bN$, and a step-size sequence $\left\{\gamma_t \in \R_+\right\}_{t \in \Z_+}$\\
	\textbf{Initialize:} All processors initialize with $\bv_0\in\R^d$ randomly generated over the unit sphere%
    \vspace{-\baselineskip}%
	\algsetup{indent=1em}
	\begin{algorithmic}[1]
		\FOR{$t=1,2,\dots$,}
		\STATE \textbf{(In Parallel)} $\forall i \in \{1,\dots,N\}, \quad \bxi_{i,t} \gets 0$%
            \label{algo:DMK.label.1}
		\FOR{$j=1,\dots,b$}
		\STATE \textbf{(In Parallel)} Processor $i$ receives data sample $\bx_{i,j,t}$ and updates $\bxi_{i,t}$ locally as follows:
            $$\forall i \in \{1,\dots,N\}, \quad  \bxi_{i,t} \gets \bxi_{i,t} + \bx_{i,j,t}\bx_{i,j,t}^{\tT} \bv_{t-1}-\frac{\bv_{t-1}^{\tT}\bx_{i,j,t}\bx_{i,j,t}^{\tT}\bv_{t-1} \bv_{t-1}}{\|\bv_{t-1}\|_2^2}$$
		\ENDFOR
        \STATE Compute $\bxi_t\leftarrow \tfrac{1}{B}\sum_{i=1}^{N}\bxi_{i,t}$ in the network using a distributed vector-sum subroutine
		\STATE Update eigenvector estimate in the network as follows: $\bv_t\leftarrow \bv_{t-1}+\gamma_t \bxi_t$%
            \label{algo:DMK.label.2}
        \IF{$N < \tfrac{R_s}{R_p} + \tfrac{R_s}{b R_c}$}
		\STATE The system (e.g., data splitter/buffer) receives $(B + \mu)$ additional samples during execution of Steps~\ref{algo:DMK.label.1}--\ref{algo:DMK.label.2}, out of which $\mu \in \Z_+$ samples are discarded
        \ENDIF
		\ENDFOR
	\end{algorithmic}
	{\bf Return:} An estimate $\bv_t$ of the eigenvector $\bq^*$ of $\bSigma$ associated with $\lambda_1(\bSigma)$
	\caption{Distributed Mini-batch Krasulina's Method (\DMK)}
	\label{algo:DMB_Krasulina}
\end{algorithm}

Speaking strictly in terms of implementation, the mini-batched setup of \DMK~allows one to relax the condition $N \geq \tfrac{R_s}{R_p} + \tfrac{R_s}{R_c}$ associated with \DK~to either $N \geq \tfrac{R_s}{R_p} + \tfrac{R_s}{b R_c}$, which still incorporates all samples into the eigenvector estimate, or $N < \tfrac{R_s}{R_p} + \tfrac{R_s}{b R_c}$, which involves discarding of $\mu > 0$ samples per algorithmic iteration. While this makes \DMK~particularly attractive for systems with slower communication links, the major analytical hurdle here is understanding the interplay between the different problem parameters that still allows \DMK~to achieve near-optimal convergence rates in terms of the number of samples received at the system. We tease out this interplay as part of the convergence analysis of \DMK~in Section~\ref{sec:ConvergenceAnalysis}.

\enlargethispage{0.35\baselineskip}
\section{Convergence Analysis of \DK~and \DMK}\label{sec:ConvergenceAnalysis}

Our convergence analysis of \DK~and \DMK~is based on understanding the rate at which the so-called \emph{potential function} $\Psi_t$ of these methods converges to zero as a function of the number of algorithmic iterations $t$. Formally, this potential function $\Psi_t$ is defined as follows.
\begin{definition}[Potential function]\label{def:error}
Let $\bq^*$ be the eigenvector of $\bSigma$ associated with $\lambda_1(\bSigma)$ and let $\bv_t$ be an estimate of $\bq^*$ returned by an iterative algorithm in iteration $t$. Then the quality of the estimate $\bv_{t}$ can be measured in terms of the potential function $\Psi_t : \bv_{t} \mapsto [0,1]$ that is defined as
	\begin{align}\label{eqn:Error}
	\Psi_t := 1-\frac{(\bv_{t}^{\tT}\bq^*)^2}{\|\bv_{t}\|^2}.
	\end{align}
\end{definition}

Notice that $\Psi_t$ is a measure of estimation error, which approaches $0$ as $\bv_t$ converges to any scalar multiple of $\bq^*$. This measure, which essentially computes sine squared of the angle between $\bq^*$ and $\bv_t$, is frequently used in the literature to evaluate the performance of PCA algorithms. In particular, when one initializes an algorithm with a random vector $\bv_{0}$ uniformly distributed over the unit sphere in $\R^d$ then it can be shown that $\E\{\Psi_{0}\}\leq 1-1/d$ \citep{Balsubramani2015}. While this is a statement in expectation for $t=0$, our analysis relies on establishing such a statement in probability for \emph{any} $t \geq 0$ for both \DK~and \DMK. Specifically, we show in Theorem~\ref{thm:ProbabilityBound} that $\sup_{t \geq 0} \Psi_{t} \leq 1 - O(1/d)$ with high probability as long as $\gamma_t = c/(L+t)$ for any constant $c$ and a large-enough constant $L$.

All probabilistic analysis in the following uses a filtration $(\cF_t)_{t \geq 0}$ of sub $\sigma$-algebras of $\cF$ on the sample space $\Omega$, where the $\sigma$-algebra $\cF_t$ captures the progress of the iterates of the two proposed stochastic algorithms up to iteration $t$. Mathematically, let us define the sample covariance matrix $\bA_t$ as $\bA_t := \frac{1}{N}\sum_{i=1}^{N}\bA_{i,t}$ and $\bA_t := \frac{1}{B}\sum_{i=1}^{N}\sum_{j=1}^{b}\bA_{i,j,t}$ for \DK~and \DMK, respectively. In order to simplify notation and unify some of the analysis of \DK~and \DMK, we will be resorting to the use of random matrices $\bA_t$, as opposed to $\bx_{i,t}$ and $\bx_{i,j,t}$, in the following. We then have the following definition of $\sigma$-algebras in the filtration.
\begin{definition}[$\sigma$-algebra $\cF_{t}$]\label{def:SigmaField}
The $\sigma$-algebra $\cF_{t} \subseteq \cF$ on sample space $\Omega$ for both \DK~and \DMK~is defined as the $\sigma$-algebra generated by the vector-/matrix-valued random variables $(\bv_{0}, \bA_{1},\dots,\bA_{t})$, i.e., $\cF_{t} := \sigma(\bv_{0},\bA_{1},\ldots,\bA_{t})$.
\end{definition}

In addition to the filtration $(\cF_t)_{t \geq 0}$, the forthcoming analysis also uses a sequence of nested sample spaces that is defined as follows.
\begin{definition}[Nested sample spaces]\label{def:Nested}
Let $(t_0,\epsilon_{0}), (t_1, \epsilon_1), (t_2, \epsilon_2), \ldots, (t_J,\epsilon_{J})$ be a sequence of pairs such that $0 = t_0 < t_1<t_2<\ldots<t_J$ and $\epsilon_{0}>\epsilon_1>\epsilon_2>\ldots>\epsilon_{J} > 0$ for any non-negative integer $J$. We then define a sequence $(\Omega_{t}^{'})_{t \in \Z_+}$ of nested sample spaces such that $\Omega \supset \Omega_{1}^{'} \supset \Omega_{2}^{'} \supset \ldots$, each $\Omega_t^{'}$ is $\cF_{t-1}$-measurable, and
\begin{align}\label{eqn:Nested}
	\Omega_{t}^{'} := \left\{\omega\in \Omega: \forall\,0\leq j \leq J, \sup_{t_j\leq l <t}\Psi_{l}(\omega)\leq 1-\epsilon_j\right\}.
\end{align}
\end{definition}
In words, the sample space $\Omega_{t}^{'}$ corresponds to that subset of the original sample space for which the error $\Psi_l$ in all iterations $l \in \{t_j, \dots, t-1\}$ is below $1-\epsilon_j$, where $j \in \{0,\dots,J\}$. In the following, we use the notation $\E_t\{\cdot\}$ and $\bbP_t(\cdot)$ to denote conditional expectation and conditional probability, respectively, with respect to $\Omega_{t}^{'}$.

\enlargethispage{0.3\baselineskip}

An immediate implication of Definition~\ref{def:Nested} is that, for appropriate choices of $\epsilon_j$'s, it allows us to focus on those subsets of the original sample space that ensure convergence of iterates of the proposed algorithms to the top eigenvector $\bq^*$ at the desired rates. The main challenge here is establishing that such subsets have high probability measure, i.e., $\bbP\left(\cap_{t>0}\Omega_t^{'}\right)\geq 1-\delta$ for any $\delta > 0$. We obtain such a result in Theorem~\ref{thm:IntermediateEpochProbability} in the following. We are now ready to state our main results for \DK~and \DMK.

\subsection{Convergence of \DK~(Algorithm~\ref{algo:Distributed_Krasulina})}\label{subsec:MainResult}
The first main result of this paper shows that \DK~results in linear speed-up in convergence rate as a function of the number of processing nodes, i.e., the potential function for \DK~converges to $0$ at a rate of $O(1/Nt)$. Since the system receives a total of $Nt$ samples at the end of $t$ iterations of \DK, this result establishes that \DK~is order-wise near-optimal in terms of sample complexity for the streaming PCA problem. The key to proving this result is characterizing the convergence behavior of \DK~in terms of variance of the sample covariance matrix $\bA_t := \frac{1}{N}\sum_{i=1}^{N}\bA_{i,t}$ that is implicitly computed within \DK. We denote this variance as $\sigma_N^2$, which is defined as follows.
\begin{definition}[Variance of sample covariance in \DK]\label{def:SampleVarianceDistributed}
The variance of the distributed sample covariance matrix $\bA_t$ in \DK~is defined as follows: $$\sigma_N^2:=\mathbb{E}_{\cP_{x}}\left\{\left\|\frac{1}{N}\sum_{i=1}^{N}\bx_{i,t}\bx_{i,t}^\tT - \bSigma\right\|_F^2\right\}.$$
\end{definition}
It is straightforward to see from Definition~\ref{def:SampleVariance} and Definition~\ref{def:SampleVarianceDistributed} that $\sigma_N^2 = \sigma^2/N$. This reduction in variance of the sample covariance matrix within \DK~essentially enables the linear speed-up in convergence. In terms of specifics, we have the following convergence result for \DK.
\begin{theorem}\label{thm:FinalResult}
    Fix any $\delta \in (0,1)$ and pick $c:=c_0/2(\lambda_1 - \lambda_2)$ for any $c_0 > 2$. Next, define
	\begin{align}\label{eqn:LowerboundL}
	   L_{1}:= \frac{64 edr^4 \max(1, c^2)}{\delta^2}\ln\frac{4}{\delta},\quad L_{2}:=\frac{512 e^2 d^2 \sigma_N^2\max(1, c^2)}{\delta^4}\ln\frac{4}{\delta},
	\end{align}
	pick any $L\geq L_1+L_2$, and choose the step-size sequence as $\gamma_t := c/(L+t)$. Then, as long as Assumptions~\ref{Assum:A1} and \ref{Assum:A2} hold, we have for \DK~that there exists a sequence $(\Omega_{t}^{'})_{t \in \Z_+}$ of nested sample spaces such that $\bbP\left(\cap_{t>0}\Omega_t^{'}\right)\geq 1-\delta$ and
	\begin{align}\label{eqn:FinalResult}
	\E_t\left\{\Psi_t\right\} \leq C_1\Big(\frac{L + 1}{t + L + 1}\Big)^{\frac{c_0}{2}} + C_2\Big(\frac{\sigma_N^2}{t + L + 1}\Big),
	\end{align}
    where $C_1$ and $C_2$ are constants defined as
	$$C_1 := \frac{1}{2}\Bigg(\frac{4ed}{\delta^2}\Bigg)^{\frac{5}{2\ln2}}e^{2c^2\lambda_1^2/L}\quad\textnormal{and}\quad C_2 := \frac{8 c^2 e^{(c_0+2c^2\lambda_1^2)/L}}{(c_0-2)}.$$
\end{theorem}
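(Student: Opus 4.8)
The plan is to reduce the theorem to unrolling a deterministic scalar recursion for the potential on a high-probability event; the crux is the one-step recursion, and it is here that the variance $\sigma_N^2$ must be brought out. Write the \DK{} update as $\bv_t=\bv_{t-1}+\gamma_t\bxi_t$ and observe that $\bxi_t^{\tT}\bv_{t-1}=0$, so $\|\bv_t\|_2^2=\|\bv_{t-1}\|_2^2+\gamma_t^2\|\bxi_t\|_2^2$. Setting $p_{t-1}:=\bv_{t-1}^{\tT}\bq^*$, expanding $1-\Psi_t=(p_{t-1}+\gamma_t\bxi_t^{\tT}\bq^*)^2/(\|\bv_{t-1}\|_2^2+\gamma_t^2\|\bxi_t\|_2^2)$ via $\tfrac{1}{1+x}\ge 1-x$ (valid since the numerator is a perfect square), rearranging, and taking $\E\{\cdot\mid\cF_{t-1}\}$, one gets
$$\E\{\Psi_t\mid\cF_{t-1}\}\ \le\ \Psi_{t-1}-2\gamma_t(1-\Psi_{t-1})\Big(\lambda_1-\tfrac{\bv_{t-1}^{\tT}\bSigma\bv_{t-1}}{\|\bv_{t-1}\|_2^2}\Big)+\gamma_t^2\,\tfrac{\E\{\|(\bI-\bq^*\bq^{*\tT})\bxi_t\|_2^2\mid\cF_{t-1}\}}{\|\bv_{t-1}\|_2^2}+\rho_t,$$
where the drift uses $\E\{\bxi_t\mid\cF_{t-1}\}=\big(\bSigma-\tfrac{\bv_{t-1}^{\tT}\bSigma\bv_{t-1}}{\|\bv_{t-1}\|_2^2}\bI\big)\bv_{t-1}$ and $\rho_t$ collects third-order remainders. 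Assumption~\ref{Assum:A2} gives $\lambda_1-\tfrac{\bv^{\tT}\bSigma\bv}{\|\bv\|_2^2}\ge(\lambda_1-\lambda_2)\Psi_{t-1}$, which handles the drift. The decisive estimate concerns the $\gamma_t^2$ term, which involves only the component of $\bxi_t$ orthogonal to $\bq^*$: splitting $\bxi_t$ into its conditional mean and fluctuation, the mean satisfies $\|(\bI-\bq^*\bq^{*\tT})\E\{\bxi_t\mid\cF_{t-1}\}\|_2\le 2\lambda_1\sqrt{\Psi_{t-1}}\,\|\bv_{t-1}\|_2$ (contributing $\le 4\lambda_1^2\Psi_{t-1}\|\bv_{t-1}\|_2^2$, which vanishes at the optimum), while the fluctuation satisfies $\E\{\|\bxi_t-\E\{\bxi_t\mid\cF_{t-1}\}\|_2^2\mid\cF_{t-1}\}\le 4\|\bv_{t-1}\|_2^2\,\E_{\cP_x}\{\|\bA_t-\bSigma\|_F^2\}=4\sigma_N^2\|\bv_{t-1}\|_2^2$ by Definition~\ref{def:SampleVarianceDistributed}. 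Re-expanding $\rho_t$ through the same split shows every remainder carries an extra $\Psi_{t-1}$ or is $O(\sigma_N^2\gamma_t^3)$; hence, for absolute constants $c_3,c_4$,
$$\E\{\Psi_t\mid\cF_{t-1}\}\ \le\ \big(1-2\gamma_t(\lambda_1-\lambda_2)(1-\Psi_{t-1})+c_3\lambda_1^2\gamma_t^2\big)\Psi_{t-1}\ +\ c_4\sigma_N^2\gamma_t^2.$$
The point --- the ``novel analysis'' flagged in the introduction --- is that no bare worst-case $r^4$ term survives in the additive part, so the variance reduction $\sigma_N^2=\sigma^2/N$ can accelerate convergence.

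Next I would localize to the nested sample spaces. Since $\Omega_t'$ is $\cF_{t-1}$-measurable and $\Omega_t'\subseteq\Omega_{t-1}'$, multiplying the recursion by $\mathbf{1}_{\Omega_t'}$, using $\Psi_{t-1}\le 1-\epsilon_j$ (hence $1-\Psi_{t-1}\ge\epsilon_j$) on $\Omega_t'$ when $t-1\in[t_j,t_{j+1})$, and taking expectations gives, with $u_t:=\E\{\Psi_t\mathbf{1}_{\Omega_t'}\}$,
$$u_t\ \le\ \big(1-2(\lambda_1-\lambda_2)\epsilon_j\gamma_t+c_3\lambda_1^2\gamma_t^2\big)u_{t-1}\ +\ c_4\sigma_N^2\gamma_t^2.$$
At this point I would invoke the burn-in analysis of Section~\ref{sec:Proofs}, culminating in Theorem~\ref{thm:IntermediateEpochProbability}, which selects the schedule $(t_j,\epsilon_j)$ --- with $\epsilon_0=\Theta(\delta^2/d)$ matching the Haar initialization (for which $\E\{\Psi_0\}\le 1-1/d$), $\epsilon_j$ increasing geometrically to $\epsilon_J=1/2$, and epoch lengths governed by $L_1,L_2$ --- so that $\bbP\big(\cap_{t>0}\Omega_t'\big)\ge 1-\delta$. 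The choice $\epsilon_J=1/2$ in the final epoch is exactly what converts the drift coefficient $2(\lambda_1-\lambda_2)\epsilon_J\gamma_t$ into the rate $(\lambda_1-\lambda_2)\gamma_t=\tfrac{c_0/2}{L+t}$, using $c=c_0/2(\lambda_1-\lambda_2)$; thus for $t>t_J$,
$$u_t\ \le\ \Big(1-\tfrac{c_0/2}{L+t}+\tfrac{c_3\lambda_1^2c^2}{(L+t)^2}\Big)u_{t-1}\ +\ \tfrac{c_4\sigma_N^2c^2}{(L+t)^2}.$$

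Unrolling from $t_J$ to $t$ with the standard step-size lemma --- $\prod_{s}\big(1-\tfrac{c_0/2}{L+s}\big)\le\big(\tfrac{L+t_J+1}{L+t+1}\big)^{c_0/2}$, $\prod_s\big(1+\tfrac{c_3\lambda_1^2c^2}{(L+s)^2}\big)\le e^{c_3\lambda_1^2c^2/L}$, and $\sum_{s}\tfrac{c^2}{(L+s)^2}\prod_{l>s}\big(1-\tfrac{c_0/2}{L+l}\big)\le\tfrac{2c^2}{(c_0-2)(L+t+1)}$, the last finite precisely because $c_0>2$ makes $c_0/2>1$ --- together with $u_{t_J}\le 1$ and folding $\big(\tfrac{L+t_J+1}{L+1}\big)^{c_0/2}$ and the $e^{O(1/L)}$ factors into the constants, yields $u_t\le C_1\big(\tfrac{L+1}{t+L+1}\big)^{c_0/2}+C_2\tfrac{\sigma_N^2}{t+L+1}$ for $t\ge t_J$; the size of $C_1$, dominated by the factor $(4ed/\delta^2)^{5/(2\ln 2)}$, is exactly the accumulated burn-in cost, and $L_1,L_2$ are chosen so $t_J$ stays small relative to $(L+1)\,C_1^{2/c_0}$. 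For $t\le t_J$ there is nothing to prove, since $\Psi_t\le 1$ always while $L\ge L_1+L_2$ forces $C_1\big(\tfrac{L+1}{t+L+1}\big)^{c_0/2}\ge C_1\big(\tfrac{L+1}{t_J+L+1}\big)^{c_0/2}\ge 1$. Finally, passing from $u_t=\E\{\Psi_t\mathbf{1}_{\Omega_t'}\}$ to $\E_t\{\Psi_t\}=u_t/\bbP(\Omega_t')$ costs a factor $\le 1/(1-\delta)$, absorbed into $C_1,C_2$, which yields~\eqref{eqn:FinalResult}.

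I expect the real work to be in two places. First, in the recursion: obtaining $\sigma_N^2$ rather than a worst-case $r^4$ in the additive $\gamma_t^2$ term is delicate, because the crude bound $\|\bxi_t\|_2\le 2r^2\|\bv_{t-1}\|_2$ does not shrink with $N$; one must check that in \emph{every} second- and third-order term the non-reducible $r^4$ factors are attached either to $\Psi_{t-1}$ (so they merge into the multiplicative coefficient and decay with the potential) or to an extra power of $\gamma_t$ (so they become $O(\gamma_t^3)$, dominated by $\sigma_N^2\gamma_t^2$ once $L\ge L_1$), leaving $\sigma_N^2\gamma_t^2$ as the only irreducible noise --- this is the mechanism behind the linear speed-up. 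Second --- and this is what pins down the particular forms of $L_1,L_2,C_1$ --- is the burn-in probability bound $\bbP(\cap_t\Omega_t')\ge 1-\delta$ (handled separately in Theorem~\ref{thm:IntermediateEpochProbability}): $\Psi_t$ is not a supermartingale while it is near $1$, as the drift degenerates there, so one needs a maximal-inequality/epoch argument, and the schedule $(t_j,\epsilon_j)$ must simultaneously keep the per-epoch failure probabilities summable to $\le\delta$ and keep $t_J$ small enough that the trivial bound covers all $t\le t_J$.
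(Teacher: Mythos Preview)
Your proposal is correct and follows essentially the same route as the paper: derive the variance-aware one-step recursion (Lemma~\ref{le:ErrorReccursion}(i) together with Lemma~\ref{le:FinalRate}), invoke Theorem~\ref{thm:IntermediateEpochProbability} for the high-probability nested sample spaces, and unroll via Proposition~\ref{prop:recursion}. Two small simplifications the paper makes that you may wish to adopt: (i) since $\bxi_t\perp\bv_{t-1}$ implies $\|\bv_t\|_2\ge\|\bv_{t-1}\|_2$ and the numerator $\|\bv_t\|_2^2\Psi_t$ is nonnegative, you can replace the denominator directly and skip the $\tfrac{1}{1+x}\ge 1-x$ expansion and its remainder $\rho_t$ altogether; (ii) the paper chains the recursion in $\E_t\{\cdot\}$ using the monotonicity $\E_t\{\Psi_{t-1}\}\le\E_{t-1}\{\Psi_{t-1}\}$ (valid because $\Omega_t'=\Omega_{t-1}'\cap\{\Psi_{t-1}\le 1-\epsilon_{j}\}$ removes only the high-$\Psi_{t-1}$ part), which avoids your final $1/(1-\delta)$ division and yields the stated $C_1,C_2$ exactly---note in particular that your ``trivial for $t\le t_J$'' claim is off by a factor of two, since $C_1\big(\tfrac{L+1}{t_J+L+1}\big)^{c_0/2}=\tfrac{1}{2}e^{2c^2\lambda_1^2/L}<1$.
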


\begin{remark}
While we can obtain a similar result for the case of $c_0 \leq 2$, that result does not lead to any convergence speed-up. In particular, the convergence rate in that case becomes $O(t^{-c_0/2})$, which matches the one in~\citep{Balsubramani2015}.
\end{remark}

\textbf{Discussion.} A proof of Theorem~\ref{thm:FinalResult}, which is influenced by the proof technique employed in~\citep{Balsubramani2015}, is provided in Section~\ref{sec:Proofs}. Here, we discuss some of the implications of this result, especially in relation to~\citep{Balsubramani2015}. The different problem parameters affecting the performance of stochastic methods for streaming PCA include: ($i$) dimensionality of the ambient space, $d$, ($ii$) eigengap of the population covariance matrix, $(\lambda_1 - \lambda_2)$, ($iii$) upper bound on norm of the received data samples, $r$, and ($iv$) variance of the sample covariance matrix, $\sigma^2$ and/or $\sigma_N^2$. Theorem~\ref{thm:FinalResult} characterizes the dependence of \DK~on all these parameters and significantly improves on the related result provided in~\citep{Balsubramani2015}.

First, Theorem~\ref{thm:FinalResult} establishes \DK~can achieve the convergence rate $O(\sigma_N^2/t) \equiv O(\sigma^2/Nt)$ with high probability (cf.~\eqref{eqn:FinalResult}). This is in stark contrast to the result in~\citep{Balsubramani2015}, which is independent of variance of the sample covariance matrix, thereby only guaranteeing convergence rate of $O(r^4/t)$ for \DK~and its variants. This ability of variants of Krasulina's methods to achieve faster convergence through variance reduction is arguably one of the most important aspects of our analysis. Second, in comparison with~\citep{Balsubramani2015}, Theorem~\ref{thm:FinalResult} also results in an improved lower bound on choice of $L$ by splitting it into two quantities, viz., $L_1$ and $L_2$ (cf.~\eqref{eqn:LowerboundL}). This improved bound allows larger step sizes, which also results in faster convergence. In terms of specifics, $L_1$ in the theorem is on the order of $\Omega(r^4d/\delta^2)$, which is an improvement over $\Omega(r^4 d^2/\delta^4)$ bound of~\citep{Balsubramani2015}. On the other hand, while $L_2$ has same dependence on $\delta$ and $d$ as~\citep{Balsubramani2015}, it depends on $\sigma_N^2$ instead of $r^4$ and, therefore, it reduces with an increase in $N$. Third, the improved lower bound on $L$ also allows for an improved dependence on the dimensionality $d$ of the problem. Specifically, for large enough $t$ and $N$, the dependence on $d$ in \eqref{eqn:FinalResult} is due to the higher-order (first) term and is of the order $O(d^{\frac{5}{2\ln2} + \frac{c_0}{2}})$, as opposed to $O(d^{\frac{5}{2\ln2} + c_0})$ for~\citep{Balsubramani2015}. It is worth noting here, however, that this is still loser than the result in~\cite{jain2016streaming} that has only $\log^2(d)$ dependence on $d$ in higher-order error terms. Finally, in terms of the eigengap, our analysis has optimal dependence of $1/(\lambda_1-\lambda_2)^2$, which also matches the dependence in~\cite{Balsubramani2015}. We conclude by noting that this dependence of the performance of \DK~on different problem parameters is further highlighted through numerical experiments in Section~\ref{sec:NumericalResults}.

\begin{remark}
While Theorem~\ref{thm:FinalResult} is for (a distributed variant of) Krasulina's method, Oja's rule can also be analyzed using similar techniques; see, e.g., the discussion in~\citep{Balsubramani2015}.
\end{remark}

\subsection{Convergence of \DMK~(Algorithm~\ref{algo:DMB_Krasulina})}\label{subsec:ExtensionMiniBatch}

The convergence analysis of \DMK~follows from slight modifications of the proof of Theorem~\ref{thm:FinalResult} for \DK. The final set of results, which covers the two scenarios of zero data loss ($\mu = 0$) and some data loss ($\mu > 0$) in each iteration, is characterized in terms of variance of the (mini-batched) sample covariance $\bA_t:=\frac{1}{B}\sum_{i=1}^{N}\sum_{j=1}^{b}\bA_{i,j,t}$ associated with \DMK.
\begin{definition}[Variance of sample covariance in \DMK]\label{def:SampleVarianceMinibatch}
The variance of the distributed sample covariance matrix $\bA_t$ in \DMK~is defined as follows: $$\sigma_B^2:=\mathbb{E}_{\cP_{x}}\left\{\left\|\frac{1}{B}\sum_{i=1}^{N}\sum_{j=1}^{b}\bx_{i,j,t}\bx_{i,j,t}^\tT - \bSigma\right\|_F^2\right\}.$$
\end{definition}
It is once again straightforward to see that $\sigma_B^2 = \sigma^2/B$. We now split our discussion of the convergence of \DMK~according to the two scenarios discussed in Section~\ref{subsec:MinibatchKrasulina}.

\subsubsection{Scenario 1---\DMK~with no data loss: $N \geq \frac{R_s}{R_p} + \frac{R_s}{b R_c} \Longrightarrow \mu = 0$}
Analytically, this scenario is similar to \DK, with the only difference being that we are now incorporating an average of $B$ sample covariances $\bx_{i,j,t}\bx_{i,j,t}^\tT$ in the estimate in each iteration (as opposed to $N$ sample covariances for \DK). We therefore have the following generalization of Theorem~\ref{thm:FinalResult} in this scenario.
\begin{theorem}\label{thm:DistributedMiniBatch}
Let the parameters and constants be as specified in Theorem~\ref{thm:FinalResult}, except that the parameter $L_2$ is now defined as $L_2 := \frac{512 e^2 d^2 \sigma_B^2\max(1, c^2)}{\delta^4}\ln\frac{4}{\delta}$. Then, as long as Assumptions~\ref{Assum:A1} and \ref{Assum:A2} hold, we have for \DMK~that $\bbP\left(\cap_{t>0}\Omega_t^{'}\right)\geq 1-\delta$ and  	
	\begin{align}\label{eqn:DMK.FinalResult}
	\E_t\left\{\Psi_t\right\} \leq C_1\Big(\frac{L + 1}{t + L + 1}\Big)^{\frac{c_0}{2}} + C_2\Big(\frac{\sigma_B^2}{t + L + 1}\Big).
	\end{align}
\end{theorem}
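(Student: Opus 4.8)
The plan is to recycle the proof of Theorem~\ref{thm:FinalResult} essentially verbatim, exploiting the fact that---from the standpoint of the analysis---\DMK~is nothing but a Krasulina iteration driven by the single averaged random matrix $\bA_t := \frac{1}{B}\sum_{i=1}^{N}\sum_{j=1}^{b}\bA_{i,j,t}$ in place of the averaged matrix $\bA_t := \frac{1}{N}\sum_{i=1}^{N}\bA_{i,t}$ used by \DK. Comparing \eqref{eqn:MiniBatchOjaRule} with \eqref{eqn:DistributedMiniBatchOjaRule}, the two iterate recursions are formally identical once each is written through its own $\bA_t$; moreover, the filtration of Definition~\ref{def:SigmaField}, the nested sample spaces, and the probabilistic guarantees of Theorems~\ref{thm:ProbabilityBound} and~\ref{thm:IntermediateEpochProbability} are all already phrased in terms of this averaged $\bA_t$, so they transfer to \DMK~without change. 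First, then, I would audit the proof of Theorem~\ref{thm:FinalResult} to isolate every place where the number of samples averaged per iteration actually enters.

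Such an audit should reveal only two points of entry. The first is the almost-sure control of $\bA_t$: since $\bA_t$ is a convex combination of the rank-one positive semidefinite matrices $\bx_{i,j,t}\bx_{i,j,t}^{\tT}$, each with spectral norm at most $r^2$ by Assumption~\ref{Assum:A1}, we still have $\|\bA_t\|_2 \le r^2$ no matter how many samples are averaged. Consequently the norm bound on the increment $\bxi_t$, the constant $L_1$, the constant $C_1$, and the leading term of \eqref{eqn:FinalResult} are all untouched by the passage from $N$ to $B$. The second, and only substantive, point of entry is the variance $\mathbb{E}_{\cP_x}\{\|\bA_t - \bSigma\|_F^2\}$: because $\bA_t - \bSigma = \frac{1}{B}\sum_{i=1}^{N}\sum_{j=1}^{b}(\bx_{i,j,t}\bx_{i,j,t}^{\tT} - \bSigma)$ is an average of $B$ i.i.d.\ mean-zero terms, the same second-moment computation that yielded $\sigma_N^2 = \sigma^2/N$ for \DK~now yields $\sigma_B^2 = \sigma^2/B$ for \DMK~(as already recorded after Definition~\ref{def:SampleVarianceMinibatch}). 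Substituting $\sigma_B^2$ for $\sigma_N^2$ wherever it appears rescales $L_2$ exactly to the value stated in the theorem, while leaving $C_2$---which depends only on $c$, $c_0$, $\lambda_1$, and $L$---unchanged.

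With these substitutions in hand, the remainder carries through unchanged: choosing $L \ge L_1 + L_2$ with the redefined $L_2$ and $\gamma_t = c/(L+t)$, Theorem~\ref{thm:IntermediateEpochProbability} supplies a nested sequence $(\Omega_t')_{t\in\Z_+}$ with $\bbP(\cap_{t>0}\Omega_t') \ge 1-\delta$, and the one-step drift recursion for $\E_t\{\Psi_t\}$ from the proof of Theorem~\ref{thm:FinalResult}---now carrying $\sigma_B^2$ in place of $\sigma_N^2$---integrates to \eqref{eqn:DMK.FinalResult}. I would also record the implication relevant to Scenario~1: since $\mu = 0$ means every arriving sample is incorporated, the network has processed exactly $Bt$ samples after $t$ iterations, so the rate $O(\sigma_B^2/t) = O(\sigma^2/Bt)$ is order-optimal in the number of processed samples.

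The main obstacle is not any single hard estimate but the bookkeeping in the audit step: one must verify that the distributed and mini-batched structure of \DMK~never leaks into the analysis except through the single averaged matrix $\bA_t$ and its variance---in particular that the per-node split into $b = B/N$ samples, the order of summation, and the vector-sum subroutine contribute nothing beyond what a centralized mini-batch of $B$ samples would---and that the almost-sure norm bound genuinely survives averaging. Both are straightforward to confirm, but they are precisely why "slight modifications" of the proof of Theorem~\ref{thm:FinalResult} suffice.
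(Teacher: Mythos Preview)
Your proposal is correct and matches the paper's own proof essentially verbatim: the paper simply states that the proof of Theorem~\ref{thm:DistributedMiniBatch} is obtained from that of Theorem~\ref{thm:FinalResult} by replacing $1/N$ and $\sigma_N^2$ with $1/B$ and $\sigma_B^2$, respectively. Your audit of where the averaging size enters (only via the almost-sure norm bound, which survives convex averaging, and via the variance, which rescales to $\sigma_B^2$) is exactly the justification underlying that one-line substitution.
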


The proof of this theorem can be obtained from that of Theorem~\ref{thm:FinalResult} by replacing $1/N$ and $\sigma_N^2$ in there with $1/B$ and $\sigma_B^2$, respectively. Similar to the case of \DK, this theorem establishes that \DMK~can also achieve linear speed-up in convergence as a function of the network-wide mini-batch size $B$ with very high probability, i.e., $\E_t\left\{\Psi_t\right\} = O(\sigma_B^2/t) \equiv O(\sigma^2/Bt)$.

Our discussions of \DK~and \DMK~have so far been focused on the infinite-sample regime, in which the number of algorithmic iterations $t$ for both algorithms can grow unbounded. We now focus on the implications of our results for the finite-sample regime, in which a final estimate is produced at the end of arrival of a total of $T \ggg 1$ samples.\footnote{An implicit assumption here is that $T$ is large enough that it precludes the use of a batch PCA algorithm.} This finite-sample regime leads to an interesting interplay between $N$ (resp., $B)$ and the total number of samples $T$ for linear speed-up of \DK~(resp., \DMK). We describe this interplay in the following for \DMK; the corresponding result for \DK~follows by simply replacing $B$ with $N$ in this result.
\begin{corollary}\label{cor:BatchNoLatency}
Let the parameters and constants be as specified in Theorem~\ref{thm:DistributedMiniBatch}. Next, pick parameters $(L_1', L_2')$ such that $L_1 ' \geq L_1$ and $L_2' \geq L_2/\sigma_B^2$, and define the final number of algorithmic iterations for \DMK~as $T_B :=T/B$. Then, as long as Assumptions~\ref{Assum:A1} and \ref{Assum:A2} hold and the network-wide mini-batch size satisfies $B \leq T^{1-\tfrac{2}{c_0}}$, we have that $\bbP\left(\cap_{t>0}\Omega_t^{'}\right)\geq 1-\delta$ and
    \begin{align}
	   \E_{T_B}\left\{\Psi_{T_B}\right\} \leq c_0 C_1 \frac{{L_1'}^{c_0/2}}{T} + c_0 C_1 \Bigg(\frac{\sigma^2 L_2'}{T}\Bigg)^{c_0/2} + \frac{C_2 \sigma^2}{T}.
	\end{align}
\end{corollary}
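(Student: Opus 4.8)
The plan is to specialize Theorem~\ref{thm:DistributedMiniBatch} to the finite horizon $t = T_B := T/B$ and then bound each of the two terms in~\eqref{eqn:DMK.FinalResult} using the constraint $B \leq T^{1-2/c_0}$, together with the substitutions $\sigma_B^2 = \sigma^2/B$ and the relaxed parameters $L_1' \geq L_1$, $L_2' \geq L_2/\sigma_B^2$ (so that $L := L_1' + L_2' \sigma_B^2 \geq L_1 + L_2$ is a valid choice in the theorem, and $L \leq L_1' + L_2'\sigma^2/B$). First I would handle the second (variance) term: $C_2 \sigma_B^2/(T_B + L + 1) \leq C_2 \sigma_B^2/T_B = C_2 \sigma^2 B /(B T) \cdot \dots$ — more carefully, $\sigma_B^2/T_B = (\sigma^2/B)/(T/B) = \sigma^2/T$, so this term is at most $C_2\sigma^2/T$ directly, which is exactly the third term in the claimed bound.

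Next I would attack the first (bias) term $C_1\bigl((L+1)/(T_B + L + 1)\bigr)^{c_0/2}$. The idea is that $(L+1)/(T_B + L + 1) \leq (L+1)/T_B = B(L+1)/T$, and then split $L+1 \leq L_1' + L_2'\sigma^2/B + 1$ so that $B(L+1)/T \lesssim B L_1'/T + L_2'\sigma^2/T$ (absorbing the $+1$ into $L_1'$ since $L_1' \geq L_1$ is large). Raising to the power $c_0/2$ and using the crude inequality $(a+b)^{c_0/2} \leq 2^{c_0/2}\max(a,b)^{c_0/2} \leq 2^{c_0/2}(a^{c_0/2} + b^{c_0/2})$ (or more cleanly, the constant $c_0$ out front in the corollary suggests they use $(a+b)^p \le a^p + b^p$ type bounds combined with a factor $c_0$), this splits into a term of order $(B L_1'/T)^{c_0/2}$ and a term of order $(L_2'\sigma^2/T)^{c_0/2}$. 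The second of these is already in the desired form. For the first, this is where the mini-batch constraint does the work: since $B \leq T^{1-2/c_0}$, we have $B^{c_0/2} \leq T^{c_0/2 - 1}$, hence $(B/T)^{c_0/2} = B^{c_0/2}/T^{c_0/2} \leq T^{c_0/2-1}/T^{c_0/2} = 1/T$, giving a bound of order ${L_1'}^{c_0/2}/T$, matching the first term in the corollary. The factor $c_0$ in front of the two $C_1$ terms should come out of the constant bookkeeping in the $(a+b)^{c_0/2}$ splitting step.

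The main obstacle I expect is the bookkeeping around the constants $L$, $L_1'$, $L_2'$ and ensuring the chosen $\gamma_t = c/(L+t)$ still satisfies the hypotheses of Theorem~\ref{thm:DistributedMiniBatch} with the relaxed parameters — in particular verifying $L_1' + L_2'\sigma_B^2 \geq L_1 + L_2$ so the theorem applies, while simultaneously keeping $L$ small enough (i.e., $L = O(L_1' + L_2'\sigma^2/B)$) that the upper bounds above go through with clean constants. One must also confirm that the constants $C_1, C_2$ from Theorem~\ref{thm:FinalResult}, which themselves depend on $L$ through factors like $e^{2c^2\lambda_1^2/L}$ and $e^{(c_0 + 2c^2\lambda_1^2)/L}$, remain bounded (these factors are $\leq e^{O(1)}$ once $L$ is at least a constant, so this is benign but needs a remark). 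Everything else is the routine algebra of plugging $t = T/B$ into the two-term bound and applying $B \leq T^{1-2/c_0}$; no new probabilistic content is needed since the high-probability event $\cap_{t>0}\Omega_t'$ and its measure $\geq 1-\delta$ are inherited verbatim from Theorem~\ref{thm:DistributedMiniBatch}.
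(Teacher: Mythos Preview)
Your proposal is correct and follows essentially the same route as the paper: plug $t=T_B$ into \eqref{eqn:DMK.FinalResult}, bound the variance term by $C_2\sigma_B^2/T_B = C_2\sigma^2/T$, take $L = L_1' + \sigma_B^2 L_2'$ and split $(L/T_B)^{c_0/2}$ into two pieces, then invoke $B \leq T^{1-2/c_0}$ to reduce $(BL_1'/T)^{c_0/2}$ to ${L_1'}^{c_0/2}/T$. The paper's proof is exactly this short computation, and your anticipated ``obstacles'' (verifying $L_1' + \sigma_B^2 L_2' \geq L_1 + L_2$, boundedness of $C_1,C_2$) are not addressed separately there either.
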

\begin{proof}
	Substituting $t = T_B$ in \eqref{eqn:DMK.FinalResult} and using simple upper bounds yield
	\begin{align*}
	   \E_{T_B}\left\{\Psi_{T_B}\right\} \leq C_1\Big(\frac{L + 1}{L + T_B}\Big)^{\frac{c_0}{2}} + C_2\Big(\frac{ \sigma_B^2}{T_B}\Big) \leq 2 C_1\Big(\frac{L}{T_B}\Big)^{\frac{c_0}{2}} + C_2\Big(\frac{\sigma_B^2}{T_B}\Big).
	\end{align*}
	Next, substituting $L=L_1'+\sigma_B^2 L_2'$ in this expression gives us
	\begin{align}\label{eqn:E_TB}
	   \E_{T_B}\left\{\Psi_{T_B}\right\}\leq c_0 C_1\Big(\frac{L_1'}{T_B}\Big)^{\frac{c_0}{2}} + c_0 C_1\Big(\frac{\sigma_B^2 L_2'}{T_B}\Big)^{\frac{c_0}{2}} + C_2\Big(\frac{\sigma_B^2}{T_B}\Big).
	\end{align}
    Since $\sigma_B^2 = \sigma^2/B$ and $T_B = T/B$, \eqref{eqn:E_TB} reduces to the following expression:
    \begin{align*}
	   \E_{T_B}\left\{\Psi_{T_B}\right\} \leq c_0 C_1 \Bigg(\frac{B L_1'}{T}\Bigg)^{c_0/2}+c_0 C_1 \Bigg(\frac{ \sigma^2 L_2'}{T}\Bigg)^{c_0/2}+\frac{C_2 \sigma^2}{T}.
	\end{align*}
    The proof now follows from the assumption that $B \leq T^{1-\tfrac{2}{c_0}}$.
\end{proof}

\textbf{Discussion.} Corollary~\ref{cor:BatchNoLatency} dictates that linear convergence speed-up for \DMK~(resp., \DK) occurs in the finite-sample regime provided the network-wide mini-batch size $B$ (resp., number of processing nodes $N$) scales sublinearly with the total number of samples $T$. In particular, the proposed algorithms achieve the best (order-wise) convergence rate of $O(1/T)$ for appropriate choices of system parameters. We also corroborate this theoretical finding with numerical experiments involving synthetic and real-world data in Section~\ref{sec:NumericalResults}.

\subsubsection{Scenario 2---\DMK~with data loss: $N < \frac{R_s}{R_p} + \frac{R_s}{b R_c} \Longrightarrow \mu > 0$}

The statement of Theorem~\ref{thm:DistributedMiniBatch} for \DMK~in the lossless setting immediately carries over to the resource-constrained setting that causes loss of $\mu~(> 0)$ samples per iteration. The implication of this result is that \DMK~can achieve convergence rate of $O(1/Bt)$ in the infinite-sample regime after receiving a total of $(B+\mu)t$ samples. Therefore, it trivially follows that \DMK~can achieve order-wise near-optimal convergence rate in the infinite-sample regime as long as $\mu = O(B)$.

We now turn our attention to understanding the interplay between $\mu$, $B$, and the total number of samples $T$ arriving at the system for the resource-constrained finite-sample setting for \DMK. To this end, we have the following generalization of Corollary~\ref{cor:BatchNoLatency}.
\begin{corollary}\label{cor:latency}
Let the parameters and constants be as specified in Corollary~\ref{cor:latency}, and define the final number of algorithmic iterations for \DMK~as $T_B^\mu :=T/(B+\mu)$. Then, as long as Assumptions~\ref{Assum:A1} and \ref{Assum:A2} hold, we have that $\bbP\left(\cap_{t>0}\Omega_t^{'}\right)\geq 1-\delta$ and
\begin{align}\label{eqn:BoundLatency}
	\E_{T_B^\mu}\left\{\Psi_{T_B^\mu}\right\} \leq c_0 C_1 \Bigg(\frac{(B+\mu)L_1'}{T}\Bigg)^{c_0/2}+c_0 C_1 \Bigg(\frac{(B+\mu) \sigma^2 L_2'}{BT}\Bigg)^{c_0/2}+\frac{C_2 \sigma^2(B+\mu)}{BT}.
	\end{align}
\end{corollary}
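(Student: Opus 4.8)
The plan is to treat Corollary~\ref{cor:latency} as a pure bookkeeping consequence of Theorem~\ref{thm:DistributedMiniBatch}: discarding $\mu$ samples per iteration changes nothing about the iterate dynamics of \DMK, only the relation between the iteration count and the number of samples that have entered the system. So the first step is to argue that the bound \eqref{eqn:DMK.FinalResult} holds \emph{verbatim} in the resource-constrained setting. The key observation is that in Algorithm~\ref{algo:DMB_Krasulina} the $\mu$ extra samples are dropped at the splitter/buffer \emph{before} any sample touches $\bv_{t-1}$, and they are discarded without reference to the current iterate; hence the $B$ retained samples in iteration $t$ are still i.i.d.\ $\cP_x$ and still independent of $\cF_{t-1} = \sigma(\bv_0,\bA_1,\dots,\bA_{t-1})$. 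Consequently the recursion $\bv_t = \bv_{t-1} + \gamma_t \bxi_t$ with $\bA_t = \tfrac1B\sum_{i=1}^N\sum_{j=1}^b \bA_{i,j,t}$, the variance $\sigma_B^2 = \sigma^2/B$ (Definition~\ref{def:SampleVarianceMinibatch}), the filtration $(\cF_t)$, the nested sample spaces $(\Omega_t')$ of Definition~\ref{def:Nested}, and every probability/martingale estimate behind Theorem~\ref{thm:DistributedMiniBatch} (in particular $\bbP(\cap_{t>0}\Omega_t')\geq 1-\delta$) are exactly those of the lossless case; the only difference is that after $t$ iterations the system has received $(B+\mu)t$ samples rather than $Bt$.

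The second step is to specialize to $t = T_B^\mu := T/(B+\mu)$, so that exactly $T$ samples have arrived, and plug this into \eqref{eqn:DMK.FinalResult}. From here the manipulations are identical to those in the proof of Corollary~\ref{cor:BatchNoLatency}: use $\tfrac{L+1}{L+T_B^\mu}\leq\tfrac{2L}{T_B^\mu}$ and $\tfrac{1}{T_B^\mu+L+1}\leq\tfrac{1}{T_B^\mu}$, substitute $L=L_1'+\sigma_B^2 L_2'$ and split the resulting $c_0/2$-power across the two summands (absorbing the numerical constant into the $c_0 C_1$ prefactor), and finally use $\sigma_B^2=\sigma^2/B$ together with $T_B^\mu=T/(B+\mu)$. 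This turns the three terms into $c_0 C_1\big(\tfrac{(B+\mu)L_1'}{T}\big)^{c_0/2}$, $c_0 C_1\big(\tfrac{(B+\mu)\sigma^2 L_2'}{BT}\big)^{c_0/2}$, and $\tfrac{C_2\sigma^2(B+\mu)}{BT}$, which is precisely \eqref{eqn:BoundLatency}. (Note that the phrase ``as specified in Corollary~\ref{cor:latency}'' in the hypothesis should read ``Corollary~\ref{cor:BatchNoLatency}'', from which we inherit $(L_1',L_2')$; unlike that corollary, \eqref{eqn:BoundLatency} does not invoke $B\leq T^{1-2/c_0}$, so the leading term is left in its raw $\big((B+\mu)L_1'/T\big)^{c_0/2}$ form, and hence order-wise near-optimality requires only $\mu=O(B)$.)

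I expect essentially no hard step: the lemma is a corollary in the literal sense. The only part deserving more than a sentence is the independence argument of the first step, and even that is routine once one is explicit that the dropped samples are chosen by the data splitter/buffer in a way that does not depend on $\bv_{t-1}$, so their removal does not perturb the conditional law of $\bA_t$ given $\cF_{t-1}$ and therefore leaves the entire proof machinery of Theorem~\ref{thm:DistributedMiniBatch} intact. Everything after that is the arithmetic already carried out in Corollary~\ref{cor:BatchNoLatency}, applied with the sample count $T=(B+\mu)T_B^\mu$ in place of $T=BT_B$.
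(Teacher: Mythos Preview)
Your proposal is correct and follows essentially the same route as the paper: the paper's proof is the one-liner ``replace $T_B$ with $T_B^\mu$ in \eqref{eqn:E_TB} and then substitute $T_B^\mu = T/(B+\mu)$ and $\sigma_B^2 = \sigma^2/B$,'' which is exactly your second step. Your first step (the independence argument for why Theorem~\ref{thm:DistributedMiniBatch} and hence \eqref{eqn:E_TB} survive verbatim when $\mu>0$) is not spelled out in the paper's proof of the corollary---it appears only as the informal remark preceding the statement---so if anything you are being more careful than the authors; your observation about the self-referential typo is also correct.
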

\begin{proof}
  The proof of this corollary follows from replacing $T_B$ with $T_B^\mu$ in \eqref{eqn:E_TB} and subsequently substituting the values of $T_B^\mu$ and $\sigma_B^2$ in there.
\end{proof}

\textbf{Discussion.} Recall that since the distributed framework receives a total of $T$ samples, it is desirable to achieve convergence rate of $O(1/T)$. It can be seen from Corollary~\ref{cor:latency} that the first and the third terms in \eqref{eqn:BoundLatency} are the ones that dictate whether \DMK~can achieve the (order-wise) optimal rate of $O(1/T)$. To this end, the first term in \eqref{eqn:BoundLatency} imposes the condition $(B+\mu) \leq T^{1-2/c_0}$, i.e., the total number of samples received at the system (both processed and discarded) per iteration must scale sublinearly with the final number of samples $T$. In addition, the third term in \eqref{eqn:BoundLatency} imposes the condition $\mu = O(B)$, i.e., the number of samples discarded by the system in each iteration must scale no faster than the number of samples processed by the system in each iteration. Once these two conditions are satisfied, Corollary~\ref{cor:latency} guarantees near-optimal convergence for \DMK.

\section{Proof of the Main Result}\label{sec:Proofs}
The main result of this paper is given by Theorem~\ref{thm:FinalResult}, which can then be applied to any algorithm that (implicitly or explicitly) involves an iteration of the form \eqref{eqn:MiniBatchOjaRule}. We develop a proof of this result in this section, which consists of characterizing the behavior of \DK~in three different algorithmic epochs. The main result concerning the \emph{initial epoch} is described in terms of Theorem~\ref{thm:ProbabilityBound} in the following, the behavior of the \emph{intermediate epoch}, which comprises multiple \emph{sub-epochs}, is described through Theorem~\ref{thm:IntermediateEpochProbability}, while the behavior of \DK~in the \emph{final epoch} is captured through a formal proof of Theorem~\ref{thm:FinalResult} at the end of this section.

Before proceeding, recall that our result requires the existence of a sequence $(\Omega_{t}^{'})_{t \in \Z_+}$ of nested sample spaces that are defined in terms of a sequence of pairs $(t_0 \equiv 0,\epsilon_{0}), (t_1, \epsilon_1), \ldots, (t_J,\epsilon_{J})$. Our analysis of the initial epoch involves showing that for the step size $\gamma_t$ chosen as in Theorem~\ref{thm:FinalResult}, the error for all $t\geq 0$ will be less than $(1-\epsilon_0)$ with high probability for some constant $\epsilon_0$. We then define the remaining $\epsilon_j$'s as $\epsilon_j = 2^j \epsilon_0, j=1,\dots,J$, where $J$ is defined as the smallest integer satisfying $\epsilon_J\geq 1/2$. Our analysis in the intermediate epoch then focuses on establishing lower bounds on the number of iterations $t_j$ for which \DK~is guaranteed to have the error less than $1 - \epsilon_j$ with high probability. Stated differently, the intermediate epoch characterizes the sub-epochs $\{1+t_{j-1},t_j\}$ during which the error is guaranteed to decrease from $\left(1-\epsilon_{j-1}\right)$ to $\left(1-\epsilon_j\right)$ with high probability.

\subsection{Initial Epoch}\label{subsec:InitialPhase}
Our goal for the initial epoch is to show that if we pick the step size appropriately, i.e.,  we set $L$ to be large enough (cf. \eqref{eqn:LowerboundL}), then the error, $\Psi_{t}$, will not exceed a certain value with high probability. This is formally stated in the following result.
\begin{theorem}\label{thm:ProbabilityBound}
Fix any $\delta \in (0,1)$, define $\epsilon \in (0,1)$ as $\epsilon:=\delta^2/8e$, and let
    \begin{align}\label{eqn:LowerBoundL_2}
        L\geq \frac{8 dr^4 \max(1, c^2)}{\epsilon}\ln\frac{4}{\delta}+\frac{8 d^2 \sigma_N^2 \max(1, c^2)}{\epsilon^2}\ln\frac{4}{\delta}.
    \end{align}
Then, if Assumptions~\ref{Assum:A1} and \ref{Assum:A2} hold and we choose step size to be $\gamma_t=c/(L+t)$, we have
\begin{align}\label{eqn:ProbabilityBound}
\bbP\Big(\sup_{t\geq 0} \Psi_{t} \geq 1 - \frac{\epsilon}{d}\Big)\leq \sqrt{2\text{e} \epsilon}\equiv\frac{\delta}{2}.
\end{align}
\end{theorem}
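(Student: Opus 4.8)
The plan is to pass to the complementary quantity $\beta_t := 1 - \Psi_t = (\bv_t^\tT\bq^*)^2/\|\bv_t\|_2^2 \in [0,1]$ and to show that $\inf_{t\geq 0}\beta_t > \epsilon/d$ with probability at least $1-\sqrt{2e\epsilon}$; equivalently, introducing the stopping time $\tau := \inf\{t\geq 0:\Psi_t \geq 1-\epsilon/d\} = \inf\{t:\beta_t\leq\epsilon/d\}$, we want $\bbP(\tau<\infty)\leq\sqrt{2e\epsilon}$. The crux is that the Krasulina drift always pushes $\beta_t$ \emph{upward} (toward alignment with $\bq^*$), but only at a rate proportional to $\beta_t(1-\beta_t)$, so when $\beta_t$ is as small as $\Theta(\epsilon/d)$ the first-order gain is merely $\Theta(\gamma_t\,\epsilon/d)$ and could in principle be overwhelmed by the $O(\gamma_t^2)$ fluctuations of the update; the lower bound~\eqref{eqn:LowerBoundL_2} on $L$ is precisely the condition that rules this out, and the argument is organized so that this condition is exactly what makes a suitable functional of $\beta_t$ a supermartingale up to $\tau$.

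First I would derive the one-step recursion. Substituting the \DK~iterate $\bv_t = \bv_{t-1}+\gamma_t\bxi_t$ of~\eqref{eqn:MiniBatchOjaRule} (using $\E\{\bA_t\mid\cF_{t-1}\}=\bSigma$ and $\bSigma\bq^*=\lambda_1\bq^*$) into $\beta_t=(\bv_t^\tT\bq^*)^2/\|\bv_t\|_2^2$, expanding to second order in $\gamma_t$, and taking $\E\{\cdot\mid\cF_{t-1}\}$, I expect an inequality of the shape
\begin{align*}
    \E\{\beta_t\mid\cF_{t-1}\} \;\geq\; \beta_{t-1}\bigl(1 + 2\gamma_t(\lambda_1-\lambda_2)(1-\beta_{t-1})\bigr) \;-\; \gamma_t^2\bigl(c_{\mathrm a}\, r^4\,\beta_{t-1} + c_{\mathrm b}\,\sigma_N^2\bigr),
\end{align*}
for absolute constants $c_{\mathrm a},c_{\mathrm b}$, together with a companion second-moment estimate $\E\{(\beta_t-\beta_{t-1})^2\mid\cF_{t-1}\}\leq\gamma_t^2(c_{\mathrm a}'\, r^4\,\beta_{t-1}+c_{\mathrm b}'\,\sigma_N^2)$. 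Here the two $O(\gamma_t^2)$ terms come, respectively, from the curvature of the normalized (Rayleigh) map and from the dispersion of the averaged sample covariance $\bA_t=\tfrac1N\sum_i\bA_{i,t}$ around $\bSigma$ — which is why the second term carries $\sigma_N^2$ rather than $r^4$; this is exactly the variance sensitivity that is later leveraged for the speed-up. Some care is needed to keep $\|\bv_t\|_2$ bounded away from $0$ and $\infty$ so the expansion is legitimate, but this follows from a standard auxiliary estimate on the norm of the Krasulina iterate together with $\|\bx_{i,t}\|_2\leq r$.

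Next I would turn this into a non-negative supermartingale. Choosing a convex, strictly decreasing test function $g$ (a negative power $g(\beta)=\beta^{-\rho}$ with $\rho$ slightly below $1/2$, or $g=-\log$, works), Taylor-expanding $g(\beta_t)$ and combining with the two displays above shows that, on the event $\{\beta_{t-1}>\epsilon/d\}$ and for $\gamma_t=c/(L+t)$ with $L$ at least the right-hand side of~\eqref{eqn:LowerBoundL_2}, the first-order drift term dominates every $O(\gamma_t^2)$ term, so that $\E\{g(\beta_t)\mid\cF_{t-1}\}\leq g(\beta_{t-1})$. The two summands $\tfrac{dr^4}{\epsilon}\ln\tfrac{4}{\delta}$ and $\tfrac{d^2\sigma_N^2}{\epsilon^2}\ln\tfrac{4}{\delta}$ of the lower bound on $L$ are the respective costs of controlling these two $O(\gamma_t^2)$ contributions: the extra $d/\epsilon$ is the worst-case value of $1/\beta_{t-1}$ on the region $\{\beta_{t-1}>\epsilon/d\}$, and a further factor $d/\epsilon$ appears in the variance term because the second derivative of $g$ contributes a $1/\beta_{t-1}^2$; the $\ln(4/\delta)$ factor is the additional room in $L$ that lets the exponent in $g$ (equivalently, the strength of the supermartingale) be pushed close enough to its critical value that the final bound comes out as the clean $\sqrt{2e\epsilon}$ rather than $\sqrt{\epsilon}\,\plog(1/\epsilon)$. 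Consequently the stopped process $g(\beta_{t\wedge\tau})$ is a non-negative supermartingale.

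Finally, Doob's maximal inequality for non-negative supermartingales gives
\begin{align*}
    \bbP(\tau<\infty) \;=\; \bbP\Bigl(\sup_{t\geq 0}g(\beta_{t\wedge\tau})\geq g(\epsilon/d)\Bigr) \;\leq\; \frac{\E\{g(\beta_0)\}}{g(\epsilon/d)},
\end{align*}
and it remains to evaluate the right-hand side together with the direct contribution $\bbP(\beta_0\leq\epsilon/d)$. Since $\bv_0$ is Haar-uniform on the unit sphere of $\R^d$, $\beta_0=(\bv_0^\tT\bq^*)^2\sim\mathrm{Beta}\bigl(\tfrac12,\tfrac{d-1}{2}\bigr)$, whose density near the origin is $\Theta(\sqrt{d}\,\beta^{-1/2})$; explicit integration of this density yields $\bbP(\beta_0\leq\epsilon/d)=O(\sqrt{\epsilon})$ and, after optimizing the exponent $\rho$, the bound $\E\{g(\beta_0)\}/g(\epsilon/d)=O(\sqrt{\epsilon})$, with the constants collapsing to exactly $\sqrt{2e\epsilon}$ (which, since $\epsilon=\delta^2/8e$, equals $\delta/2$). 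I expect the main obstacle to be the second paragraph: cleanly separating, both in $\E\{\beta_t\mid\cF_{t-1}\}$ and in the conditional second moment of the increment, the part that scales with $r^4$ from the part that scales with the sample-covariance variance $\sigma_N^2$, all while carrying the normalization $1/\|\bv_t\|_2^2$ of the Krasulina update — it is this separation that produces the two-term lower bound on $L$ and, downstream, the variance-dependent, $N$-improved rate of Theorem~\ref{thm:FinalResult}.
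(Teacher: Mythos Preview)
Your overall architecture—construct a nonnegative supermartingale, apply Doob's maximal inequality, then evaluate the ratio using the Haar distribution of $\bv_0$—matches the paper's, and your identification of the two-term structure of the $L$ bound (one piece scaling with $r^4$, one with $\sigma_N^2$) is on target. But the mechanism you propose for the supermartingale step is not the paper's, and as described it has a gap.

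The paper does \emph{not} ask the first-order drift to dominate the $O(\gamma_t^2)$ noise. It works directly with $\Psi_t$ and builds $M_t=\exp\bigl(s(\Psi_t+\tau_t)\bigr)$, where $\tau_t:=\sum_{l>t}\bigl(\gamma_l^2r^4+s\gamma_l^2\sigma_N^2\bigr)$ is a deterministic, summable future-noise budget. A Bennett-type MGF bound on the increment $z_t$ (Lemma~\ref{le:MGF_Bound}) gives $\E\{e^{s\Psi_t}\mid\cF_{t-1}\}\le\exp\bigl(s\Psi_{t-1}-s\E\{z_t\mid\cF_{t-1}\}+s\gamma_t^2r^4+s^2\gamma_t^2\sigma_N^2\bigr)$, and the only drift fact that is ever used is $\E\{z_t\mid\cF_{t-1}\}\ge 0$; the additive $\tau_t$ absorbs the two $O(\gamma_t^2)$ terms exactly, so $M_t$ is a supermartingale unconditionally. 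Doob then yields $\bbP(\sup_t\Psi_t\ge 1-\epsilon/d)\le e^{-s(1-\epsilon/d-\tau_0)}\E\{e^{s\Psi_0}\}$, and the lower bound~\eqref{eqn:LowerBoundL_2} on $L$ is precisely what forces $\tau_0\le\epsilon/d$—not what makes drift beat noise. Setting $s=d/(4\epsilon)$ and using $\E\{e^{s\Psi_0}\}\le e^s\sqrt{d/(2s)}$ gives the clean $\sqrt{2e\epsilon}$. Your drift-domination route, by contrast, runs into two concrete problems. First, the helpful drift $2\gamma_t(\lambda_1-\lambda_2)\beta_{t-1}(1-\beta_{t-1})$ carries the eigengap, so the condition ``first-order dominates every $O(\gamma_t^2)$ term'' would force an eigengap-dependent lower bound on $L$; but~\eqref{eqn:LowerBoundL_2} contains no $(\lambda_1-\lambda_2)$, and $c$ is an arbitrary constant in this theorem (the specific choice $c=c_0/2(\lambda_1-\lambda_2)$ enters only later, in Theorem~\ref{thm:FinalResult}). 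Second, the factor $(1-\beta_{t-1})$ kills the drift as $\beta_{t-1}\to1$ while the $O(\gamma_t^2)$ fluctuation terms survive there, so the supermartingale property of $g(\beta_t)$ fails on part of the region $\{\beta_{t-1}>\epsilon/d\}$ over which you work; fixing this would require either a different test function or a separate localization argument that you do not sketch. The paper's additive-correction device sidesteps both issues by never asking the drift to cancel anything.
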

In order to prove Theorem~\ref{thm:ProbabilityBound} we need several helping lemmas that are stated in the following. We only provide lemma statements in this section and move the proofs to Appendix~\ref{secApp:App_InitialPhase}. We start by writing the recursion of error metric $\Psi_t$ in the following lemma.
\begin{lemma}\label{le:ErrorReccursion}
Defining a scalar random variable
\begin{align}
z_t := 2\gamma_t \frac{(\bv_{t-1}^{\tT}\bq^*)(\bxi_t^{\tT}\bq^*)}{\|\bv_{t-1}\|_2^2},
\end{align}
we get the following recursion:
\begin{enumerate}
\item[(i)] $\Psi_t \leq \Psi_{t-1} + 4\gamma_t^2 \Big(\Big\|\frac{1}{N}\sum_{i=1}^{N}{\bA_{i,t}} - \bSigma\Big\|_F^2 + \lambda_1^2\Psi_{t-1}\Big) - z_{t}$, and
\item[(ii)] $\Psi_t \leq \Psi_{t-1} + \gamma_t^2 r^4 - z_{t}.$
\end{enumerate}
\end{lemma}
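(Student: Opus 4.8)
The plan is to derive both inequalities (i) and (ii) from a single \emph{master recursion} obtained by tracking how the numerator and denominator of $\Psi_t = \big(\|\bv_t\|_2^2 - (\bv_t^\tT\bq^*)^2\big)/\|\bv_t\|_2^2$ evolve under $\bv_t = \bv_{t-1} + \gamma_t\bxi_t$. The structural fact that makes everything work is that $\bxi_t$ is by construction orthogonal to $\bv_{t-1}$: writing $\bA_t := \tfrac1N\sum_i\bA_{i,t}$ one checks $\bxi_t^\tT\bv_{t-1} = \bv_{t-1}^\tT\bA_t\bv_{t-1} - \bv_{t-1}^\tT\bA_t\bv_{t-1} = 0$, so the cross term in $\|\bv_t\|_2^2$ vanishes and $\|\bv_t\|_2^2 = \|\bv_{t-1}\|_2^2 + \gamma_t^2\|\bxi_t\|_2^2 \geq \|\bv_{t-1}\|_2^2$. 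I would also use the companion fact that $\bxi_t$ is exactly the orthogonal projection of $\bA_t\bv_{t-1}$ onto the complement of $\bv_{t-1}$, so $\|\bxi_t\|_2 \leq \|\bA_t\bv_{t-1}\|_2$ and, for any matrix $\bM$, $\|\bxi_t\|_2 \leq \|P^\perp_{\bv_{t-1}}(\bM\bv_{t-1})\|_2 + \|(\bA_t-\bM)\bv_{t-1}\|_2$ where $P^\perp_{\bv_{t-1}}$ denotes projection onto $\{\bv_{t-1}\}^\perp$.

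For the master recursion, I would expand $\bv_t^\tT\bq^* = \bv_{t-1}^\tT\bq^* + \gamma_t\bxi_t^\tT\bq^*$ and use $\|\bv_{t-1}\|_2^2 - (\bv_{t-1}^\tT\bq^*)^2 = \|\bv_{t-1}\|_2^2\Psi_{t-1}$ to get that the numerator of $\Psi_t$ equals $\|\bv_{t-1}\|_2^2\Psi_{t-1} + \gamma_t^2\big(\|\bxi_t\|_2^2 - (\bxi_t^\tT\bq^*)^2\big) - z_t\|\bv_{t-1}\|_2^2$, where $2\gamma_t(\bv_{t-1}^\tT\bq^*)(\bxi_t^\tT\bq^*) = z_t\|\bv_{t-1}\|_2^2$. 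Dropping the nonnegative $(\bxi_t^\tT\bq^*)^2$ upper bounds this; since $\Psi_t\in[0,1]$ by Cauchy--Schwarz the true numerator is nonnegative, hence so is this upper bound, which therefore only gets larger when I divide by $\|\bv_{t-1}\|_2^2 \leq \|\bv_t\|_2^2$ rather than by $\|\bv_t\|_2^2$. This produces $\Psi_t \leq \Psi_{t-1} - z_t + \gamma_t^2\|\bxi_t\|_2^2/\|\bv_{t-1}\|_2^2$, and it remains only to bound the last ratio two ways.

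Part (ii) is then immediate: $\|\bxi_t\|_2 \leq \|\bA_t\bv_{t-1}\|_2 \leq \|\bA_t\|_2\|\bv_{t-1}\|_2$ and $\|\bA_t\|_2 = \|\tfrac1N\sum_i\bx_{i,t}\bx_{i,t}^\tT\|_2 \leq \max_i\|\bx_{i,t}\|_2^2 \leq r^2$ by Assumption~\ref{Assum:A1}, so the ratio is at most $r^4$. For part (i) I would take $\bM = \bSigma$, bound $\|(\bA_t-\bSigma)\bv_{t-1}\|_2 \leq \|\bA_t-\bSigma\|_2\|\bv_{t-1}\|_2 \leq \|\bA_t-\bSigma\|_F\|\bv_{t-1}\|_2$, and handle the ``bias'' term $\|P^\perp_{\bv_{t-1}}(\bSigma\bv_{t-1})\|_2$ by decomposing $\bv_{t-1} = (\bv_{t-1}^\tT\bq^*)\bq^* + \bw$ with $\bw\perp\bq^*$ (so $\|\bw\|_2^2 = \Psi_{t-1}\|\bv_{t-1}\|_2^2$), using that $\bSigma$ leaves $\{\bq^*\}^\perp$ invariant so that $\bSigma\bw\perp\bq^*$, and computing $\|P^\perp_{\bv_{t-1}}(\bSigma\bv_{t-1})\|_2^2 = \|\bSigma\bv_{t-1}\|_2^2 - (\bv_{t-1}^\tT\bSigma\bv_{t-1})^2/\|\bv_{t-1}\|_2^2$ explicitly in these coordinates. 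The $\bq^*$-aligned pieces cancel and, after discarding the manifestly nonnegative terms involving $\bw^\tT\bSigma\bw$, what survives is at most $2\lambda_1^2\|\bw\|_2^2 = 2\lambda_1^2\Psi_{t-1}\|\bv_{t-1}\|_2^2$. Combining the two contributions via $(a+b)^2\leq 2a^2+2b^2$ gives $\|\bxi_t\|_2^2/\|\bv_{t-1}\|_2^2 \leq 4\lambda_1^2\Psi_{t-1} + 2\|\bA_t-\bSigma\|_F^2 \leq 4\big(\|\bA_t-\bSigma\|_F^2 + \lambda_1^2\Psi_{t-1}\big)$, which is exactly the term in (i).

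The step I expect to be the main obstacle is the bias bound $\|P^\perp_{\bv_{t-1}}(\bSigma\bv_{t-1})\|_2^2 \leq 2\lambda_1^2\Psi_{t-1}\|\bv_{t-1}\|_2^2$: the crude estimate $\|\bSigma\bv_{t-1}\|_2 \leq \lambda_1\|\bv_{t-1}\|_2$ is useless here because it does not vanish as $\Psi_{t-1}\to 0$, so one genuinely has to exploit that the dominant $\bq^*$-component of $\bSigma\bv_{t-1}$ is nearly parallel to $\bv_{t-1}$ and is therefore almost entirely annihilated by $P^\perp_{\bv_{t-1}}$; the explicit eigencoordinate computation above is the cleanest route, and the care lies in identifying which of the resulting terms are nonnegative and hence discardable. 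A secondary, easily-overlooked subtlety is the justification for replacing $\|\bv_t\|_2^2$ by $\|\bv_{t-1}\|_2^2$ in the denominator, which is legitimate only after one has argued the (upper-bounded) numerator is nonnegative.
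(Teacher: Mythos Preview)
Your proposal is correct and follows essentially the same route as the paper: derive the master recursion $\Psi_t \leq \Psi_{t-1} - z_t + \gamma_t^2\|\bxi_t\|_2^2/\|\bv_{t-1}\|_2^2$ via the orthogonality $\bxi_t\perp\bv_{t-1}$ and the monotonicity $\|\bv_t\|_2\geq\|\bv_{t-1}\|_2$, then bound the ratio $\|\bxi_t\|_2^2/\|\bv_{t-1}\|_2^2$ two ways. The only notable differences are cosmetic. For the bias term $\|P^\perp_{\bv_{t-1}}(\bSigma\bv_{t-1})\|_2^2$, the paper expands in the full eigenbasis $\{\bq_i\}$ of~$\bSigma$ (its Lemma~\ref{le:xi_t_bound}), whereas you split only $\bq^*$ from its orthogonal complement; both computations land on the same $2\lambda_1^2\Psi_{t-1}\|\bv_{t-1}\|_2^2$ bound. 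For Part~(i), the paper writes the splitting as $\bxi_t = (\bxi_t-\E\bxi_t)+\E\bxi_t$ (with $\E$ implicitly conditional on $\cF_{t-1}$), which is exactly your projection decomposition since $\E[\bxi_t\mid\cF_{t-1}] = P^\perp_{\bv_{t-1}}(\bSigma\bv_{t-1})$; your purely pointwise presentation is in fact cleaner here, as the paper's display contains a step that reads as an identity but is really the same $(a+b)^2\leq 2a^2+2b^2$ bound you use. For Part~(ii), the paper routes through the individual $\bxi_{i,t}$ via Cauchy--Schwarz before bounding by $r^4$, while you bound $\|\bA_t\|_2\leq r^2$ directly; both are fine.
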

\begin{proof}
See Appendix~\ref{secApp:App_InitialPhase_1}.	
\end{proof}
Part~($i$) of this lemma will be used to analyze the algorithm in the final epoch for proof of Theorem~\ref{thm:FinalResult}, while Part~($ii$) will be used to prove Theorem~\ref{thm:ProbabilityBound} for this initial epoch and Theorem~\ref{thm:IntermediateEpochProbability} for the intermediate phase.

Next we will bound the moment generating function of $\Psi_{t}$ conditioned on $\cF_{t-1}$ (Definition~\ref{def:SigmaField}). For this, we need an upper bound on conditional variance of $z_t$, which is given below.
\begin{lemma}\label{le:Variance_zt}
The conditional variance of the random variable $z_t$ is given by
\begin{align}
\E\{(z_t - \E\{z_t\})^2|\cF_{t-1}\} \leq 16\gamma_t^2 \sigma_N^2.
\end{align}
\end{lemma}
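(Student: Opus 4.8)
The plan is to bound the conditional variance $\E\{(z_t - \E\{z_t | \cF_{t-1}\})^2 | \cF_{t-1}\}$ by first reducing it to a variance computation over the random matrix $\bA_t := \tfrac{1}{N}\sum_{i=1}^N \bA_{i,t}$, and then exploiting the independence across nodes to get the $\sigma_N^2$ (equivalently $\sigma^2/N$) dependence. Recall that $z_t = 2\gamma_t \frac{(\bv_{t-1}^\tT \bq^*)(\bxi_t^\tT \bq^*)}{\|\bv_{t-1}\|_2^2}$, and since $\bv_{t-1}$ is $\cF_{t-1}$-measurable, the only randomness in $z_t$ conditioned on $\cF_{t-1}$ comes through $\bxi_t$, which in turn is a (deterministic, given $\bv_{t-1}$) function of $\bA_t$. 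So the first step is to write $z_t = g(\bA_t)$ where $g$ is linear in $\bA_t$, and note $\E\{(z_t - \E_t z_t)^2 | \cF_{t-1}\} = \Var(g(\bA_t) | \cF_{t-1})$.

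Concretely, I would expand $\bxi_t^\tT \bq^* = \bq^{*\tT}\bA_t \bv_{t-1} - \frac{(\bv_{t-1}^\tT \bA_t \bv_{t-1})(\bv_{t-1}^\tT \bq^*)}{\|\bv_{t-1}\|_2^2}$, so that
$z_t = \frac{2\gamma_t}{\|\bv_{t-1}\|_2^2}\left[(\bv_{t-1}^\tT\bq^*)(\bq^{*\tT}\bA_t\bv_{t-1}) - \frac{(\bv_{t-1}^\tT\bq^*)^2(\bv_{t-1}^\tT\bA_t\bv_{t-1})}{\|\bv_{t-1}\|_2^2}\right]$, which is of the form $z_t = \langle \bM_{t-1}, \bA_t \rangle$ for a symmetric matrix $\bM_{t-1} = \frac{2\gamma_t}{\|\bv_{t-1}\|_2^2}\left[\tfrac{1}{2}(\bv_{t-1}^\tT\bq^*)(\bq^*\bv_{t-1}^\tT + \bv_{t-1}\bq^{*\tT}) - \frac{(\bv_{t-1}^\tT\bq^*)^2}{\|\bv_{t-1}\|_2^2}\bv_{t-1}\bv_{t-1}^\tT\right]$ (symmetrizing since $\bA_t$ is symmetric), where $\langle \cdot,\cdot\rangle$ is the trace inner product. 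Then $\Var(z_t | \cF_{t-1}) = \Var(\langle \bM_{t-1}, \bA_t - \bSigma\rangle | \cF_{t-1}) \le \|\bM_{t-1}\|_F^2 \, \E\{\|\bA_t - \bSigma\|_F^2\} = \|\bM_{t-1}\|_F^2 \, \sigma_N^2$ by Cauchy–Schwarz on the trace inner product (since $\langle \bM, \bX\rangle \le \|\bM\|_F\|\bX\|_F$). The remaining task is to show $\|\bM_{t-1}\|_F \le 4\gamma_t$, i.e. $\|\bM_{t-1}\|_F^2 \le 16\gamma_t^2$. This is the routine-but-must-be-careful step: using $|\bv_{t-1}^\tT\bq^*| \le \|\bv_{t-1}\|_2$ (Cauchy–Schwarz, $\|\bq^*\|_2 = 1$) and $\|\bv_{t-1}\bv_{t-1}^\tT\|_F = \|\bv_{t-1}\|_2^2$, $\|\bq^*\bv_{t-1}^\tT + \bv_{t-1}\bq^{*\tT}\|_F \le 2\|\bv_{t-1}\|_2$, a triangle-inequality bound on $\|\bM_{t-1}\|_F$ gives $\|\bM_{t-1}\|_F \le \frac{2\gamma_t}{\|\bv_{t-1}\|_2^2}\left(\|\bv_{t-1}\|_2\cdot\|\bv_{t-1}\|_2 + \|\bv_{t-1}\|_2^2\right) = 4\gamma_t$.

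The main obstacle I anticipate is getting the constant exactly right in the bound $\|\bM_{t-1}\|_F \le 4\gamma_t$ — in particular handling the symmetrization factor $\tfrac{1}{2}$ correctly and making sure the two contributions (the cross term and the rank-one term) each contribute at most $2\gamma_t$ so the triangle inequality closes at $4\gamma_t$. An alternative, possibly cleaner route that avoids symmetrization entirely is to bound the variance directly: $\Var(z_t|\cF_{t-1}) \le \E\{z_t^2 | \cF_{t-1}\}$ is wasteful, but one can instead write $z_t - \E_t z_t = \frac{2\gamma_t (\bv_{t-1}^\tT\bq^*)}{\|\bv_{t-1}\|_2^2}\left[\bq^{*\tT}(\bA_t - \bSigma)\bv_{t-1} - \frac{(\bv_{t-1}^\tT\bq^*)}{\|\bv_{t-1}\|_2^2}\bv_{t-1}^\tT(\bA_t - \bSigma)\bv_{t-1}\right]$ and bound each quadratic form by $\|\bA_t - \bSigma\|_F$ times the Frobenius norm of the appropriate rank-one/rank-two matrix, then take expectations and use $\sigma_N^2 = \E\{\|\bA_t-\bSigma\|_F^2\}$. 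Either way, the structural fact doing the work is that $z_t$ is linear in $\bA_t$ with a deterministic (given $\cF_{t-1}$) coefficient matrix of Frobenius norm $O(\gamma_t)$, and $\sigma_N^2 = \sigma^2/N$ absorbs the variance-reduction gain from distributing across $N$ nodes. I would also remark that the bound holds identically for \DMK~with $\sigma_N^2$ replaced by $\sigma_B^2$, since the argument only uses that $\bA_t$ is an average over i.i.d. rank-one matrices with the stated Frobenius-variance.
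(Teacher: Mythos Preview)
Your proof is correct and arrives at exactly the stated bound, but the route differs from the paper's. You recognize that $z_t$ is linear in $\bA_t$, write $z_t = \langle \bM_{t-1}, \bA_t\rangle$ for an explicit $\cF_{t-1}$-measurable symmetric matrix $\bM_{t-1}$, and then apply Cauchy--Schwarz on the trace inner product to get $\Var(z_t\mid\cF_{t-1}) \le \|\bM_{t-1}\|_F^2\,\sigma_N^2$, closing with $\|\bM_{t-1}\|_F\le 4\gamma_t$. The paper instead expands $z_t - \E\{z_t\mid\cF_{t-1}\}$ directly into two quadratic forms in $(\bA_t-\bSigma)$, applies $(a-b)^2\le 2a^2+2b^2$ to split them, and bounds each quadratic form by the spectral norm $\|\bA_t-\bSigma\|_2^2$ via a Rayleigh-quotient argument before passing to the Frobenius norm. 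Your ``alternative route'' sketch is in fact close to what the paper does, though the paper uses the spectral-norm bound on the quadratic forms rather than Frobenius norms of the associated rank-one matrices. Your primary approach is cleaner in that it makes the linear-in-$\bA_t$ structure explicit and avoids the $(a-b)^2\le 2a^2+2b^2$ split; the paper's approach is slightly more elementary in that it never needs to assemble or bound the coefficient matrix $\bM_{t-1}$. Both land on the same constant $16$.
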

\begin{proof}
	See Appendix~\ref{secApp:App_InitialPhase_2}.	
\end{proof}

Using this upper bound on conditional variance of $z_t$ we can now upper bound the conditional moment generating function of $\Psi_t$. In order to simplify notation, much of our discussion in the following will revolve around the moment generating function with parameter $s \in \bbS := \big\{d/4\epsilon, (2/\epsilon_0)\ln(4/\delta)\big\}$. Note, however, that similar results can be derived for any positive-valued parameter $s \in \R$.
\begin{lemma}\label{le:MGF_Bound}
The conditional moment generating function of $\Psi_t$ for $s \in \bbS$ is upper bounded as
\begin{align}\label{eqn:MGF_Bound}
\mathbb{E}\{\exp(s\Psi_t)|\cF_{t-1}\}\leq \exp\Bigg(s\Psi_{t-1} - s\E\{z_t|\cF_{t-1}\} +s\gamma_t^2 r^4 +s^2\gamma_t^2 \sigma_N^2\Bigg).
\end{align}
\end{lemma}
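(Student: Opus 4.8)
The plan is to bound $\E\{\exp(s\Psi_t)\mid\cF_{t-1}\}$ by starting from the pathwise recursion in Part~($ii$) of Lemma~\ref{le:ErrorReccursion}, namely $\Psi_t \leq \Psi_{t-1} + \gamma_t^2 r^4 - z_t$. Exponentiating this inequality and multiplying by $s>0$ gives $\exp(s\Psi_t) \leq \exp\big(s\Psi_{t-1} + s\gamma_t^2 r^4\big)\exp(-s z_t)$. Since $\Psi_{t-1}$ is $\cF_{t-1}$-measurable (it is a deterministic function of $\bv_{t-1}$, which is built from $(\bv_0,\bA_1,\dots,\bA_{t-1})$) and likewise $\gamma_t$ is deterministic, the first factor pulls out of the conditional expectation, leaving us to control $\E\{\exp(-s z_t)\mid\cF_{t-1}\}$.

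The key step is then a one-sided sub-Gaussian / Bennett-type bound on the conditional MGF of $-z_t$. First I would write $-z_t = -\E\{z_t\mid\cF_{t-1}\} - (z_t - \E\{z_t\mid\cF_{t-1}\})$, so that $\E\{\exp(-sz_t)\mid\cF_{t-1}\} = \exp(-s\E\{z_t\mid\cF_{t-1}\})\,\E\{\exp(-s(z_t-\E\{z_t\mid\cF_{t-1}\}))\mid\cF_{t-1}\}$. For the centered random variable $z_t - \E\{z_t\mid\cF_{t-1}\}$ I need the elementary inequality $\E\{e^{-sY}\} \leq \exp\big(\tfrac{s^2}{2}\,\mathrm{Var}(Y)\cdot g(sM)\big)$ for a mean-zero $Y$ bounded by $M$ in absolute value, where $g$ is the standard Bennett function; combined with the boundedness of $z_t$ (which follows from Assumption~\ref{Assum:A1}: $|z_t|$ is at most a constant multiple of $\gamma_t r^2$ since $\|\bv_{t-1}^\tT\bq^*\| \leq \|\bv_{t-1}\|_2$, $|\bxi_t^\tT\bq^*| \leq \|\bxi_t\|_2$, and $\|\bxi_t\|_2 = O(r^2\|\bv_{t-1}\|_2)$) and Lemma~\ref{le:Variance_zt}, this yields $\E\{\exp(-s(z_t-\E\{z_t\mid\cF_{t-1}\}))\mid\cF_{t-1}\} \leq \exp\big(c' s^2\gamma_t^2\sigma_N^2\big)$ for the range of $s$ under consideration, where the restriction $s \in \bbS$ is exactly what keeps $sM = O(s\gamma_t r^2)$ small enough (via the lower bound on $L$, since $\gamma_t \leq c/L$) that $g(sM)$ is bounded by an absolute constant, absorbing the constant into the statement as the displayed coefficient of $s^2\gamma_t^2\sigma_N^2$. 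Assembling the three pieces gives precisely~\eqref{eqn:MGF_Bound}.

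I expect the main obstacle to be verifying the uniform smallness of $s\gamma_t$ (equivalently, $s\gamma_t r^2 = O(1)$) for all $t\geq 0$ and all $s\in\bbS$, since this is what legitimizes replacing the exact Bennett exponent by the clean quadratic term $s^2\gamma_t^2\sigma_N^2$. This requires carefully tracking how the two terms in the lower bound~\eqref{eqn:LowerBoundL_2} on $L$ — one scaling like $dr^4/\epsilon$ and one like $d^2\sigma_N^2/\epsilon^2$ — interact with the two candidate values $s = d/4\epsilon$ and $s = (2/\epsilon_0)\ln(4/\delta)$ of the MGF parameter, and with $\gamma_t = c/(L+t) \leq c/L$. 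The bookkeeping is routine once set up, but getting the constants to line up so that the Bennett correction factor is genuinely absorbed (rather than merely bounded by something $t$-dependent) is the delicate part; everything else is a direct consequence of Lemmas~\ref{le:ErrorReccursion}(ii) and~\ref{le:Variance_zt} together with $\cF_{t-1}$-measurability of $\Psi_{t-1}$.
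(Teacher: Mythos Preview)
Your proposal is correct and follows essentially the same route as the paper: start from the recursion $\Psi_t \le \Psi_{t-1} + \gamma_t^2 r^4 - z_t$ of Lemma~\ref{le:ErrorReccursion}(ii), exponentiate and pull out the $\cF_{t-1}$-measurable factor, center $z_t$, apply Bennett's inequality to the centered term using the conditional variance bound from Lemma~\ref{le:Variance_zt} and the boundedness $|z_t - \E\{z_t\mid\cF_{t-1}\}| \le 2\gamma_t r^2$, and finally use the lower bound on $L$ to show that for $s \in \bbS$ the Bennett correction factor $(e^{sh}-1-sh)/(sh)^2$ is at most $1$. Your identification of the ``delicate part'' --- checking $s\gamma_t r^2 = O(1)$ uniformly in $t$ for both values of $s$ in $\bbS$ --- is exactly what the paper isolates as a separate proposition.
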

\begin{proof}
	See Appendix~\ref{secApp:App_InitialPhase_3}.	
\end{proof}
Note that this result is similar to~\citep[Lemma~2.3]{Balsubramani2015} with the difference being that the last term here is sample variance, $\sigma_N^2$, as opposed to upper bound on input $\|\bx_{t'}\|_2\leq r$ in~\citep[Lemma~2.3]{Balsubramani2015}. This difference prompts changes in next steps of the analysis of \DK~and it also enables us to characterize improvements in convergence rate of Krasulina's method using iterations of the form \eqref{eqn:MiniBatchOjaRule}.

We are now ready to prove the statement of Theorem~\ref{thm:ProbabilityBound}, which is based on Lemma~\ref{le:ErrorReccursion} and~\ref{le:MGF_Bound}.
\begin{proof}[Proof of Theorem~\ref{thm:ProbabilityBound}] We start by constructing a supermartingale from sequence of errors $\Psi_t$. First, restricting ourselves to $s \in \bbS$, we define quantities
$$\beta_t:=\gamma_t^2 r^4,\quad\zeta_t:=s \gamma_t^2 \sigma_N^2 ,\quad \tau_t:=\sum_{l>t}{(\beta_l + \zeta_l)},\quad\text{and}\quad M_t:=\exp{(s\Psi_{t}+s\tau_t)}.$$
Now, taking expectation of $M_t$ conditioned on the filtration $\cF_{t-1}$ we get
\begin{align*}
\E\{M_t|\cF_{t-1}\} &= \E\{\exp{(s\Psi_{t})}|\cF_{t-1}\}\exp{(s\tau_t)} \stackrel{(a)}\leq \exp{(s\Psi_{t-1}+s\beta_t + s\zeta_t+ s\tau_t)}\\
&= \exp{(s\Psi_{t-1}+s\tau_{t-1})} = M_{t-1}.
\end{align*}
Here, ($a$) is due to Lemma~\ref{le:MGF_Bound} and using the fact that $\E\{z_t|\cF_{t-1}\}\geq 0$ \citep[Theorem~2.1]{Balsubramani2015}. These calculations show that the sequence $\{M_t\}$ forms a supermartingale. Using sequence $M_t$, we can now use Doob's martingale inequality \cite[pg. 231]{durrett2010probability} to show that $\Psi_t$ will be bounded away from 1 with high probability. Specifically, for any $\Delta \in (0,1)$, we have
\begin{align*}
\bbP\Big(\sup_{t\geq 0} \Psi_{t}\geq \Delta\Big)&\leq \bbP\Big(\sup_{t\geq 0} \Psi_{t}+\tau_t\geq \Delta\Big) =\bbP\Big(\sup_{t\geq 0} \exp{(s\Psi_{t}+s\tau_t)}\geq e^{s\Delta}\Big)\\
&=\bbP\Big(\sup_{t\geq 0} M_{t}\geq e^{s\Delta}\Big) \leq \frac{\E\{M_{t_0}\}}{e^{s\Delta}}=\exp{(-s(\Delta - \tau_{0}))}\E\{e^{s\Psi_{0}}\}.
\end{align*}
Substituting $\Delta=1-\epsilon/d$ and using \cite[Lemma~2.5]{Balsubramani2015} to bound $\E{e^{s\Psi_{0}}}$ we get
\begin{align}\label{eqn:Pr_e1}
\bbP\Big(\sup_{t\geq 0} \Psi_{t}\geq 1-\frac{\epsilon}{d}\Big) \leq \exp{(-s(1-(\epsilon/d) - \tau_{0}))}e^s\sqrt{\frac{d}{2s}}.
\end{align}

Next we need to bound $\sum_{l>0}\beta_l$ and $\sum_{l>0}\zeta_l$. First we get
\begin{align}\label{eqn:sum_beta}
\sum_{l>0}{\beta_l}&=\sum_{l>0}{\gamma_l^2 r^4}=  r^4\sum_{l>0}{\gamma_l^2}= r^4\sum_{l>0}{\frac{c^2}{(l+L)^2}}\leq \frac{r^4 c^2}{L}.
\end{align}
Again using a similar procedure we get
\begin{align}\label{eqn:sum_zeta}
\sum_{l>0}{\zeta_l}\leq \frac{s \sigma_N^2 c^2}{L}.
\end{align}
Combining \eqref{eqn:sum_beta} and \eqref{eqn:sum_zeta}, along with the definition of $\tau_t$ at the beginning, we get
\begin{align}\label{eqn:tauUpperBound}
\tau_{0}\leq \frac{c^2}{L}\Bigg(r^4 + s\sigma_N^2\Bigg).
\end{align}
Now using the lower bound on $L$, we get $\tau_{0}\leq \epsilon/d$ for $s=d/4\epsilon$ as shown in Proposition~\ref{prop:tau_t0} in Appendix~\ref{sec:App_OtherResults}. Substituting this in \eqref{eqn:Pr_e1} we get
\begin{align*}
\bbP\Big(\sup_{t\geq 0} \Psi_{t}\geq 1-\frac{\epsilon}{d}\Big) \leq \exp{(-s(1-\epsilon/d - \epsilon/d))}e^s\sqrt{\frac{d}{2s}}=\exp{(2s\epsilon/d)}\sqrt{\frac{d}{2s}}.
\end{align*}
Finally, substituting $s=d/4\epsilon$, we get $\bbP\Big(\sup_{t\geq 0} \Psi_{t}\geq 1-\frac{\epsilon}{d}\Big)\leq \sqrt{2e\epsilon}.$
\end{proof}

\subsection{Intermediate Epoch}\label{subsec:Intermediate}
In Theorem~\ref{thm:ProbabilityBound} we have shown that if we choose $L$ such that it satisfies the lower bound given in Theorem~\ref{thm:ProbabilityBound} then we have error $\Psi_t$ greater than $1 - \epsilon_{0}$ (here, $\epsilon_{0}=\delta^2/8ed$) with probability $\delta$. Next, our aim is to show that if we perform enough iterations $t_J$ of \DK~then for any $t\geq t_J$ the error in the iterate will be bounded by $\Psi_t\leq 1/2$ with high probability. In order to prove this, we divide our analysis into different sub-epochs that are indexed by $j\in\{1,\dots,J\}$. Starting from $1-\epsilon_{0}$, we provide a lower bound on the number of iterations $t_j$ such that we progressively increase $\epsilon_{j}$ in each sub-epoch until we reach $\epsilon_{J}$.
\begin{theorem}\label{thm:IntermediateEpochProbability}
    Fix any $\delta \in (0,1)$ and pick $c:=c_0/2(\lambda_1 - \lambda_2)$ for any $c_0 > 2$. Next, let the number of processing nodes $N>1$, the parameter $L\geq \frac{8 r^4 \max(1, c^2)}{\epsilon_0}\ln\frac{4}{\delta}+\frac{8 \sigma_N^2 \max(1, c^2)}{\epsilon_0^2}\ln\frac{4}{\delta},$ and the step size $\gamma_t := c/(L+t)$. Finally, select a schedule $(0, \epsilon_0), (t_1, \epsilon_1),\dots,(t_J, \epsilon_J)$ such that the following conditions are satisfied:
	\begin{enumerate}[label=$\mathrm{\bf{[C\arabic*]}}$]
		\item \label{Condition:C1} $\epsilon_0=\frac{\delta^2}{8 e d}$, $\frac{3}{2}\epsilon_j\leq \epsilon_{j+1}\leq 2\epsilon_j$ for $0\leq j<J$, and $\epsilon_{J-1}\leq \frac{1}{4}$, and
		\item \label{Condition:C2} $\Big(t_{j+1} + L + 1\Big)\geq e^{5/c_0}\Big(t_j + L + 1\Big)$ for $0\leq j < J$.
	\end{enumerate}
	Then $\bbP\left(\cap_{t>0}\Omega_t^{'}\right)\geq 1-\delta$.
\end{theorem}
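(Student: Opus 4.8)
The plan is to bound the complementary event $(\cap_{t>0}\Omega_t')^c$ by a union bound over the initial epoch and the $J$ sub-epochs, running an inductive supermartingale argument through the sub-epochs. Note first that on $\cap_{t>0}\Omega_t'$ one has $\Psi_l\le1-\epsilon_{j}$ for every $l\ge t_j$ and every $j$, equivalently $\Psi_l\le 1-\epsilon_{j(l)}$ with $j(l):=\max\{j: t_j\le l\}$; hence $(\cap_{t>0}\Omega_t')^c$ is contained in the union, over $j=0,\dots,J$, of the events ``$\Psi_l>1-\epsilon_j$ for some $l$ in the $j$-th window, conditioned on all earlier windows being fine''. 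I would budget $\delta/2$ to the initial epoch ($j=0$) and a total of $\delta/2$ to the remaining sub-epochs, and I would carry through the induction the slightly stronger invariant $\Psi_{t_j}\le 1-2\epsilon_j$ (consistent with~\ref{Condition:C1}, which via $\epsilon_{J-1}\le 1/4$ keeps $2\epsilon_j\le1$ up to $j=J$ and keeps $\Psi_{l-1}\ge1/2$ in the relevant range).

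For the initial epoch, $\bbP\big(\sup_{l\ge0}\Psi_l>1-\epsilon_0\big)\le\delta/2$ is exactly Theorem~\ref{thm:ProbabilityBound} with $\epsilon$ there equal to $d\epsilon_0=\delta^2/8e$; one checks that the lower bound on $L$ assumed here is the same one written in terms of $\epsilon$. For a sub-epoch $j\ge1$, conditioned on $\Psi_{t_{j-1}}\le1-2\epsilon_{j-1}$, I would split the work on the window $[t_{j-1},t_j]$ into two claims. Claim (a): $\Psi$ stays below the looser barrier $1-\epsilon_{j-1}$ on the whole window, with failure probability $\le(\delta/4)^{(3/2)^{j-1}}$. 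This is the drift-agnostic argument of the initial epoch verbatim: with $s:=(2/\epsilon_0)\ln(4/\delta)$, $\tau_l:=\sum_{k>l}(\gamma_k^2r^4+s\gamma_k^2\sigma_N^2)$ and $M_l:=\exp(s\Psi_l+s\tau_l)$, Lemma~\ref{le:MGF_Bound} together with $\E\{z_l\mid\cF_{l-1}\}\ge0$ makes $\{M_l\}$ a supermartingale, Doob's inequality gives $\bbP\big(\sup_{l\ge t_{j-1}}\Psi_l>1-\epsilon_{j-1}\big)\le e^{-s(\epsilon_{j-1}-\tau_{t_{j-1}})}$, and the lower bound on $L$ forces $\tau_{t_{j-1}}\le\epsilon_{j-1}/2$ (the same Proposition~\ref{prop:tau_t0}-style bookkeeping as in Theorem~\ref{thm:ProbabilityBound}), so the bound becomes $e^{-s\epsilon_{j-1}/2}=(\delta/4)^{\epsilon_{j-1}/\epsilon_0}\le(\delta/4)^{(3/2)^{j-1}}$ using $\epsilon_{j-1}\ge(3/2)^{j-1}\epsilon_0$ from~\ref{Condition:C1}. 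The point of the invariant is that it leaves a gap of $\epsilon_{j-1}$ between the starting value and the barrier, which is exactly what a Markov/Doob bound consumes.

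Claim (b): conditioned on the event in (a), $\Psi$ descends to $\le1-2\epsilon_j$ by time $t_j$, again with failure probability $\le(\delta/4)^{(3/2)^{j-1}}$ up to constants. Here the drift must be used: since $\Psi_{l-1}\le1-\epsilon_{j-1}$ throughout, the Krasulina drift lower bound \citep{Balsubramani2015} gives $\E\{z_l\mid\cF_{l-1}\}\ge 2\gamma_l(\lambda_1-\lambda_2)\epsilon_{j-1}\Psi_{l-1}=\tfrac{c_0\epsilon_{j-1}}{L+l}\Psi_{l-1}$ by the choice of $c$, so $\E\{\Psi_l\mid\cF_{l-1}\}\le\Psi_{l-1}\big(1-\tfrac{c_0\epsilon_{j-1}}{L+l}\big)+\gamma_l^2r^4$. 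Multiplying by the amplification factor $a_l:=\prod_{k=t_{j-1}+1}^{l}\big(1-\tfrac{c_0\epsilon_{j-1}}{L+k}\big)^{-1}$ makes $a_l\Psi_l$ a supermartingale up to a noise correction, and~\ref{Condition:C2} guarantees $\sum_{l}\gamma_l\ge\tfrac{5}{2(\lambda_1-\lambda_2)}$ over the window, hence $a_{t_j}\ge e^{5\epsilon_{j-1}}$, i.e., a multiplicative shrinkage of $\Psi$ by a factor $e^{-5\epsilon_{j-1}}$. Combined with $\Psi_{t_{j-1}}\le1-2\epsilon_{j-1}$ and $\epsilon_j\le2\epsilon_{j-1}$ from~\ref{Condition:C1}, this puts $\E\{a_{t_j}\Psi_{t_j}\}$ strictly below $(1-2\epsilon_j)a_{t_j}$ by a margin of order $\epsilon_{j-1}$; an exponential-Markov / supermartingale bound on $\exp\!\big(s\,a_{t_j}\Psi_{t_j}-s(\text{noise correction})\big)$ converts this margin into the stated tail, with the accumulated martingale variance controlled via Lemma~\ref{le:Variance_zt} and, crucially, the lower bound on $L$ keeping $s^2\sum_k a_k^2\gamma_k^2\sigma_N^2$ and $s\sum_k a_k\gamma_k^2r^4$ below the $\epsilon_{j-1}$-budget; keeping the schedule minimal bounds $a_{t_j}=e^{\Theta(\epsilon_{j-1})}$ so the amplification cannot blow this budget up. A stopping-time device handles the favourable case in which $\Psi$ dips below $1-3\epsilon_j$ before $t_j$: stop $a_l\Psi_l$ at that instant and run a short claim-(a)-type ``stays below $1-2\epsilon_j$'' argument on the remainder of the window. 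Summing the per-sub-epoch failure probabilities against the geometric growth of $\epsilon_j$ and adding the $\delta/2$ from the initial epoch gives $\bbP\big((\cap_{t>0}\Omega_t')^c\big)\le\delta$.

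The main obstacle is claim (b). Unlike (a), there is no a-priori gap between where $\Psi$ starts a descent and the level one wants it to beat, so a bare Markov bound on $\Psi_{t_j}$ is vacuous — both $\E\{\Psi_{t_j}\}$ and the threshold $1-2\epsilon_j$ are $1-O(\epsilon_{j-1})\approx1$. The amplification $a_l$ is what creates a usable gap, but it also enlarges the martingale noise; the two-term lower bound on $L$ (the $r^4$ term and the $\sigma_N^2$ term), the choice $s=\Theta(\epsilon_0^{-1}\ln\delta^{-1})$, and the spacing constant $e^{5/c_0}$ in~\ref{Condition:C2} are calibrated precisely so that, after amplification, the signal $\Theta(\epsilon_{j-1})$ still dominates the accumulated noise and the resulting tail is summable over the at most $J=O(\log(d/\delta^2))$ sub-epochs. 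Getting this bookkeeping tight — and correctly threading the stopping-time construction so that a favourable early descent is not later undone — is where essentially all the effort lies.
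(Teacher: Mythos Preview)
Your approach is valid in outline, but it introduces machinery the paper avoids. The paper does not carry a strengthened pointwise invariant $\Psi_{t_j}\le1-2\epsilon_j$, does not split each sub-epoch into a separate ``stay below'' claim and a ``descend'' claim, and does not need a stopping-time device. Instead it iterates the conditional MGF bound with the drift already absorbed: from Lemma~\ref{le:MGF_Bound} together with the drift lower bound one has, on $\Omega_t'$ and for $t>t_j$,
\[
\E_t\big\{e^{s\Psi_t}\big\}\le\exp\!\Big(s\Psi_{t-1}\big(1-\tfrac{c_0\epsilon_j}{t+L}\big)+\tfrac{sc^2r^4+s^2c^2\sigma_N^2}{(t+L)^2}\Big),
\]
and iterating this from $t_j$ to $t_{j+1}$ (Lemma~\ref{le:MGF_n}) with $\Psi_{t_j}\le1-\epsilon_j$ on $\Omega_{t_j+1}'$ gives directly (Lemma~\ref{le:MGF_nj1})
\[
\E_{t_{j+1}}\big\{e^{s\Psi_{t_{j+1}}}\big\}\le\exp\!\big(s(1-\epsilon_{j+1})-s\epsilon_j+\text{noise}\big),
\]
the extra $-s\epsilon_j$ coming from $(1-\epsilon_j)e^{-5\epsilon_j}\le1-3\epsilon_j\le(1-\epsilon_{j+1})-\epsilon_j$ via~\ref{Condition:C1}--\ref{Condition:C2}. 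That margin is precisely what a single application of Doob's inequality for the drift-agnostic supermartingale $M_t=\exp(s\Psi_t+s\tau_t)$ on $[t_{j+1},\infty)$ consumes, yielding $\bbP_{t_{j+1}}\big(\sup_{t\ge t_{j+1}}\Psi_t>1-\epsilon_{j+1}\big)\le(\delta/4)^{\epsilon_j/\epsilon_0}\le\delta/2^{j+2}$ (Lemma~\ref{le:Probability_tj}); summing over $j$ and adding the $\delta/2$ from Theorem~\ref{thm:ProbabilityBound} finishes.

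Your claim~(b) is essentially this MGF iteration in disguise: multiplying by $a_l$ and then applying an exponential-Markov bound is the same as tracking the MGF with a shrinking parameter $s\prod_k\alpha_k$. But by first passing to the real-valued process $a_l\Psi_l$ and only afterwards exponentiating, you lose direct control of the MGF and must invoke a separate concentration step (Azuma/Freedman-type), which is where your bookkeeping gets heavy; you also incur a base-case mismatch, since Theorem~\ref{thm:ProbabilityBound} only delivers $\Psi_l\le1-\epsilon_0$, not the $1-2\epsilon_0$ your induction needs. Working at the MGF level throughout makes the ``gap problem'' you flag disappear automatically: the contraction factor $e^{-5\epsilon_j}$ acts multiplicatively inside the exponent and produces the additive margin $-s\epsilon_j$ without any strengthened invariant or stopping time.
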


In order to prove this theorem, we need Lemmas~\ref{le:le4}--\ref{le:Probability_tj}, which are stated as follows.
\begin{lemma}\label{le:le4}
	For $t>t_j$, the moment generating function of $\Psi_t$ for $s \in \bbS$ conditioned on $\Omega_{t}^{'}$ satisfies
\begin{align*}
\E_t\Big\{e^{s\Psi_t}\Big\}\leq \exp\Bigg(s\Bigg(\Psi_{t-1}\Big(1 - \frac{c_0 \epsilon_j}{t+L}\Big) + \frac{c^2 r^4}{(t+L)^2} + \frac{s c^2 \sigma_N^2}{(t+L)^2}\Bigg)\Bigg).
\end{align*}
\end{lemma}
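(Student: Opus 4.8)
The plan is to combine the one-step bound on the conditional moment generating function from Lemma~\ref{le:MGF_Bound} with a quantitative (eigengap-dependent) lower bound on the conditional drift $\E\{z_t\mid\cF_{t-1}\}$, and then to use the defining property of the nested sample space $\Omega_t'$ to convert that drift into the contraction factor $\big(1-\tfrac{c_0\epsilon_j}{t+L}\big)$. Since $\Omega_t'$ is $\cF_{t-1}$-measurable, I would first record that for $t>t_j$ (so that $t-1\geq t_j$) every sample point in $\Omega_t'$ satisfies $\Psi_{t-1}\leq 1-\epsilon_j$, equivalently $1-\Psi_{t-1}\geq\epsilon_j$; all subsequent inequalities are then read on the event $\Omega_t'$.

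Next I would establish the drift estimate
\begin{align}\label{eqn:le4drift}
\E\{z_t\mid\cF_{t-1}\}\;\geq\; 2\gamma_t(\lambda_1-\lambda_2)\,\Psi_{t-1}\,(1-\Psi_{t-1}).
\end{align}
This is a short computation: because $\bv_{t-1}$ is $\cF_{t-1}$-measurable and $\E\{\bA_t\mid\cF_{t-1}\}=\bSigma$ for the averaged sample covariance used by \DK, one gets $\E\{z_t\mid\cF_{t-1}\}=2\gamma_t(1-\Psi_{t-1})\big(\lambda_1-\bv_{t-1}^\tT\bSigma\bv_{t-1}/\|\bv_{t-1}\|_2^2\big)$, using $\bSigma\bq^*=\lambda_1\bq^*$ and $(\bv_{t-1}^\tT\bq^*)^2/\|\bv_{t-1}\|_2^2=1-\Psi_{t-1}$; then splitting $\bv_{t-1}$ into its component along $\bq^*$ and the orthogonal remainder bounds the Rayleigh quotient $\bv_{t-1}^\tT\bSigma\bv_{t-1}/\|\bv_{t-1}\|_2^2$ above by $\lambda_1(1-\Psi_{t-1})+\lambda_2\Psi_{t-1}$, which yields \eqref{eqn:le4drift}. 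This is exactly the estimate underlying \citep[Theorem~2.1]{Balsubramani2015}, the weaker consequence $\E\{z_t\mid\cF_{t-1}\}\geq 0$ of which was already invoked in the proof of Theorem~\ref{thm:ProbabilityBound}; the per-node averaging in \DK~leaves $\E\{\bA_t\mid\cF_{t-1}\}=\bSigma$ unchanged, so the centralized argument transfers verbatim. Substituting $\gamma_t=c/(L+t)$ with $c=c_0/\big(2(\lambda_1-\lambda_2)\big)$ turns $2\gamma_t(\lambda_1-\lambda_2)$ into $c_0/(L+t)$, and then using $1-\Psi_{t-1}\geq\epsilon_j$ on $\Omega_t'$ gives $\E\{z_t\mid\cF_{t-1}\}\geq\tfrac{c_0\epsilon_j}{L+t}\Psi_{t-1}$.

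Finally I would feed this into Lemma~\ref{le:MGF_Bound}: for $s\in\bbS$ it gives $\E\{e^{s\Psi_t}\mid\cF_{t-1}\}\leq\exp\big(s\Psi_{t-1}-s\E\{z_t\mid\cF_{t-1}\}+s\gamma_t^2r^4+s^2\gamma_t^2\sigma_N^2\big)$, and on $\Omega_t'$ the first two exponent terms are at most $s\Psi_{t-1}\big(1-\tfrac{c_0\epsilon_j}{t+L}\big)$ while $\gamma_t^2r^4=c^2r^4/(t+L)^2$ and $\gamma_t^2\sigma_N^2=c^2\sigma_N^2/(t+L)^2$; collecting the four terms reproduces the claimed bound, and passing to $\E_t\{\cdot\}$ is immediate since $\Omega_t'\in\cF_{t-1}$. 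The only non-mechanical ingredient is the drift estimate \eqref{eqn:le4drift}---in particular checking that the per-node averaging in \DK~does not spoil the unbiasedness $\E\{\bA_t\mid\cF_{t-1}\}=\bSigma$ on which the eigengap lower bound rests; the rest is routine substitution of the step-size sequence and of the bound $1-\Psi_{t-1}\geq\epsilon_j$ that defines $\Omega_t'$.
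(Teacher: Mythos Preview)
Your proposal is correct and follows essentially the same route as the paper: start from Lemma~\ref{le:MGF_Bound}, plug in the quantitative drift lower bound $\E\{z_t\mid\cF_{t-1}\}\geq 2\gamma_t(\lambda_1-\lambda_2)\Psi_{t-1}(1-\Psi_{t-1})$ (which the paper cites as \cite[Lemma~A.3]{Balsubramani2015} rather than deriving), substitute $\gamma_t=c/(t+L)$ with $c=c_0/(2(\lambda_1-\lambda_2))$, and then use $\Psi_{t-1}\leq 1-\epsilon_j$ on $\Omega_t'$ together with the $\cF_{t-1}$-measurability of $\Omega_t'$ to conclude.
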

\begin{proof}
See Appendix~\ref{sec:App_Intermediate_1}.
\end{proof}

\begin{lemma}\label{le:MGF_n}
For $t>t_j$ and $s \in \bbS$, we have
\begin{align}\label{eqn:MGF_n}
\E_t\{e^{s\Psi_t}\}\leq \exp{\Bigg(s(1-\epsilon_j)\Bigg(\frac{t_j + L + 1}{t + L + 1}\Bigg)^{c_0 \epsilon_j}+ \Bigg(sc^2 r^4+s^2 c^2 \sigma_N^2 \Bigg)\Bigg(\frac{1}{t_j + L}-\frac{1}{t + L}\Bigg)\Bigg)}.
\end{align}
\end{lemma}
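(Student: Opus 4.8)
\emph{Approach.} The plan is to iterate the one-step bound of Lemma~\ref{le:le4} from iteration $t_j+1$ up to $t$, unrolling the recursion on $\E_t\{e^{s\Psi_t}\}$ and carefully tracking the product of the contraction factors $\bigl(1 - \tfrac{c_0\epsilon_j}{l+L}\bigr)$ together with the accumulated ``noise'' terms $\tfrac{c^2 r^4}{(l+L)^2} + \tfrac{s c^2 \sigma_N^2}{(l+L)^2}$. Since the bound in Lemma~\ref{le:le4} is of the form $\E_l\{e^{s\Psi_l}\} \le \exp\bigl(s(a_l \Psi_{l-1} + b_l)\bigr)$ with $a_l = 1 - \tfrac{c_0\epsilon_j}{l+L}$ and $b_l = \tfrac{c^2 r^4 + s c^2 \sigma_N^2}{(l+L)^2}$, I would use the tower property of conditional expectation (noting $\Omega'_{t}$ is $\cF_{t-1}$-measurable, so conditioning is consistent across iterations) to telescope: the $\Psi_{t_j}$ term picks up the full product $\prod_{l=t_j+1}^{t} a_l$, and each noise term $b_l$ picks up the partial product $\prod_{m=l+1}^{t} a_m$, which is at most $1$ since each $a_m \le 1$.

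\emph{Key steps, in order.} First I would establish the elementary product bound $\prod_{l=t_j+1}^{t}\bigl(1 - \tfrac{c_0\epsilon_j}{l+L}\bigr) \le \exp\bigl(-c_0\epsilon_j \sum_{l=t_j+1}^{t} \tfrac{1}{l+L}\bigr) \le \bigl(\tfrac{t_j+L+1}{t+L+1}\bigr)^{c_0\epsilon_j}$, using $1-x \le e^{-x}$ and the integral bound $\sum_{l=t_j+1}^{t}\tfrac{1}{l+L} \ge \int_{t_j+1}^{t+1}\tfrac{dx}{x+L} = \ln\tfrac{t+L+1}{t_j+L+1}$. Second, I would bound the contribution of the initial term: on $\Omega'_t$ we have $\Psi_{t_j} \le 1-\epsilon_j$ (by Definition~\ref{def:Nested}, since $t_j \le t_j < t$ lies in the relevant sub-epoch range), so that term contributes $s(1-\epsilon_j)\bigl(\tfrac{t_j+L+1}{t+L+1}\bigr)^{c_0\epsilon_j}$. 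Third, I would bound the accumulated noise: $\sum_{l=t_j+1}^{t} b_l \le (c^2 r^4 + s c^2\sigma_N^2)\sum_{l=t_j+1}^{t}\tfrac{1}{(l+L)^2} \le (c^2 r^4 + s c^2\sigma_N^2)\bigl(\tfrac{1}{t_j+L} - \tfrac{1}{t+L}\bigr)$, again via an integral/telescoping comparison $\sum_{l=t_j+1}^{t}\tfrac{1}{(l+L)^2} \le \int_{t_j}^{t}\tfrac{dx}{(x+L)^2} = \tfrac{1}{t_j+L}-\tfrac{1}{t+L}$. Assembling these three pieces inside the exponential and factoring out the structure $sc^2 r^4 + s^2 c^2\sigma_N^2$ (multiplying through by $s$) yields exactly \eqref{eqn:MGF_n}. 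A small technical point to handle cleanly is the induction argument needed to formally go from the one-step Lemma~\ref{le:le4} to the multi-step statement: I would induct on $t$, at each step applying Lemma~\ref{le:le4} to peel off iteration $t$ and then invoking the inductive hypothesis on $\E_{t-1}\{e^{s\Psi_{t-1}}\}$ (using that $a_t \le 1$ to absorb the inductive bound through Jensen when the exponent is replaced).

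\emph{Main obstacle.} I expect the delicate part to be the bookkeeping around conditioning on the nested sample spaces: Lemma~\ref{le:le4} is stated with expectation $\E_t$ relative to $\Omega'_t$, and when I unroll the recursion I need the conditioning sets to be consistent, i.e., I must argue that $\E_t\{\,\cdot\,\} = \E_t\{\E_{t-1}\{\,\cdot\,\mid \cF_{t-1}\}\}$ is legitimate because $\Omega'_t \subseteq \Omega'_{t-1}$ and $\Omega'_t$ is $\cF_{t-1}$-measurable, so restricting to $\Omega'_t$ commutes appropriately with the $\cF_{t-1}$-conditional step coming from Lemma~\ref{le:le4} (whose proof, per Appendix~\ref{sec:App_Intermediate_1}, already accounts for this restriction). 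Once that measure-theoretic scaffolding is in place, the remaining work is the routine product/sum estimates above. The inequalities on the harmonic-type sums are standard, so the only real care is ensuring the off-by-one in the index ranges (whether the product runs over $l > t_j$ up to $t$ or $t-1$, and where the ``$+1$'' in $t_j+L+1$ versus $t_j+L$ comes from) matches the integral bounds chosen—which is why I would pin down the $\int_{t_j+1}^{t+1}$ versus $\int_{t_j}^{t}$ choices explicitly at the start.
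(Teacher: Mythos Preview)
Your proposal is correct and follows essentially the same route as the paper's proof: iterate Lemma~\ref{le:le4} from $t_j+1$ to $t$, bound the product $\prod_{l=t_j+1}^{t}\bigl(1-\tfrac{c_0\epsilon_j}{l+L}\bigr)$ via $1-x\le e^{-x}$ and the integral estimate $\sum_{l=t_j+1}^{t}\tfrac{1}{l+L}\ge \ln\tfrac{t+L+1}{t_j+L+1}$, bound the accumulated noise via $\sum_{l=t_j+1}^{t}\tfrac{1}{(l+L)^2}\le \tfrac{1}{t_j+L}-\tfrac{1}{t+L}$, and use $\Psi_{t_j}\le 1-\epsilon_j$ on $\Omega'_{t_j+1}$. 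The measurability point you flag as the main obstacle is exactly the one the paper handles by invoking \cite[Lemma~2.8]{Balsubramani2015} to pass from $\E_t\{\cdot\}$ to $\E_{t-1}\{\cdot\}$; the paper's unrolling rescales the parameter $s$ by the accumulated contraction factors (and then uses monotonicity of $\zeta_l(s)$ in $s$ to drop them), which is operationally equivalent to your partial-product/Jensen bookkeeping.
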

\begin{proof}
	See Appendix~\ref{sec:App_Intermediate_2}.	
\end{proof}

Using Lemma~\ref{le:MGF_n}, our next result deals with a specific value of $t$, namely, $t=t_{j+1}$.
\begin{lemma}\label{le:MGF_nj1}
Suppose Conditions~\ref{Condition:C1}--\ref{Condition:C2} are satisfied. Then for $0\leq j < J$ and $s \in \bbS$, we get
$$\E_{t_{j+1}}\big\{e^{s\Psi_{t_{j+1}}}\big\}\leq \exp{\Bigg(s(1-\epsilon_{j+1}) - s\epsilon_j + \Big(s c^2 r^4+s^2 c^2 \sigma_N^2\Big)\Big(\frac{1}{t_j + L}-\frac{1}{t_{j+1} + L}\Big)\Bigg)}.$$
\end{lemma}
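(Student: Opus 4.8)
The plan is to obtain the claim by simply setting $t = t_{j+1}$ in the bound of Lemma~\ref{le:MGF_n} and then tightening the first exponent term using Conditions~\ref{Condition:C1}--\ref{Condition:C2}. Substituting $t = t_{j+1}$ into \eqref{eqn:MGF_n} (which is legitimate since Condition~\ref{Condition:C2} forces $t_{j+1} > t_j$) gives
\begin{align*}
\E_{t_{j+1}}\{e^{s\Psi_{t_{j+1}}}\} \le \exp\Bigg(s(1-\epsilon_j)\Big(\tfrac{t_j + L + 1}{t_{j+1} + L + 1}\Big)^{c_0 \epsilon_j} + \big(s c^2 r^4 + s^2 c^2 \sigma_N^2\big)\Big(\tfrac{1}{t_j + L} - \tfrac{1}{t_{j+1} + L}\Big)\Bigg).
\end{align*}
The ``variance'' contribution here already coincides exactly with the corresponding term in the target bound, so everything reduces to showing $(1-\epsilon_j)\big(\tfrac{t_j + L + 1}{t_{j+1} + L + 1}\big)^{c_0 \epsilon_j} \le 1 - \epsilon_{j+1} - \epsilon_j$.

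To handle this, I would first invoke Condition~\ref{Condition:C2} in the form $\tfrac{t_j + L + 1}{t_{j+1} + L + 1} \le e^{-5/c_0}$, which upon raising to the power $c_0 \epsilon_j$ yields $\big(\tfrac{t_j + L + 1}{t_{j+1} + L + 1}\big)^{c_0 \epsilon_j} \le e^{-5\epsilon_j}$; so it suffices to prove $(1-\epsilon_j)e^{-5\epsilon_j} \le 1 - \epsilon_{j+1} - \epsilon_j$. Next, Condition~\ref{Condition:C1} gives $\epsilon_{j+1} \le 2\epsilon_j$, hence $1 - \epsilon_{j+1} - \epsilon_j \ge 1 - 3\epsilon_j$, while the (strict) monotonicity of $(\epsilon_j)$ together with $\epsilon_{J-1} \le 1/4$ (again Condition~\ref{Condition:C1}) guarantees $\epsilon_j \le 1/4$ for all $j < J$. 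Thus it is enough to establish the elementary scalar inequality $(1-x)e^{-5x} \le 1 - 3x$ for $x \in [0, 1/4]$ and then apply it with $x = \epsilon_j$.

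The scalar inequality is the only genuine computation, and I would dispatch it by a concavity argument rather than a messy Taylor bound: set $g(x) := 1 - 3x - (1-x)e^{-5x}$, note $g(0) = 0$, compute $g''(x) = -5(7 - 5x)e^{-5x} < 0$ on $[0,1/4]$ so that $g$ is concave there, and check the right endpoint $g(1/4) = \tfrac14 - \tfrac34 e^{-5/4} > 0$; concavity with $g(0) = 0$ and $g(1/4) > 0$ then forces $g \ge 0$ on all of $[0,1/4]$. Chaining the estimates, $(1-\epsilon_j)\big(\tfrac{t_j+L+1}{t_{j+1}+L+1}\big)^{c_0\epsilon_j} \le (1-\epsilon_j)e^{-5\epsilon_j} \le 1 - 3\epsilon_j \le 1 - \epsilon_{j+1} - \epsilon_j$, so the first exponent term is at most $s(1-\epsilon_{j+1}) - s\epsilon_j$, and substituting back into the displayed bound yields exactly the claimed inequality. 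There is no real obstacle here---the lemma is essentially a bookkeeping step layered on Lemma~\ref{le:MGF_n}---the only point requiring care is matching the admissible range of $\epsilon_j$ (namely $[0,1/4]$, dictated by Condition~\ref{Condition:C1}) to the interval on which the scalar inequality is verified.
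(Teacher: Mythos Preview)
Your proof is correct and follows essentially the same route as the paper: specialize Lemma~\ref{le:MGF_n} at $t=t_{j+1}$, then use Conditions~\ref{Condition:C1}--\ref{Condition:C2} to show $(1-\epsilon_j)\big(\tfrac{t_j+L+1}{t_{j+1}+L+1}\big)^{c_0\epsilon_j}\le 1-3\epsilon_j\le 1-\epsilon_{j+1}-\epsilon_j$. The only cosmetic difference is in the scalar step: the paper first bounds $1-\epsilon_j\le e^{-\epsilon_j}$ to obtain $e^{-6\epsilon_j}$ and then invokes $e^{-2x}\le 1-x$ for $0\le x\le 3/4$ (with $x=3\epsilon_j$), whereas you verify $(1-x)e^{-5x}\le 1-3x$ on $[0,1/4]$ directly via concavity of $g(x)=1-3x-(1-x)e^{-5x}$; both are valid and equally short.
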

\begin{proof}
	See Appendix~\ref{sec:App_Intermediate_3}.
\end{proof}

\begin{lemma}\label{le:Probability_tj}
Suppose Conditions \ref{Condition:C1}--\ref{Condition:C2} are satisfied. Then picking any $0<\delta<1$, we have
$$\sum_{j=1}^{J}{\bbP_{t_j}\Big(\sup_{t\geq t_j} \Psi_t > 1 - \epsilon_j\Big)}\leq \frac{\delta}{2}.$$
\end{lemma}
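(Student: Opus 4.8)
The plan is to bound each summand $p_j := \bbP_{t_j}\!\big(\sup_{t\ge t_j}\Psi_t > 1-\epsilon_j\big)$ separately by a Doob-type maximal inequality --- of the kind already used in the proof of Theorem~\ref{thm:ProbabilityBound} --- and then sum the resulting rapidly-decaying series. Fix $j\in\{1,\dots,J\}$. Taking $s:=\tfrac{2}{\epsilon_0}\ln\tfrac{4}{\delta}\in\bbS$ and defining
\begin{align*}
\tau_t := \sum_{l>t}\frac{c^2 r^4 + s\,c^2\sigma_N^2}{(l+L)^2}\;\le\;\frac{c^2 r^4 + s\,c^2\sigma_N^2}{t+L},\qquad M_t := \exp\!\big(s\Psi_t + s\tau_t\big),
\end{align*}
I would use Lemma~\ref{le:le4} --- discarding the contraction factor $1-c_0\epsilon_j/(t+L)\le 1$ via $\Psi_{t-1}\ge 0$, since $s\tau_t$ is tailored so as to telescope away the per-step error $s(c^2 r^4 + s\,c^2\sigma_N^2)/(t+L)^2$ --- to conclude that $(M_t)_{t\ge t_j}$ is a supermartingale. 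Because $M_t\ge e^{s\Psi_t}$, Doob's martingale inequality \citep[pg.~231]{durrett2010probability} applied on $\Omega_{t_j}^{'}$ with level $e^{s(1-\epsilon_j)}$ then yields
\begin{align*}
p_j\;\le\;\bbP_{t_j}\!\Big(\sup_{t\ge t_j} M_t\ge e^{s(1-\epsilon_j)}\Big)\;\le\;e^{-s(1-\epsilon_j)}\,\E_{t_j}\{M_{t_j}\}\;=\;e^{-s(1-\epsilon_j)}\,\E_{t_j}\{e^{s\Psi_{t_j}}\}\,e^{s\tau_{t_j}}.
\end{align*}

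To control $\E_{t_j}\{e^{s\Psi_{t_j}}\}$ I would invoke Lemma~\ref{le:MGF_nj1} at index $j-1$ (valid because $0\le j-1<J$), which --- through the unrolled recursion of Lemma~\ref{le:MGF_n} and Conditions~\ref{Condition:C1}--\ref{Condition:C2} --- turns the contraction across sub-epoch $j-1$ into a net gain and gives
\begin{align*}
\E_{t_j}\{e^{s\Psi_{t_j}}\}\;\le\;\exp\!\Big(s(1-\epsilon_j)-s\epsilon_{j-1}+\big(s c^2 r^4 + s^2 c^2\sigma_N^2\big)\big(\tfrac{1}{t_{j-1}+L}-\tfrac{1}{t_j+L}\big)\Big).
\end{align*}
Plugging this into the Doob bound, the $(t_j+L)^{-1}$ contribution cancels against the upper bound on $\tau_{t_j}$, leaving $p_j\le\exp(-s\epsilon_{j-1}+R_j)$ with $R_j\le s(c^2 r^4 + s\,c^2\sigma_N^2)/(t_{j-1}+L)$. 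The lower bound on $L$ (with $s=\tfrac{2}{\epsilon_0}\ln\tfrac{4}{\delta}$) gives $R_j\le\tfrac{3}{8}s\epsilon_0$ when $j=1$, while for $j\ge 2$ the geometric growth $t_{j-1}+L\ge e^{5(j-1)/c_0}L$ from Condition~\ref{Condition:C2} together with $\epsilon_{j-1}\ge(3/2)^{j-1}\epsilon_0$ from Condition~\ref{Condition:C1} force $R_j\le\tfrac{1}{2}s\epsilon_{j-1}$; hence $p_j\le e^{-s\epsilon_{j-1}/2}$ in every case. Since $s\epsilon_0/2=\ln(4/\delta)$ and $\epsilon_{j-1}\ge(3/2)^{j-1}\epsilon_0$, this reads $p_j\le(\delta/4)^{(3/2)^{j-1}}$, and using convexity of $x\mapsto(3/2)^x$ to bound $(3/2)^{j-1}\ge 1+(j-1)/2$ together with $\delta/4<1$,
\begin{align*}
\sum_{j=1}^{J}p_j\;\le\;\sum_{j\ge 1}(\delta/4)^{1+(j-1)/2}\;=\;\frac{\delta/4}{1-\sqrt{\delta/4}}\;\le\;\frac{\delta}{2},
\end{align*}
where the last step uses $\delta<1$, so $\sqrt{\delta/4}\le 1/2$. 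This proves the claim.

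I expect the main obstacle to be making the supermartingale step of the first paragraph fully rigorous: the moment generating bounds of Lemmas~\ref{le:le4} and~\ref{le:MGF_n} hold only on the shrinking, $\cF_{t-1}$-measurable sample spaces $\Omega_t^{'}$ of Definition~\ref{def:Nested}, so one cannot condition directly on $\cF_{t-1}$. The fix, exactly as in \citep{Balsubramani2015} and in the proof of Theorem~\ref{thm:ProbabilityBound}, is to replace $M_t$ by its truncation to the event that enforces only the first $j$ sub-epoch constraints through time $t$; this event is $\cF_{t-1}$-measurable and nonincreasing in $t$, so the truncated process remains a genuine $(\cF_t)$-supermartingale, while the observation that on $\Omega_{t_j}^{'}$ a first exceedance of $1-\epsilon_j$ after time $t_j$ cannot have been preceded by a violation of those first $j$ constraints ensures the truncated $M$ still attains the level $e^{s(1-\epsilon_j)}$ at that time. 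The only remaining point needing care is purely arithmetic: checking that the stated lower bound on $L$ indeed yields $R_j\le\tfrac{1}{2}s\epsilon_{j-1}$.
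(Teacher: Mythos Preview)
Your proposal is correct and follows essentially the same route as the paper: build the supermartingale $M_t=\exp(s\Psi_t+s\tau_t)$ exactly as in Theorem~\ref{thm:ProbabilityBound}, apply Doob's maximal inequality on $\Omega_{t_j}'$, bound $\E_{t_j}\{e^{s\Psi_{t_j}}\}$ via Lemma~\ref{le:MGF_nj1} at index $j-1$, and use the lower bound on $L$ to absorb the residual into $\tfrac{1}{2}s\epsilon_{j-1}$, then sum.

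The one place you work harder than necessary is the case split on $j$ for $R_j$. The paper simply uses $t_{j-1}+L\ge L$ for \emph{every} $j$ and invokes Proposition~\ref{prop:Probability_tj_1} to get $\big(c^2r^4+s\,c^2\sigma_N^2\big)/L\le \epsilon_0/2\le \epsilon_{j-1}/2$ uniformly in $j$; there is no need to exploit the geometric growth of $t_{j-1}+L$ from Condition~\ref{Condition:C2} for $j\ge 2$. Your final summation via $(3/2)^{j-1}\ge 1+(j-1)/2$ is a valid alternative to the paper's $(\delta/4)^{\epsilon_{j-1}/\epsilon_0}\le \delta/2^{j+1}$, and your remarks on the measurability/truncation issue match how the paper (following \citep{Balsubramani2015}) handles it.
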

\begin{proof}
	See Appendix~\ref{sec:App_Intermediate_4}.
\end{proof}

\begin{proof}\emph{(Proof of Theorem~\ref{thm:IntermediateEpochProbability})}
Using results from Lemma \ref{le:Probability_tj} and Theorem~\ref{thm:ProbabilityBound} and applying union bound, we get the statement of Theorem~\ref{thm:IntermediateEpochProbability}.
\end{proof}

\subsection{Final Epoch}\label{subsec:FinalEpoch}
Now that we have shown that $\Psi_{t}\leq 1/2$ with probability $1-\delta$ for all $t \geq t_J$, we characterize in the final epoch how $\Psi_{t}$ decreases further as a function of algorithmic iterations. The following result captures the rate at which $\Psi_t$ decreases during this final epoch.
\begin{lemma}\label{le:FinalRate}
For any $t>t_J$, the (conditional) expected error in $\Psi_{t}$ is given by
$$\E_t\{\Psi_t\}\leq \Bigg(1+\frac{c_0^2 \lambda_1^2}{2(t+L)^2 (\lambda_1 - \lambda_2)^2}-\frac{c_0}{2 (t+L)}\Bigg)\E_{t-1}\{\Psi_{t-1}\}+\frac{4c^2 \sigma_N^2}{(t + L)^2}.$$
\end{lemma}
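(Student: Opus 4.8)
The plan is to turn the one-step inequality of Lemma~\ref{le:ErrorReccursion}(i) into a recursion for $\E_t\{\Psi_t\}$ by taking the conditional expectation $\E_t\{\cdot\}$ (with respect to $\Omega_{t}^{'}$) of the pointwise bound $\Psi_t \le \Psi_{t-1} + 4\gamma_t^2\big(\|\bA_t - \bSigma\|_F^2 + \lambda_1^2\Psi_{t-1}\big) - z_t$, where $\bA_t := \tfrac1N\sum_{i=1}^N\bA_{i,t}$. Two quantities then need to be controlled: the ``noise'' term $4\gamma_t^2\,\E_t\{\|\bA_t-\bSigma\|_F^2\}$ and the ``drift'' term $\E_t\{z_t\}$. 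For the noise term, the mini-batch feeding iteration $t$ is drawn independently of the past and $\Omega_{t}^{'}$ is $\cF_{t-1}$-measurable (Definitions~\ref{def:SigmaField} and~\ref{def:Nested}), so $\bA_t$ is independent of $\Omega_{t}^{'}$ and $\E_t\{\|\bA_t-\bSigma\|_F^2\} = \E_{\cP_x}\{\|\bA_t-\bSigma\|_F^2\} = \sigma_N^2$ by Definition~\ref{def:SampleVarianceDistributed}; with $\gamma_t = c/(L+t)$ this is exactly the additive $4c^2\sigma_N^2/(t+L)^2$ term.

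The crux is the drift lower bound, where the eigengap and the $-c_0/(2(t+L))$ contraction enter. First I would compute the conditional mean update direction: since $\E\{\bA_t|\cF_{t-1}\}=\bSigma$, one gets $\E\{\bxi_t|\cF_{t-1}\} = \bSigma\bv_{t-1} - \big(\bv_{t-1}^{\tT}\bSigma\bv_{t-1}/\|\bv_{t-1}\|_2^2\big)\bv_{t-1}$, hence, using $\bSigma\bq^* = \lambda_1\bq^*$,
\[
\E\{z_t|\cF_{t-1}\} = 2\gamma_t\,\frac{(\bv_{t-1}^{\tT}\bq^*)^2}{\|\bv_{t-1}\|_2^2}\bigg(\lambda_1 - \frac{\bv_{t-1}^{\tT}\bSigma\bv_{t-1}}{\|\bv_{t-1}\|_2^2}\bigg).
\]
Decomposing $\bv_{t-1}$ along $\bq^*$ and its orthogonal complement and invoking Assumption~\ref{Assum:A2} gives the Rayleigh-quotient bound $\bv_{t-1}^{\tT}\bSigma\bv_{t-1}/\|\bv_{t-1}\|_2^2 \le \lambda_1(1-\Psi_{t-1}) + \lambda_2\Psi_{t-1}$, so $\lambda_1 - \bv_{t-1}^{\tT}\bSigma\bv_{t-1}/\|\bv_{t-1}\|_2^2 \ge (\lambda_1-\lambda_2)\Psi_{t-1}$; together with $(\bv_{t-1}^{\tT}\bq^*)^2/\|\bv_{t-1}\|_2^2 = 1-\Psi_{t-1}$ this yields the pointwise bound $\E\{z_t|\cF_{t-1}\} \ge 2\gamma_t(\lambda_1-\lambda_2)\Psi_{t-1}(1-\Psi_{t-1})$, which refines the nonnegativity fact of \citep[Theorem~2.1]{Balsubramani2015} used in the initial epoch. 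Restricting to $\omega\in\Omega_{t}^{'}$ with $t>t_J$, the intermediate-epoch schedule (Theorem~\ref{thm:IntermediateEpochProbability}, $\epsilon_J\ge\tfrac12$) forces $\Psi_{t-1}\le 1-\epsilon_J$, hence $1-\Psi_{t-1}\ge\tfrac12$, so on $\Omega_{t}^{'}$ we get $\E\{z_t|\cF_{t-1}\} \ge \gamma_t(\lambda_1-\lambda_2)\Psi_{t-1}$; taking expectations over $\Omega_{t}^{'}$ and using the tower property with $\Omega_{t}^{'}\in\cF_{t-1}$ gives $\E_t\{z_t\} \ge \gamma_t(\lambda_1-\lambda_2)\E_t\{\Psi_{t-1}\}$.

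Assembling the two pieces yields $\E_t\{\Psi_t\} \le \big(1 + 4\gamma_t^2\lambda_1^2 - \gamma_t(\lambda_1-\lambda_2)\big)\E_t\{\Psi_{t-1}\} + 4\gamma_t^2\sigma_N^2$. To land on the claimed form with $\E_{t-1}\{\Psi_{t-1}\}$ on the right, I would note that for $t>t_J$ the nested structure gives $\Omega_{t}^{'} = \Omega_{t-1}^{'}\cap\{\Psi_{t-1}\le 1-\epsilon_J\}$, so conditioning on $\Omega_{t}^{'}$ only truncates the upper tail of $\Psi_{t-1}$ and $\E_t\{\Psi_{t-1}\}\le\E_{t-1}\{\Psi_{t-1}\}$; the multiplying coefficient is nonnegative (indeed $\le 1$) for $L$ large, so the replacement is valid. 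Substituting $\gamma_t = c/(L+t)$ with $c = c_0/(2(\lambda_1-\lambda_2))$, so $\gamma_t(\lambda_1-\lambda_2) = c_0/(2(t+L))$ and $\gamma_t^2\lambda_1^2 = c_0^2\lambda_1^2/(4(t+L)^2(\lambda_1-\lambda_2)^2)$, then gives the stated recursion up to the coefficient of the $\lambda_1^2$ term. I expect that last point to be the main obstacle: obtaining the claimed $c_0^2\lambda_1^2/(2(t+L)^2(\lambda_1-\lambda_2)^2)$ rather than twice that requires sharpening the final-epoch drift estimate by exploiting that $\Psi_{t-1}$ is strictly below $\tfrac12$ there --- writing $1-\Psi_{t-1}\ge \epsilon_J$ with $\epsilon_J > \tfrac12 + \gamma_t\lambda_1^2/(\lambda_1-\lambda_2)$ lets the eigengap drift dominate the $O(\gamma_t^2\lambda_1^2\Psi_{t-1})$ second-order noise and halve its effective coefficient. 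So the delicate part is the bookkeeping of the $O(\gamma_t^2)$ terms against the drift, not any single hard inequality; everything upstream of it is routine once the conditional-mean computation and the Rayleigh-quotient bound are in hand.
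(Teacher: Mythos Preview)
Your approach is essentially the paper's: start from the one-step bound of Lemma~\ref{le:ErrorReccursion}(i), take conditional expectation, use the drift lower bound $\E\{z_t\mid\cF_{t-1}\}\ge 2\gamma_t(\lambda_1-\lambda_2)\Psi_{t-1}(1-\Psi_{t-1})$ (which the paper simply cites as \cite[Lemma~A.4]{Balsubramani2015} rather than rederiving via the Rayleigh-quotient argument you give), invoke $1-\Psi_{t-1}\ge\tfrac12$ on $\Omega_t'$ for $t>t_J$, pass from $\E_t\{\Psi_{t-1}\}$ to $\E_{t-1}\{\Psi_{t-1}\}$ by monotonicity of the nested sample spaces, and substitute $\gamma_t$.

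The only place you diverge is the factor-of-two ``obstacle'' you flag at the end, and your proposed fix there is both unnecessary and unsound. The discrepancy is an artifact of the constant in the \emph{statement} of Lemma~\ref{le:ErrorReccursion}(i): its proof (via Lemma~\ref{le:xi_t_bound}, see~\eqref{eqn:XiNorm}) actually delivers
\[
\frac{\|\bxi_t\|_2^2}{\|\bv_{t-1}\|_2^2}\le 4\Big\|\tfrac1N\textstyle\sum_i\bA_{i,t}-\bSigma\Big\|_F^2+2\lambda_1^2\Psi_{t-1},
\]
i.e.\ a coefficient $2\gamma_t^2\lambda_1^2$ on $\Psi_{t-1}$ rather than the $4\gamma_t^2\lambda_1^2$ you carry from the lemma statement. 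The paper's proof of Lemma~\ref{le:FinalRate} invokes Lemma~\ref{le:xi_t_bound} directly at this step, and $2\gamma_t^2\lambda_1^2$ with $\gamma_t=c_0/(2(\lambda_1-\lambda_2)(t+L))$ gives exactly the claimed $c_0^2\lambda_1^2/(2(t+L)^2(\lambda_1-\lambda_2)^2)$. Your suggested rescue via ``$\epsilon_J>\tfrac12+\gamma_t\lambda_1^2/(\lambda_1-\lambda_2)$'' is not available: the intermediate-epoch schedule only guarantees $\epsilon_J\ge\tfrac12$, with no strict margin of the required size, and there is no mechanism by which a larger $1-\Psi_{t-1}$ would ``halve'' the coefficient of the second-order term. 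Drop that line of argument and just use the sharper constant that the proof of Lemma~\ref{le:ErrorReccursion}(i) already provides.
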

\begin{proof}
See Appendix~\ref{sec:App_FinalEpoch}.	
\end{proof}

We are now ready to prove our main result, which is given by Theorem~\ref{thm:FinalResult}.
\begin{proof}({\em Proof of Theorem \ref{thm:FinalResult}})
Recall the definitions of the sub-epochs corresponding to the pairs $(t_j, \epsilon_j)'s$ that satisfy the two conditions in Theorem~\ref{thm:IntermediateEpochProbability}. Following the same procedure as in the proof of \cite[Theorem 1.1]{Balsubramani2015}, notice that $J = \log_2{\big(1/(2\epsilon_0)\big)}$ (since $\epsilon_J = 2\epsilon_{J-1}=\dots=2^J \epsilon_0\Rightarrow 2^J = \epsilon_J/\epsilon_0=1/2\epsilon_0$) and therefore Condition~\ref{Condition:C2} implies
\begin{align}\label{eqn:t_J}
t_J + L + 1 = \big(L + 1\big)\exp{\Big(\frac{5 J}{c_0}\Big)}=\big(L + 1\big)\Big(\frac{1}{2\epsilon_0}\Big)^{5/(c_0 \ln{2})}=\big(L + 1\big)\Big(\frac{4 e d}{\delta^2}\Big)^{5/(c_0 \ln{2})}.
\end{align}
Defining $a_1 := c_0^2 \lambda_1^2/2(\lambda_1-\lambda_2)^2$, $a_2 := c_0 /2$, $b := 4c^2 \sigma_N^2$, and using Lemma~\ref{le:FinalRate} for $t>t_J$, we have
$$\E_t\{\Psi_t\}\leq \Big(1 +\frac{a_1}{(t+L)^2}- \frac{a_2}{t+L}\Big)\E_{t-1}\{\Psi_{t-1}\}+\frac{b}{(t+L)^2}.$$
Now using Proposition~\ref{prop:recursion} from Appendix~\ref{sec:App_FinalEpoch} with $c_0>2$, we get
\begin{align*}
\E_t \{\Psi_t\} &\leq \Big(\frac{t_J + L + 1}{t + L + 1}\Big)^{\frac{c_0}{2}}\exp\Big(\frac{a_1}{t_J+L+1}\Big)\E_{t_J}\{\Psi_{t_J}\} \\
&\qquad+ \frac{b}{a_2 - 1}\Big(1 + \frac{1}{t_J + L + 1}\Big)^{2}\exp\Big(\frac{a_1}{t_J + L +1}\Big)\frac{1}{t + L + 1}\\
&\stackrel{(a)}\leq \frac{1}{2}\Big(\frac{L + 1}{t + L + 1}\Big)^{\frac{c_0}{2}}\Big(\frac{4 e d}{\delta^2}\Big)^{\frac{5 a_2}{(c_0 \ln{2})}}\exp\Big(\frac{a_1}{t_J + L +1}\Big) \\
&\qquad+ \frac{b}{a_2 - 1}\exp{\Big(\frac{2}{t_J + L + 1}\Big)}\exp\Big(\frac{a_1}{t_J + L + 1}\Big)\frac{1}{t + L + 1}\\
&= \frac{1}{2}\Big(\frac{L + 1}{t + L + 1}\Big)^{\frac{c_0}{2}}\Big(\frac{4 e d}{\delta^2}\Big)^{\frac{5}{(2 \ln{2})}}\exp\Big(\frac{a_1}{(L + 1)(4ed/\delta^2)^{(5/2 \ln{2})}}\Big) \\
&\qquad+ \frac{8 c^2 \sigma_N^2}{c_0 - 2}\exp{\Big(\frac{2 + a_1}{(L + 1)(4ed/\delta^2)^{(5/2 \ln{2})}}\Big)}\frac{1}{(t + L + 1)}.
\end{align*}
Here, the inequality in $(a)$ is due to~\eqref{eqn:t_J} and we have also used the fact that $(1+x)^a\leq \exp{(ax)}$ for $x<1$. In addition, since $(4ed/\delta^2)^{(5/2 \ln{2})} \geq 1$, we get
\begin{align}\label{eqn:Full}
\E_t \{\Psi_t\} &\leq \frac{1}{2}\Big(\frac{L + 1}{t + L + 1}\Big)^{\frac{c_0}{2}}\Big(\frac{4 e d}{\delta^2}\Big)^{\frac{5}{(2 \ln{2})}}\exp{\Big(\frac{a_1}{L + 1}\Big)} + \frac{8 c^2 \sigma_N^2}{c_0 -2}\exp{\Big(\frac{a_1 + 2}{L + 1}\Big)}\frac{1}{(t + 1)}\nonumber\\
&\leq \frac{1}{2}\Big(\frac{L + 1}{t + L + 1}\Big)^{\frac{c_0}{2}}\Big(\frac{4 e d}{\delta^2}\Big)^{\frac{5}{(2 \ln{2})}}e^{a_1/L} + \frac{8 c^2 \sigma_N^2 e^{(a_1 + 2)/L}}{c_0 -2}\frac{1}{(t + L + 1)}\nonumber\\
&= C_1 \Big(\frac{L + 1}{t + L + 1}\Big)^{\frac{c_0}{2}} + C_2\Big(\frac{\sigma_N^2}{t + L + 1}\Big).
\end{align}
This completes the proof of the theorem.
\end{proof}

\section{Numerical Results}\label{sec:NumericalResults}
In this section, we utilize numerical experiments to validate the theoretical findings of this work in terms of the ability of implicit/explicit mini-batched variants of the original Krasulina's method \citep{krasulina1969method} to estimate the top eigenvector of a covariance matrix from (fast) streaming data. Instead of repeating the same set of experiments for the original Krasulina's method, \DK, and \DMK, we present our results that are parameterized by the network-wide mini-batch size $B \in \{1\} \bigcup \{bN: b \in \Z_+\}$ that appears in \DMK. This is because $B = 1$ trivially corresponds to the original Krasulina's iterations, while $B = N$ corresponds to iterations that characterize \DK.

Our goals for the numerical experiments are threefold: ($i$) showing the impact of (implicit/explicit) mini-batching on the convergence rate of \DMK, ($ii$) establishing robustness of {\DMK} against the loss of $\mu > 0$ samples per iteration for the case when $N < \tfrac{R_s}{R_p} + \tfrac{R_s}{b R_c}$, and ($iii$) experimental validation for scaling of convergence rate in terms of problem parameters as predicted by our theoretical findings, namely, eigengap ($\lambda_1-\lambda_2$), dimensionality ($d$), and upper bound on input samples ($\|\bx_{t'}\|_2\leq r$). In the following, we report results of experiments on both synthetic and real-world data to highlight these points.

\subsection{Experiments on Synthetic Data}\label{subsec:SyntheticDataNoLatency}
In the following experiments we generate $T=10^6$ samples from some probability distribution (specified for each experiment later) and for each experiment we perform $200$ Monte-Carlo trials. In all the experiments in the following we use step size of the form $\gamma_t=c/t$. We performed experiments with multiple values of $c$ and here we are reporting the results for the value of $c$ which achieves the best convergence rate. Further details about each experiment are provided in the following sections.

\subsubsection{Impact of mini-batch size on the performance of \DMK}

\begin{figure}[t]
	\centering
	\subfigure[Impact of the mini-batch size on the convergence rate of \DMK~for the resourceful regime. Note that the $B=1$ plot is effectively Krasulina's method.]{
		\includegraphics[width=0.45\columnwidth]{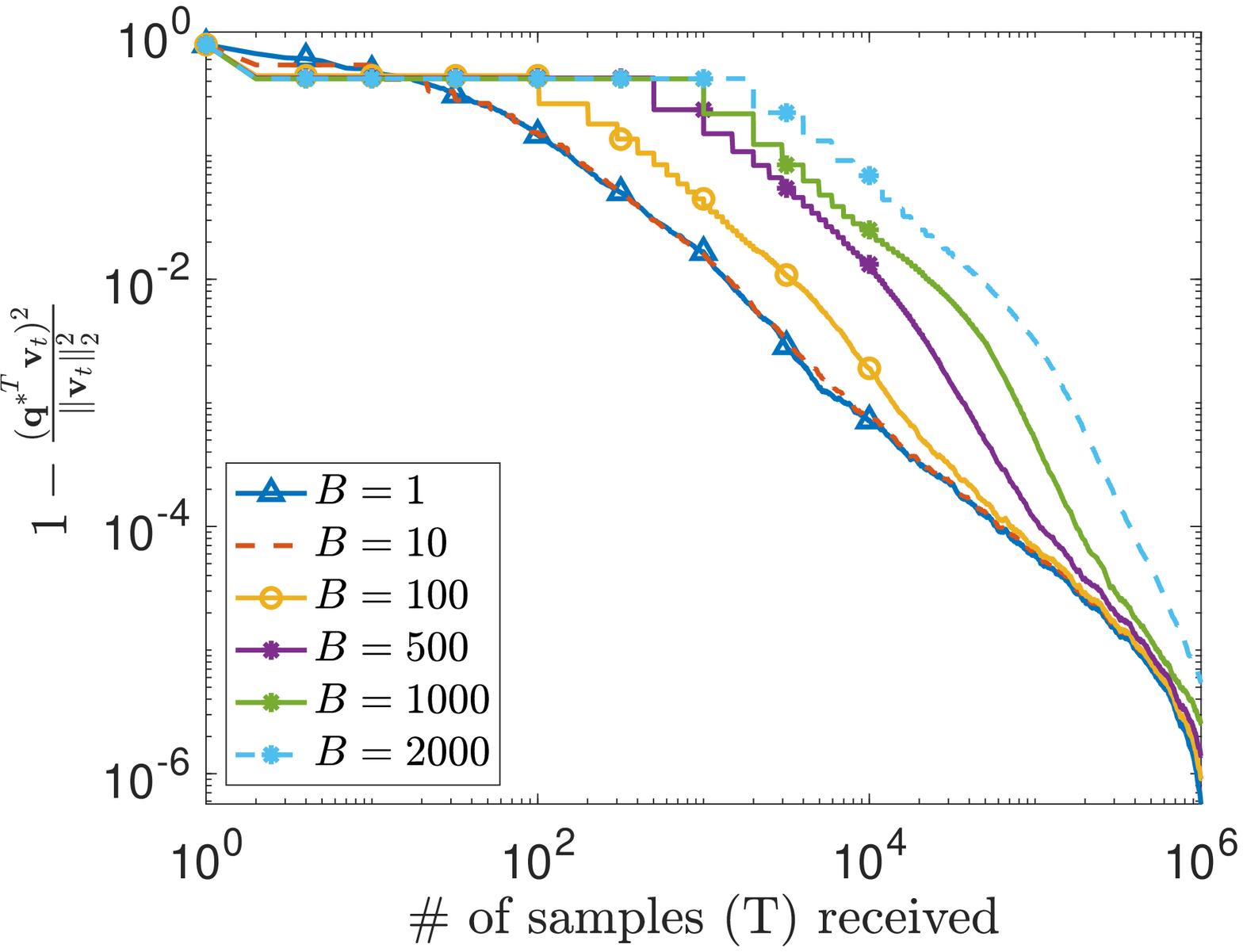}
		\label{fig:SyntheticPCA_NoLatency}}
	\qquad
	\subfigure[Performance of \DMK~in a resource-constrained regime (i.e., $N < \tfrac{R_s}{R_p} + \tfrac{R_s}{b R_c}$), which causes loss of $\mu$ samples per iteration; here, $(N,B)=(10,100)$.]{
		\includegraphics[width=0.45\columnwidth]{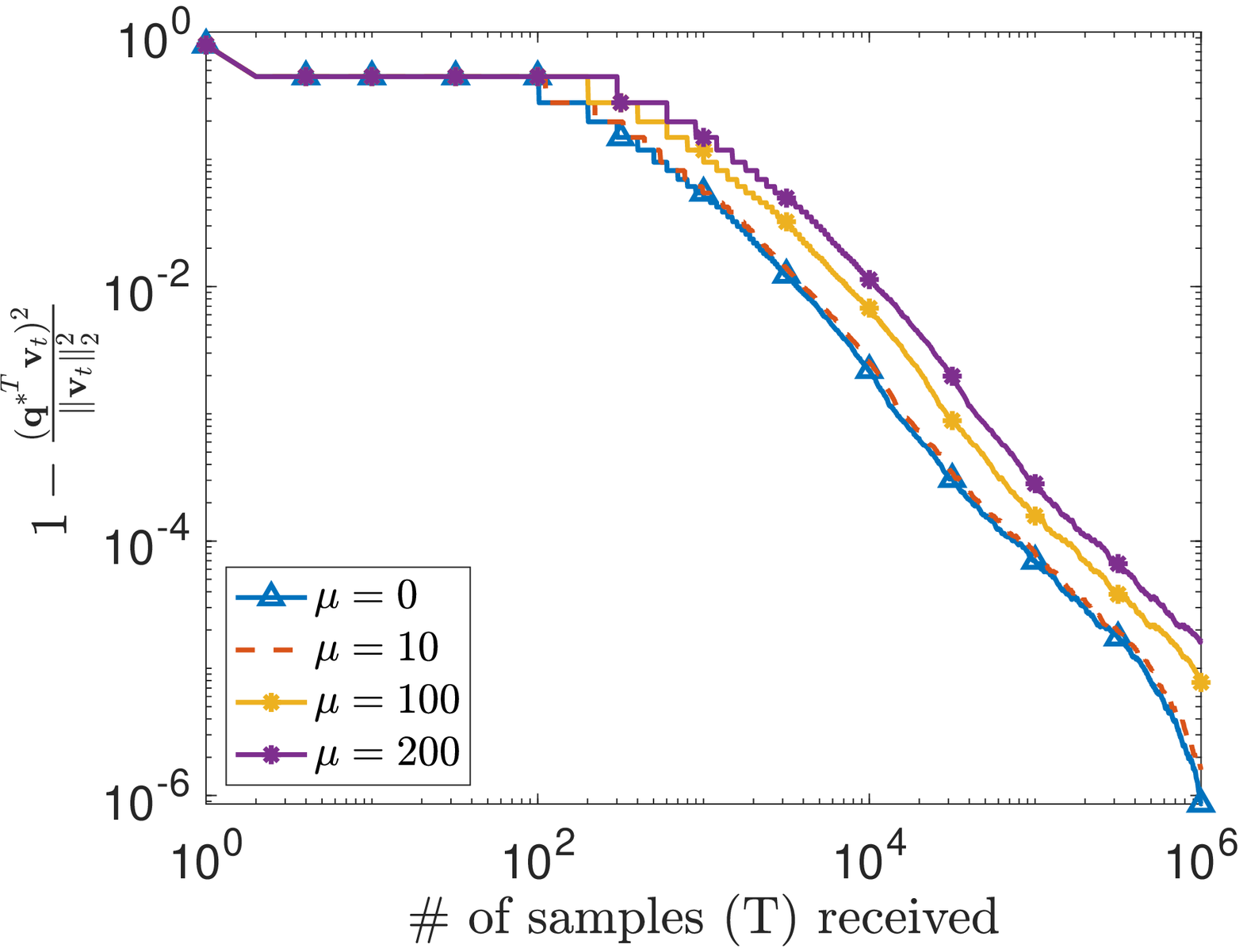}
		\label{fig:SyntheticPCA_Latency}}
	\caption{Convergence behavior of \DMK~for the case of synthetic data under two scenarios: (a) No data loss ($\mu = 0$) and (b) loss of $\mu > 0$ samples per algorithmic iteration.}
\end{figure}

For a covariance matrix $\bSigma\in\R^{5\times 5}$ with $\lambda_1=1$ and eigengap $\lambda_1 - \lambda_2=0.2$, we generate $T=10^6$ samples from $\cN(\bzero, \bSigma)$ distribution. The first set of experiments here deals with the resourceful regime, i.e., $N \geq \tfrac{R_s}{R_p} + \tfrac{R_s}{b R_c}$, with mini-batches of sizes $B \in \{1, 10, 100, 500, 1000, 2000\}$. Note that these values of $B$ can be factored into any positive integers $b$ and $N$ as long as the condition $N \geq \tfrac{R_s}{R_p} + \tfrac{R_s}{b R_c}$ that is governed by the application scenario and the physical system is satisfied. It is, therefore, unnecessary to specify $b$ and $N$ for these experiments, whose results are shown in Figure~\ref{fig:SyntheticPCA_NoLatency}. These results are obtained for step-size parameter $c \in \{70,80,80,90,110,100\}$, which are the values of $c$ resulting in the best convergence rate. As predicted by Corollary~\ref{cor:BatchNoLatency}, we can see that after $T/B$ iterations of \DMK, the error $\Psi_{T/B}$ is on the order of $O(1/T)$ for $B \in \{1, 10, 100, 500, 1000\}$, while for $B=2000$, the error $\Psi_{T/B}$ is not optimal anymore.

Next, we demonstrate the performance of \DMK~for resource constrained settings, i.e., $N < \tfrac{R_s}{R_p} + \tfrac{R_s}{b R_c}$, which causes the algorithm to discard $\mu := \left(\tfrac{b R_s}{R_p} + \tfrac{R_s}{R_c}\right) - B$ samples per iteration. Using the same data generation setup as before, we run \DMK~for a network of 10 nodes ($N=10$) with network-wide mini-batch of size $B=100$ (i.e., $b=10$). We consider different mismatch factors between streaming, processing, and communication rates in this experiment, which result in the number of samples being discarded as $\mu \in \{0, 10, 100, 200\}$. The results are plotted in Figure~\ref{fig:SyntheticPCA_Latency}, which shows that the error $\Psi_{T/(B+\mu)}$ for $\mu=10$ is comparable to that for $\mu=0$, but the error for $\mu=200$ is an order of magnitude worse than the nominal error.

\subsubsection{Impact of the eigengap on the performance of \DMK}\label{subsubsec:eigengap}
For this set of experiments, we again generate data in $\R^5$ from a normal distribution $\cN(\bzero, \bSigma)$, where the covariance matrix $\bSigma$ has the largest eigenvalue $\lambda_1=1$. We then vary the remaining eigenvalues to ensure an eigengap that takes values from the set $\{0.1, 0.2, 0.3, 0.4, 0.5\}$. The corresponding values of $c$ that give the best convergence rate for each unique eigengap satisfy $c \in \{180,110,90,70,60\}$. The final results for these experiments are plotted in Figure~\ref{fig:Eigengap} for the case of $B=1000$ and $\mu = 0$. These results establish that the final gap in error after observing $T=10^6$ data samples is indeed on the order of $O(1/(\lambda_1-\lambda_2)^2)$, as suggested by the theoretical analysis.

\subsubsection{Impact of dimensionality on the performance of \DMK}
For this set of experiments, we generate data in $\R^d$ from a normal distribution $\cN(\bzero, \bSigma)$ whose dimensionality is varied such that $d \in \{5, 10, 15, 20\}$. In addition, we fix the largest eigenvalue of $\bSigma$ to be $\lambda_1=1$ and its eigengap to be $0.2$. The values of $c$ corresponding to each unique value of $d$ that provide the best convergence rate in these experiments satisfy $\{110, 110, 100, 100\}$; contrary to our theoretical analysis, this seems to suggest that the optimal step-size sequence does not have a strong dependence on $d$, at least for small values of $d$. We also plot the potential function for each $d$ as a function of the number of received samples in Figure~\ref{fig:Dimensions} for the case of $B=1000$ and $\mu = 0$. Once again, we observe little dependence of the performance of \DMK~on $d$. Both these observations suggest that our theoretical analysis is not tight in terms of its dependence on dimensionality $d$ of the streaming data.

\begin{figure}[t]
	\centering
	\subfigure[]{
		\includegraphics[width=0.45\columnwidth]{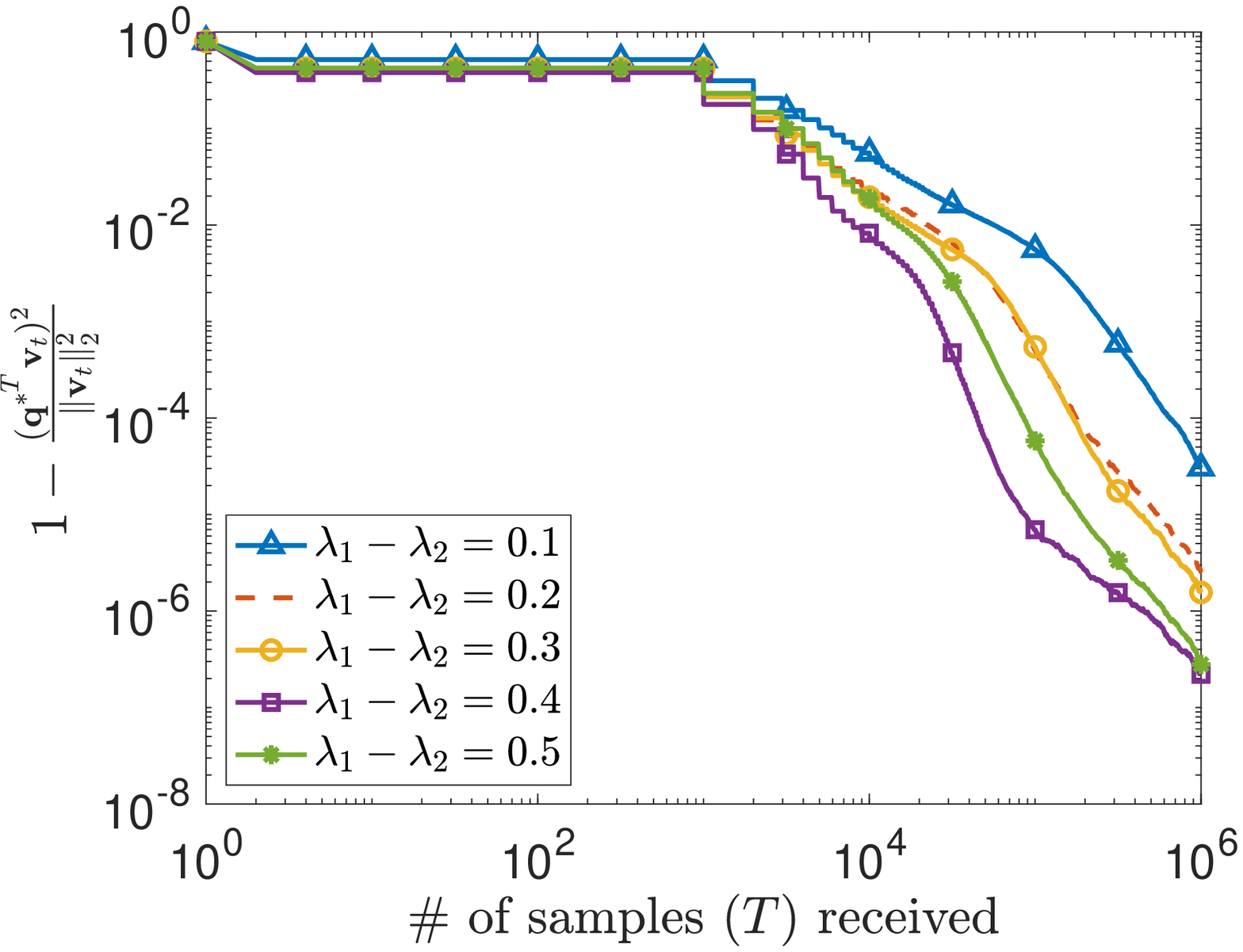}
		\label{fig:Eigengap}}
	\qquad
	\subfigure[]{
		\includegraphics[width=0.45\columnwidth]{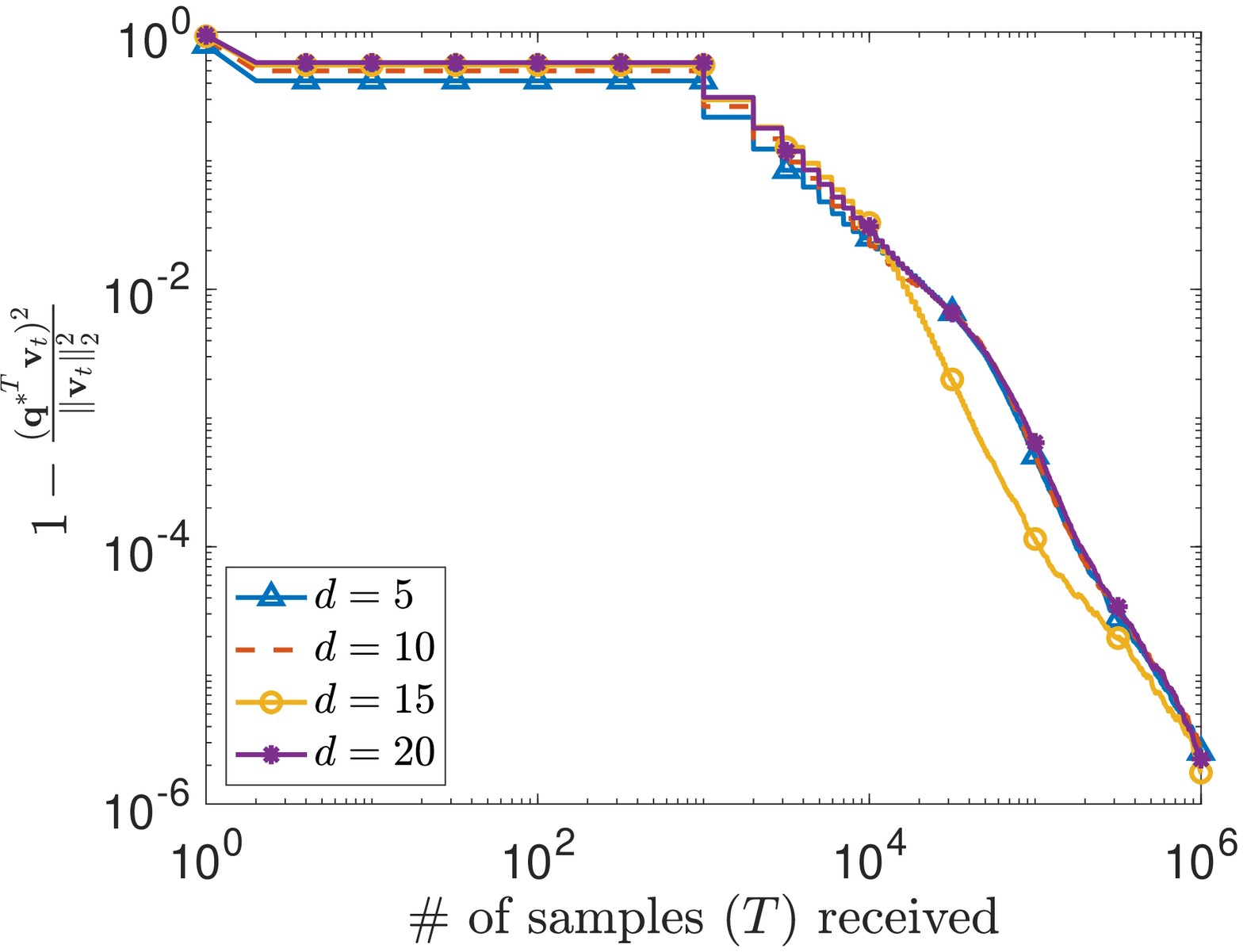}
		\label{fig:Dimensions}}
	\caption{Understanding the impact of (a) eigengap $(\lambda_1-\lambda_2)$ and (b) dimensionality $d$ on the convergence behavior of \DMK, corresponding to $B=1000$ and $\mu = 0$.}
\end{figure}

\begin{figure}[t]
	\centering
	\includegraphics[width=0.5\columnwidth]{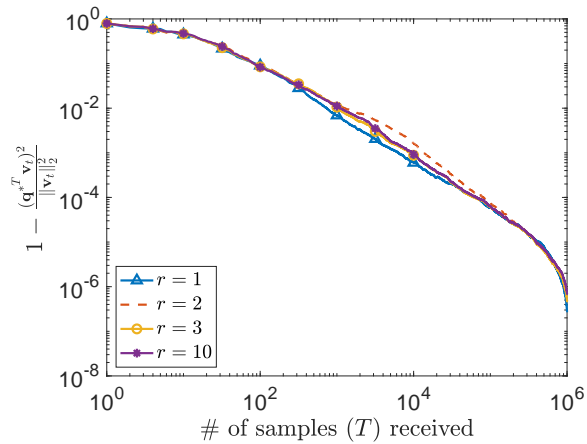}
	\caption{Performance of \DMK~for varying upper bound on the norm of the streaming data.}
	\label{fig:Upperbound}
\end{figure}

\subsubsection{Impact of upper bound on the performance of \DMK}
In order to understand the impact of the upper bound $\|\bx_{t'}\|_2\leq r$ on the convergence behavior of \DMK, we generate $\bx_{t'} \in \R^5$ as $\bx_{t'} = \bC \bu_{t'}$ with $\bu_{t'} \in \R^5$ having independent entries drawn from uniform distribution $\cU(-a, a)$ and $\bC$ chosen to ensure an eigengap of $0.2$ for the covariance matrix. As we vary the value of $a$ within the set $\{1,2,3,10\}$, we generate four different datasets of $T = 10^6$ samples for which the resulting $r \in \{1.45, 2.9, 4.5, 14.5\}$. The values of $c$ that provide best convergence for these values of $r$ satisfy $c \in \{8, 2, 1, 0.08\}$. The final set of results are displayed in Figure~\ref{fig:Upperbound} for $B=1$ and $\mu = 0$. It can be seen from this figure that changing $r$ does not affect the convergence behavior of \DMK. This behavior can be explained by noticing that the parameter $r$ appears in our convergence results in terms of a lower bound on $L$ (cf.~\eqref{eqn:LowerboundL}) and within the non-dominant term in the error bound. The dependence of $L$ on the parameter $r$ is already being reflected here in our choice of the step-size parameter $c$ that results in the best convergence result. In addition, we hypothesize that the non-dominant error term in our experiments, compared to the dominant one, is significantly small that it masks the dependence of the final error on $r$.

\begin{figure}
	\centering
		\subfigure[MNIST Data ($\mu=0$): Impact of network-wide mini-batch size $B$ on the convergence behavior of \DMK~for the resourceful regime.]{
	\includegraphics[width=0.45\columnwidth]{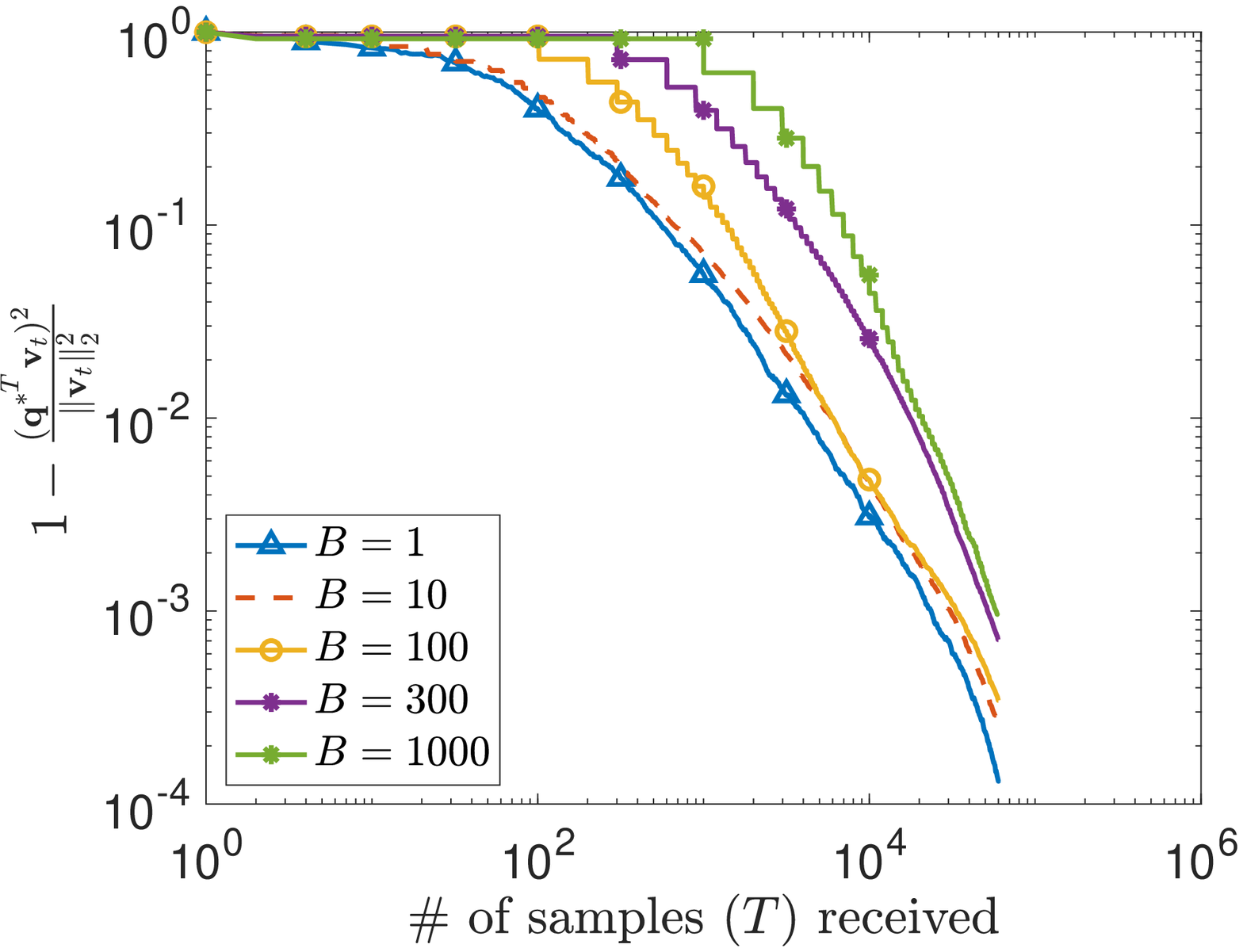}
        \label{fig:MNIST_NoLatency}}
		\qquad
		\centering
	    	\subfigure[MNIST Data ($N=10$; $B=100$): Convergence behavior of \DMK~in a resource-constrained regime, which causes loss of $\mu$ samples per iteration.]{
	    	\includegraphics[width=0.45\columnwidth]{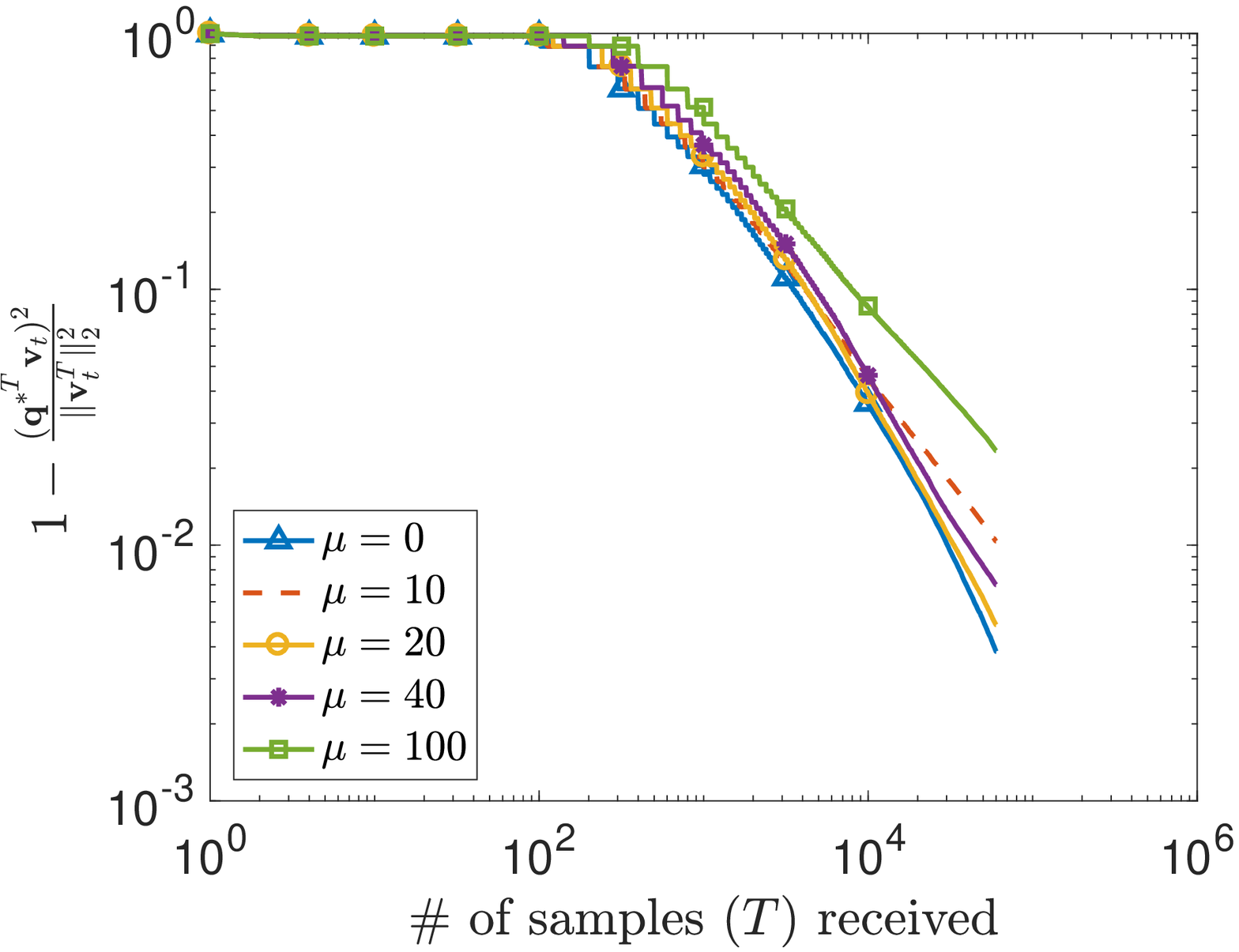}
            \label{fig:MNIST_Latency}}
		\caption{Performance of \DMK~for the MNIST dataset under two scenarios: (a) No data loss ($\mu = 0$) and (b) loss of $\mu > 0$ samples per algorithmic iteration.}
\end{figure}

\subsection{Experiments on Real-world Datasets}\label{subsec:RealData}
In this section, we evaluate the performance of \DMK~on two real-world datasets, namely, the MNIST dataset~\citep{lecun1998mnist} and the Higgs dataset~\citep{baldi2014searching}. The MNIST dataset corresponds to $d=784$ and has a total of $T = 6\times 10^4$ samples. Our first set of experiments for this dataset uses the step size $\gamma=c/t$ with $c \in \{0.6, 0.9, 1.1, 1.5, 1.6\}$ for network-wide mini-batch sizes $B \in \{1, 10, 100, 300, 1000\}$ in the resourceful regime ($\mu = 0$). The results, which are averaged over 200 random initializations and random shuffling of data, are given in Figure~\ref{fig:MNIST_NoLatency}. It can be seen from this figure that the final error relatively stays the same as $B$ increases from $1$ to $100$, but it starts getting affected significantly as the network-wide mini-batch size is further increased to $B=300$ and $B=1000$. Our second set of experiments for the MNIST dataset corresponds to the resource-constrained regime with $(N,B)=(10,100)$ and step-size parameter $c \in \{0.6, 0.9, 1.1, 1.5, 1.6\}$ for the number of discarded samples $\mu \in \{0, 10, 20, 40, 100\}$. The results, averaged over 200 trials and given in Figure~\ref{fig:MNIST_Latency}, show that the system can tolerate loss of some data samples per iteration without significant increase in the final error; the increase in error, however, becomes noticeable as $\mu$ approaches $B$. Both these observations are in line with the insights of our theoretical analysis.

We now turn our attention to the Higgs dataset, which is $d=28$ dimensional and comprises $1.1\times 10^7$ samples. Our results for this dataset, averaged over 200 trials and using $c = 0.07$, for the resourceful and resource-constrained settings are given in Figure~\ref{fig:Higgs_NoLatency} and Figure~\ref{fig:Higgs_Latency}, respectively. In the former setting, corresponding to $B \in \{1, 10^2, 10^3, 10^4, 2\times 10^4\}$, we once again see that the error relatively stays the same for values of $B$ that are significantly smaller than $T$; in particular, since $T$ for the Higgs dataset is larger than for the MNIST dataset, it can accommodate a larger value of $B$ without significant loss in performance. In the latter resource-constrained setting, corresponding to $N = 10$, $B = 1000$ and $\mu \in \{0,10,100,1000,2000\}$, we similarly observe that small (relative to $B$) values of $\mu$ do not impact the performance of \DMK~in a significant manner. Once again, these results corroborate our research findings.

\begin{figure}
	\centering
    \subfigure[Higgs Data ($\mu=0$): Impact of network-wide mini-batch size $B$ on the convergence behavior of \DMK~for the resourceful regime.]{
    	\includegraphics[width=0.45\columnwidth]{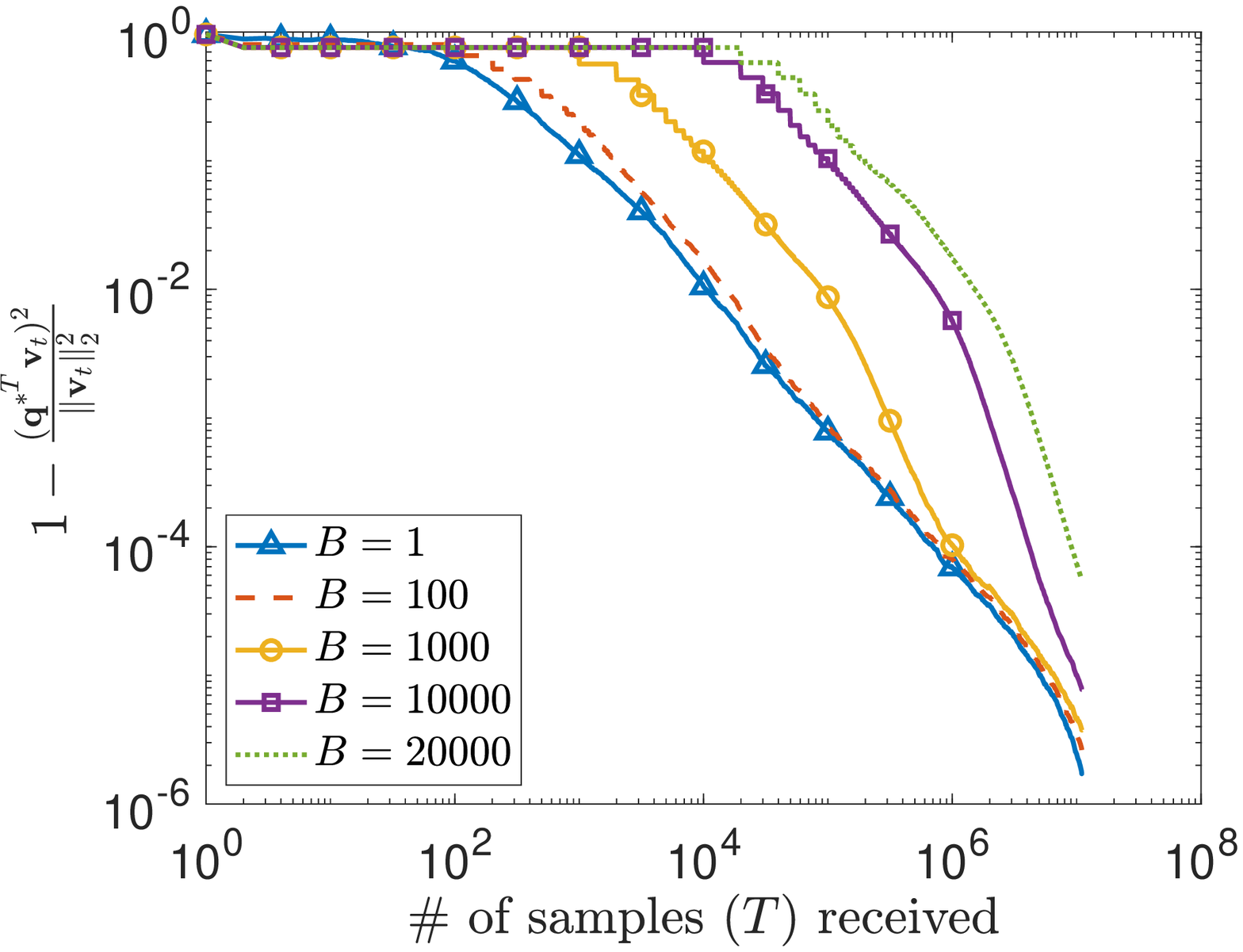}
            \label{fig:Higgs_NoLatency}}
    \qquad
    \centering
    \subfigure[Higgs Data ($N=10$; $B=1000$): Convergence behavior of \DMK~in a resource-constrained regime, which causes loss of $\mu$ samples per iteration.]{
    	\includegraphics[width=0.45\columnwidth]{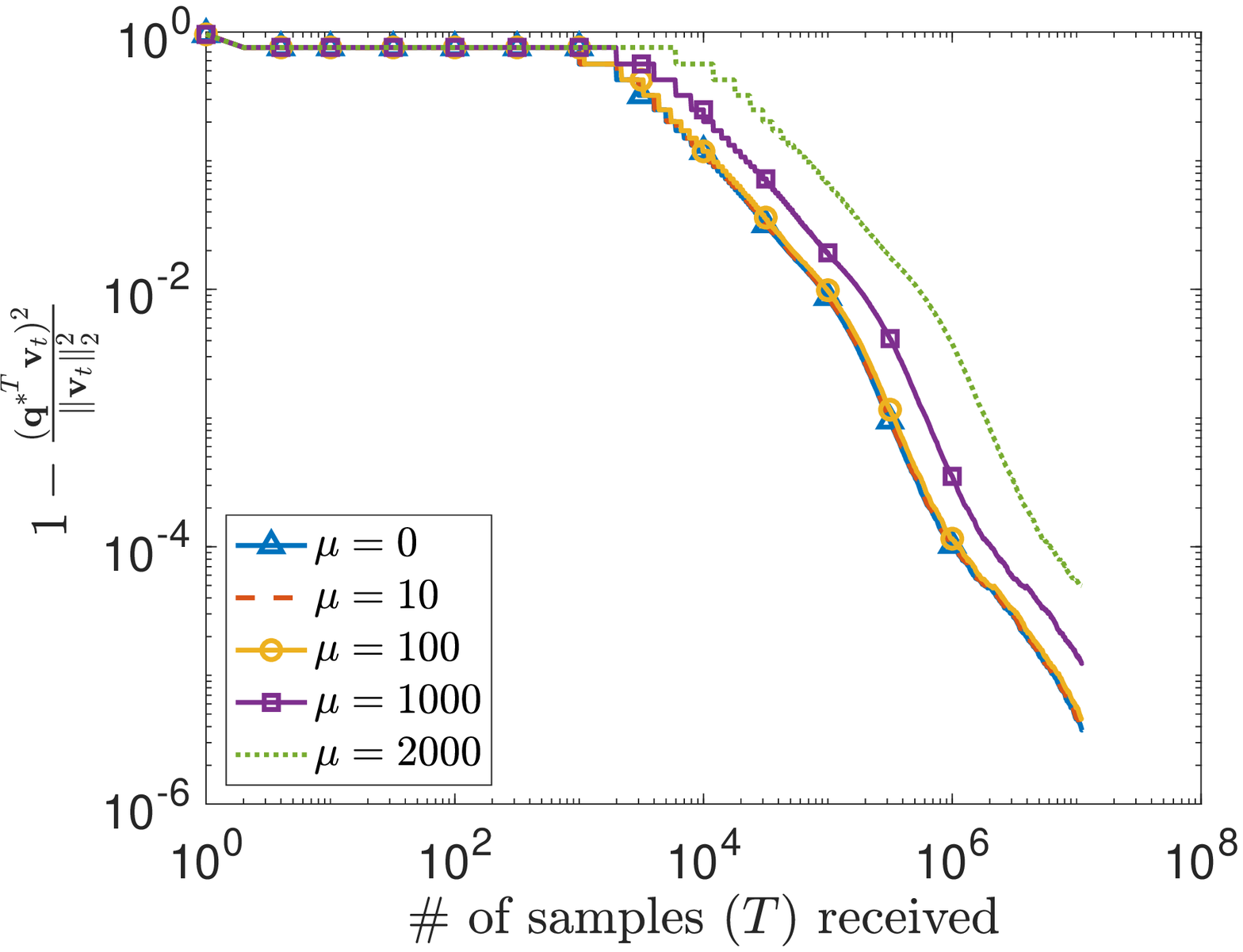}
            \label{fig:Higgs_Latency}}
	\caption{Performance of \DMK~for the Higgs dataset under two scenarios: (a) No data loss ($\mu = 0$) and (b) loss of $\mu > 0$ samples per algorithmic iteration.}
\end{figure} 
\section{Conclusion}\label{sec:Conclusion}
In this paper, we studied the problem of estimating the principal eigenvector of a covariance matrix from independent and identically distributed data samples. Our particular focus in here was developing and analyzing two variants, termed \DK~and \DMK, of a classical stochastic algorithm that can estimate the top eigenvector in a near-optimal fashion from fast streaming data that overwhelms the processing capabilities of a single processor. Unlike the classical algorithm that must discard data samples in high-rate streaming settings, and thus sacrifice the convergence rate, the proposed algorithms manage the high-rate streaming data by trading off processing capabilities with computational resources and communications infrastructure. Specifically, both \DK~and \DMK~virtually slow down the rate of streaming data by spreading the processing of data samples across of a network of processing nodes. In addition, \DMK~can overcome slower communication links and/or lack of sufficient number of processing nodes through a network-wide mini-batching strategy, coupled with discarding of a small number of data samples per iteration.

Our theoretical analysis, which fundamentally required a characterization of the error incurred by the proposed algorithms as a function of the variance of the sample covariance matrix, established the conditions under which near-optimal convergence rate is achievable in the fast streaming setting, even when some data samples need to be discarded due to lack of sufficient computational and/or communication resources. We also carried out numerical experiments on both synthetic and real-world data to validate our theoretical findings.

In terms of future work, extension of our algorithmic and analytical framework for estimation of the principal subspace comprising multiple eigenvectors remains an open problem. In addition, tightening our theoretical analysis to better elucidate the role of dimensionality of data in the performance of the proposed algorithmic framework is an interesting problem.

\begin{appendices}
\section{Proofs of Lemmas for the Initial Epoch}\label{secApp:App_InitialPhase}

\subsection{Proof of Lemma~\ref{le:ErrorReccursion}}\label{secApp:App_InitialPhase_1}
In order to prove Lemma~\ref{le:ErrorReccursion}, we first need the following result.
	\begin{lemma}\label{le:xi_t_bound}
		The second moment of the update vector $\bxi_t$ in \DK~is upper bounded as
		$$\E\Bigg\{\frac{\|\bxi_t\|_2^2}{\|\bv_{t-1}\|_2^2}\Bigg\}\leq \frac{\mathbb{E}\left\{\|\bxi_t - \mathbb{E}\bxi_t\|_2^2\right\}}{\|\bv_{t-1}\|_2^2} + 2 \lambda_1^2\Psi_{t-1}.$$
	\end{lemma}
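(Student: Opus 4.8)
The plan is to carry out all the computations conditionally on $\cF_{t-1}$, so that $\bv_{t-1}$, $\Psi_{t-1}$, and the rank-$(d-1)$ orthogonal projector $\bP := \bI - \bv_{t-1}\bv_{t-1}^{\tT}/\|\bv_{t-1}\|_2^2$ onto $\bv_{t-1}^{\perp}$ are all deterministic. Writing the implicitly aggregated sample covariance $\bA_t := \tfrac1N\sum_{i=1}^N \bA_{i,t}$, which is independent of $\cF_{t-1}$ and satisfies $\E\{\bA_t \mid \cF_{t-1}\} = \bSigma$ by Assumption~\ref{Assum:A1} and the i.i.d.\ assumption, the update direction of \DK\ is exactly the residual $\bxi_t = \bP\bA_t\bv_{t-1}$, so that $\E\{\bxi_t \mid \cF_{t-1}\} = \bP\bSigma\bv_{t-1}$. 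A bias--variance decomposition then gives
\[
\E\bigl\{\|\bxi_t\|_2^2 \mid \cF_{t-1}\bigr\} \;=\; \E\bigl\{\|\bxi_t - \E\{\bxi_t\mid\cF_{t-1}\}\|_2^2 \mid \cF_{t-1}\bigr\} + \|\bP\bSigma\bv_{t-1}\|_2^2 ,
\]
and after dividing by the (frozen) scalar $\|\bv_{t-1}\|_2^2$ the lemma reduces to the purely deterministic estimate $\|\bP\bSigma\bv_{t-1}\|_2^2 \le 2\lambda_1^2\,\Psi_{t-1}\,\|\bv_{t-1}\|_2^2$, which is where the spectral gap (Assumption~\ref{Assum:A2}) will be used.

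To establish that estimate I would normalize, setting $\bu := \bv_{t-1}/\|\bv_{t-1}\|_2$ and $\alpha := \bu^{\tT}\bq^*$, so that $\Psi_{t-1} = 1 - \alpha^2$, and then exploit the elementary identity $\|\bP\bx\|_2^2 = \|\bx\|_2^2 - (\bu^{\tT}\bx)^2$ with $\bx = \bSigma\bu$. Splitting $\bSigma = \lambda_1\bq^*\bq^{*\tT} + \bSigma_\perp$, where $\bSigma_\perp$ is positive semi-definite, annihilates $\bq^*$, and has spectral norm $\lambda_2 \le \lambda_1$, the vector $\bSigma_\perp\bu$ is orthogonal to $\bq^*$, so Pythagoras gives $\|\bSigma\bu\|_2^2 = \alpha^2\lambda_1^2 + \|\bSigma_\perp\bu\|_2^2 \le \alpha^2\lambda_1^2 + (1-\alpha^2)\lambda_2^2$, using $\|\bSigma_\perp\bu\|_2 \le \lambda_2\|(\bI-\bq^*\bq^{*\tT})\bu\|_2 = \lambda_2\sqrt{1-\alpha^2}$; positive semi-definiteness also gives $\bu^{\tT}\bSigma\bu = \alpha^2\lambda_1 + \bu^{\tT}\bSigma_\perp\bu \ge \alpha^2\lambda_1 \ge 0$, hence $(\bu^{\tT}\bSigma\bu)^2 \ge \alpha^4\lambda_1^2$. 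Combining these two bounds and using $\lambda_2 \le \lambda_1$ and $\alpha^2 \le 1$,
\[
\|\bP\bSigma\bu\|_2^2 \;\le\; \alpha^2\lambda_1^2 + (1-\alpha^2)\lambda_2^2 - \alpha^4\lambda_1^2 \;=\; \alpha^2(1-\alpha^2)\lambda_1^2 + (1-\alpha^2)\lambda_2^2 \;\le\; 2(1-\alpha^2)\lambda_1^2 \;=\; 2\lambda_1^2\Psi_{t-1},
\]
and rescaling by $\|\bv_{t-1}\|_2^2$ yields the required deterministic estimate.

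I do not expect a serious obstacle here; the two points needing care are (i) making explicit that the ``$\E$'' in the statement is the conditional expectation given $\cF_{t-1}$ (equivalently, that $\bv_{t-1}$ may be treated as deterministic), which is what legitimizes pulling $\|\bv_{t-1}\|_2^2$ outside the expectation, and (ii) extracting the sharp constant $2$: a crude triangle-inequality split $\bP\bSigma\bu = \lambda_1\alpha\,\bP\bq^* + \bP\bSigma_\perp\bu$ only gives $4\lambda_1^2\Psi_{t-1}$ (or worse), and it is the identity $\|\bP\bx\|_2^2 = \|\bx\|_2^2 - (\bu^{\tT}\bx)^2$ together with $\bu^{\tT}\bSigma_\perp\bu \ge 0$ that removes the slack. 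Finally, the variance term $\E\{\|\bxi_t - \E\{\bxi_t\mid\cF_{t-1}\}\|_2^2\mid\cF_{t-1}\}/\|\bv_{t-1}\|_2^2$ is deliberately left in this raw form: when this lemma is fed into Lemma~\ref{le:ErrorReccursion}(i) one bounds the integrand pointwise via $\bxi_t - \E\{\bxi_t\mid\cF_{t-1}\} = \bP(\bA_t - \bSigma)\bv_{t-1}$ and $\|\bP\|_2 \le 1$ to get $\|\bA_t - \bSigma\|_F^2$, exactly the quantity that later produces the $\sigma_N^2$ in the final epoch.
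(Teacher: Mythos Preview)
Your proposal is correct and follows essentially the same route as the paper. Both arguments perform the bias--variance split $\E\{\|\bxi_t\|_2^2\} = \E\{\|\bxi_t-\E\bxi_t\|_2^2\} + \|\E\bxi_t\|_2^2$ and then bound the deterministic term $\|\E\bxi_t\|_2^2/\|\bv_{t-1}\|_2^2 = \bu^{\tT}\bSigma^2\bu - (\bu^{\tT}\bSigma\bu)^2$ by upper-bounding the first summand and lower-bounding the second; the paper does this via the full eigendecomposition $\bSigma=\sum_i\lambda_i\bq_i\bq_i^{\tT}$, while you use the equivalent two-block split $\bSigma=\lambda_1\bq^*\bq^{*\tT}+\bSigma_\perp$ together with the projector identity $\|\bP\bx\|_2^2=\|\bx\|_2^2-(\bu^{\tT}\bx)^2$, arriving at the identical intermediate bound $(1-\alpha^2)(\alpha^2\lambda_1^2+\lambda_2^2)\le 2\lambda_1^2\Psi_{t-1}$.
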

	\begin{proof}
		We start by writing $\E\left\{\|\bxi_t - \E\{\bxi_t\}\|_2^2\right\}$ in terms of $\E\left\{\|\bxi_t\|_2^2\right\}$ as follows:
		\begin{align*}
		\E\left\{\|\bxi_t - \E\{\bxi_t\}\|_2^2\right\} &= \E\Bigg\{\bxi_t^{\tT}\bxi_t + (\E\{\bxi_t\})^{\tT}\E\{\bxi_t\} - \bxi_t^{\tT}\E\{\bxi_t\} - (\E\{\bxi_t\})^{\tT}\bxi_t\Bigg\}\\
		&=\E\{\|\bxi_t\|_2^2\} - \E\{\bxi_t^{\tT}\} \E\{\bxi_t\}.
		\end{align*}
		Now defining $C_t:=\E\{\bxi_t^{\tT}\}\E\{\bxi_t\}$ and rearranging the above equation, we get
		$$\mathbb{E}\{\|\bxi_t\|_2^2\} = \mathbb{E}\{\|\bxi_t - \mathbb{E}\{\bxi_t\}\|_2^2\} + C_t.$$
		Next, substituting value of $\bxi_t$ from~\eqref{eqn:MiniBatchOjaRule} we get
		\begin{align}\label{eqn:XiNormDeterministic}
		\frac{C_t}{\|\bv_{t-1}\|_2^2} = \frac{\E\{\bxi_t^{\tT}\}\E\{\bxi_t\}}{\|\bv_{t-1}\|_2^2} &= \frac{1}{\|\bv_{t-1}\|_2^2}\Bigg(\bSigma \bv_{t-1}-\frac{\bv_{t-1}^{\tT}\bSigma \bv_{t-1}\bv_{t-1}}{\bv_{t-1}^{\tT}\bv_{t-1}}\Bigg)^{\tT}\Bigg(\bSigma \bv_{t-1}-\frac{\bv_{t-1}^{\tT}\bSigma \bv_{t-1}\bv_{t-1}}{\bv_{t-1}^{\tT}\bv_{t-1}}\Bigg)\nonumber\\
		&= \frac{\bv_{t-1}^{\tT}\bSigma^2 \bv_{t-1}}{\|\bv_{t-1}\|_2^2}-\Bigg(\frac{\bv_{t-1}^{\tT}\bSigma \bv_{t-1}}{\|\bv_{t-1}\|_2^2}\Bigg)^2.
		\end{align}
		Since $\bSigma$ is a positive semi-definite matrix, we can write its eigenvalue decomposition as
		$\bSigma = \sum_{i=1}^{d}\lambda_i \bq_i \bq_i^{\tT},$
		where $\lambda_1>\lambda_2\geq\dots\geq \lambda_d \geq 0$ and $\bq_1 (\equiv \bq^*),\bq_2,\dots,\bq_d$ are the eigenvalues and corresponding eigenvectors of $\bSigma$, respectively. It follows that
		\begin{align*}
		\frac{C_t}{\|\bv_{t-1}\|_2^2}
		&= \sum_{i=1}^{d}{\lambda_i^2 \frac{(\bv_{t-1}^{\tT}\bq_i)^2}{\|\bv_{t-1}\|_2^2}}-\Bigg(\sum_{i=1}^{d}{\lambda_i \frac{(\bv_{t-1}^{\tT}\bq_i)^2}{\|\bv_{t-1}\|_2^2}}\Bigg)^2\nonumber\\
		&= \lambda_1^2 \frac{(\bv_{t-1}^{\tT}\bq^*)^2}{\|\bv_{t-1}\|_2^2} + \sum_{i=2}^{d}{\lambda_i^2 \frac{(\bv_{t-1}^{\tT}\bq_i)^2}{\|\bv_{t-1}\|_2^2}}-\Bigg(\lambda_1 \frac{(\bv_{t-1}^{\tT}\bq^*)^2}{\|\bv_{t-1}\|_2^2} + \sum_{i=2}^{d}{\lambda_i \frac{(\bv_{t-1}^{\tT}\bq_i)^2}{\|\bv_{t-1}\|_2^2}}\Bigg)^2\nonumber\\
		&\leq \lambda_1^2 \frac{(\bv_{t-1}^{\tT}\bq^*)^2}{\|\bv_{t-1}\|_2^2} + \lambda_2^2 \sum_{i=2}^{d}{ \frac{(\bv_{t-1}^{\tT}\bq_i)^2}{\|\bv_{t-1}\|_2^2}}-\lambda_1^2 \frac{(\bv_{t-1}^{\tT}\bq^*)^4}{\|\bv_{t-1}\|_2^4}\nonumber\\
		&= \lambda_1^2 \frac{(\bv_{t-1}^{\tT}\bq^*)^2}{\|\bv_{t-1}\|_2^2}\Bigg(1 - \frac{(\bv_{t-1}^{\tT}\bq^*)^2}{\|\bv_{t-1}\|_2^2}\Bigg) + \lambda_2^2 \Bigg(1 - \frac{(\bv_{t-1}^{\tT}\bq^*)^2}{\|\bv_{t-1}\|_2^2}\Bigg).
		\end{align*}
		Finally, we get from definition of $\Psi_{t-1}$ that
		$$\frac{C_t}{\|\bv_{t-1}\|_2^2}\leq \Psi_{t-1}\big((1 - \Psi_{t-1})\lambda_1^2+\lambda_2^2\big)\leq \Psi_{t-1}\big(\lambda_1^2+\lambda_2^2\big)\leq 2\lambda_1^2\Psi_{t-1}.$$
    This completes the proof of the lemma.
	\end{proof}

    Using Lemma~\ref{le:xi_t_bound}, we can now prove Lemma~\ref{le:ErrorReccursion} in the following.
\begin{proof}[Proof of Lemma~\ref{le:ErrorReccursion}] From~\eqref{eqn:Error}, we have
	$\Psi_t = \frac{\|\bv_t\|_2^2-(\bv_t^{\tT}\bq^*)^2}{\|\bv_t\|_2^2}.$
	Substituting $\bv_t$ from~\eqref{eqn:MiniBatchOjaRule}, we get
	\begin{align}\label{eqn:PsiRecurrence}
	\Psi_t &= \frac{\|\bv_{t-1}+\gamma_t\bxi_t\|_2^2-((\bv_{t-1}+\gamma_t \bxi_t)^{\tT}\bq^*)^2}{\|\bv_{t}\|_2^2} \stackrel{(a)}= \frac{\|\bv_{t-1}\|_2^2+\gamma_t^2\|\bxi_t\|_2^2-((\bv_{t-1}+\gamma_t \bxi_t)^{\tT}\bq^*)^2}{\|\bv_{t}\|_2^2}\nonumber\\
	&\stackrel{(b)}\leq \frac{\|\bv_{t-1}\|_2^2+\gamma_t^2\|\bxi_t\|_2^2-((\bv_{t-1}+\gamma_t \bxi_t)^{\tT}\bq^*)^2}{\|\bv_{t-1}\|_2^2} = 1+\gamma_t^2\frac{\|\bxi_t\|_2^2}{\|\bv_{t-1}\|_2^2}-\frac{((\bv_{t-1}+\gamma_t \bxi_t)^{\tT}\bq^*)^2}{\|\bv_{t-1}\|_2^2}\nonumber\\
	&= 1+\gamma_t^2\frac{\|\bxi_t\|_2^2}{\|\bv_{t-1}\|_2^2}- \frac{(\bv_{t-1}^{\tT}\bq^*)^2+\gamma_t^2( \bxi_t^{\tT}\bq^*)^2 + 2\gamma_t (\bv_{t-1}^{\tT}\bq^*)(\bxi_t^{\tT}\bq^*)}{\|\bv_{t-1}\|_2^2}\nonumber\\
	&= 1 - \frac{(\bv_{t-1}^{\tT}\bq^*)^2}{\|\bv_{t-1}\|_2^2} +\gamma_t^2\frac{\|\bxi_t\|_2^2-( \bxi_t^{\tT}\bq^*)^2}{\|\bv_{t-1}\|_2^2} - 2\gamma_t \frac{(\bv_{t-1}^{\tT}\bq^*)(\bxi_t^{\tT}\bq^*)}{\|\bv_{t-1}\|_2^2}\nonumber\\
	&= \Psi_{t-1} +\gamma_t^2\frac{\|\bxi_t\|_2^2}{\|\bv_{t-1}\|_2^2} - 2\gamma_t \frac{(\bv_{t-1}^{\tT}\bq^*)(\bxi_t^{\tT}\bq^*)}{\|\bv_{t-1}\|_2^2}.
	\end{align}
	Here ($a$) and ($b$) are due to~\cite[Lemma A.1]{Balsubramani2015}, where ($a$) is true because $\bv_{t-1}$ is perpendicular to $\bxi_t$ and ($b$) is true because $\|\bv_{t-1}\|_2\leq \|\bv_t\|_2$. The second term in the above inequality can be bounded as
	\begin{align}\label{eqn:XiNorm}
	\frac{\|\bxi_t\|_2^2}{\|\bv_{t-1}\|_2^2}&=\frac{\|\bxi_t - \E\{\bxi_t\}\|_2^2 + \E\{\bxi_t^{\tT}\}\E\{\bxi_t\}}{\|\bv_t\|_2^2} \stackrel{(c)}{\leq} \frac{\mathbb{E}\left\{\|\bxi_t - \mathbb{E}\{\bxi_t\}\|_2^2\right\}}{\|\bv_{t-1}\|_2^2} +2\lambda_1^2 \Psi_{t-1}\nonumber\\
	&= \frac{1}{\|\bv_{t-1}\|_2^2}\E\Bigg\{\Bigg\|\frac{1}{N}\sum_{i=1}^{N}\bA_{i,t} \bv_{t-1}-\frac{1}{\|\bv_{t-1}\|_2^2}\Big(\bv_{t-1}^{\tT}\frac{1}{N}\sum_{i=1}^{N}\bA_{i,t} \bv_{t-1} \bv_{t-1}\Big) \nonumber\\
	&\qquad - \E\Big\{\frac{1}{N}\sum_{i=1}^{N}\bA_{i,t} \bv_{t-1}-\frac{1}{\|\bv_{t-1}\|_2^2}\Big(\bv_{t-1}^{\tT}\frac{1}{N}\sum_{i=1}^{N}\bA_{i,t} \bv_{t-1} \bv_{t-1}\Big)\Big\}\Bigg\|_2^2\Bigg\}\nonumber\\
    &= \frac{1}{\|\bv_{t-1}\|_2^2}\E\Bigg\{\Big\|\frac{1}{N}\sum_{i=1}^{N}\bA_{i,t} \bv_{t-1}-\frac{1}{\|\bv_{t-1}\|_2^2}\Big(\bv_{t-1}^{\tT}\frac{1}{N}\sum_{i=1}^{N}\bA_{i,t} \bv_{t-1} \bv_{t-1}\Big) \nonumber\\
    &\qquad - \bSigma \bv_{t-1}+\frac{1}{\|\bv_{t-1}\|_2^2}\Big(\bv_{t-1}^{\tT}\bSigma \bv_{t-1} \bv_{t-1}\Big)\Big\|_2^2\Bigg\}\nonumber\\
    &= \frac{1}{\|\bv_{t-1}\|_2^2}\E\Bigg\{\Big\|\Big(\frac{1}{N}\sum_{i=1}^{N}\bA_{i,t} - \bSigma\Big)\bv_{t-1} -\frac{1}{\|\bv_{t-1}\|_2^2}\bv_{t-1}^{\tT}\Big(\frac{1}{N}\sum_{i=1}^{N}\bA_{i,t}-\bSigma \Big)\bv_{t-1} \bv_{t-1}\Big\|_2^2\Bigg\} \nonumber\\
	&\leq 4\Bigg\|\frac{1}{N}\sum_{i=1}^{N}\bA_{i,t} - \bSigma\Bigg\|_2^2+2\lambda_1^2 \Psi_{t-1} \leq 4\Bigg\|\frac{1}{N}\sum_{i=1}^{N}\bA_{i,t} - \bSigma\Bigg\|_F^2+2\lambda_1^2 \Psi_{t-1},
	\end{align}
where ($c$) is due to Lemma~\ref{le:xi_t_bound}. Substituting \eqref{eqn:XiNorm} in~\eqref{eqn:PsiRecurrence} completes the proof of Part~($i$) of Lemma~\ref{le:ErrorReccursion}. Next, we prove Part~($ii$) of the lemma by defining $\widehat{\bv}_{t-1}=\bv_{t-1}/\|\bv_{t-1}\|_2$ and noting that
	\begin{align}
	\frac{\|\bxi_t\|_2^2}{\|\bv_{t-1}\|_2^2}&=  \frac{\|(1/N)\sum_{i=1}^{N}\bxi_{i,t}\|_2^2}{\|\bv_{t-1}\|_2^2} =  \frac{(1/N^2)\|\sum_{i=1}^{N}\bxi_{i,t}\|_2^2}{\|\bv_{t-1}\|_2^2}\nonumber\\
	&\stackrel{(d)}{\leq} \frac{(1/N^2)\sum_{i=1}^{N}N\|\bxi_{i,t}\|_2^2}{\|\bv_{t-1}\|_2^2} = \frac{\sum_{i=1}^{N} (\bx_{i,t}^{\tT}\bv_{t-1})^2 \|\bx_{i,t} -(\bx_{i,t}^{\tT}\widehat{\bv}_{t-1})\widehat{\bv}_{t-1}\|_2^2}{N\|\bv_{t-1}\|_2^2}\nonumber\\
	&\leq \frac{1}{N}\sum_{i=1}^{N}\|\bx_{i,t}\|_2^2 \|\bx_{i,t} -(\bx_{i,t}^{\tT}\widehat{\bv}_{t-1})\widehat{\bv}_{t-1}\|_2^2 = \frac{1}{N}\sum_{i=1}^{N}\|\bx_{i,t}\|_2^2 (\|\bx_{i,t}\|_2^2 - (\bx_{i,t}^{\tT}\widehat{\bv}_{t-1})^2)\nonumber\\
	&\leq \sum_{i=1}^{N}\frac{\|\bx_{i,t}\|_2^4}{N}\leq \max_i{\|\bx_{i,t}\|_2^4}\leq r^4.
	\end{align}
	Here, ($d$) is by using Cauchy--Schwartz inquality and the last inequality is due to Assumption~{\bf [A1]}. Now substituting this in~\eqref{eqn:PsiRecurrence} completes the proof.
\end{proof}
	
\subsection{Proof of Lemma~\ref{le:Variance_zt}}\label{secApp:App_InitialPhase_2}
We begin by writing
\begin{align*}
		\E\{(z_t - \E\{z_t\})^2|\cF_{t-1}\} &= \E\Bigg\{\Bigg(\frac{2\gamma_t (\bv_{t-1}^{\tT}\bq^*)(\bxi_t^{\tT}\bq^*)}{\|\bv_{t-1}\|_2^2}-\E\Big\{\frac{2\gamma_t (\bv_{t-1}^{\tT}\bq^*)(\bxi_t^{\tT}\bq^*)}{\|\bv_{t-1}\|_2^2}\Big\}\Bigg)^2\Bigg|\cF_{t-1}\Bigg\}\\
		&= \frac{4\gamma_t^2(\bv_{t-1}^{\tT}\bq^*)^2}{\|\bv_{t-1}\|_2^4}\E\Bigg\{\Bigg(\bxi_t^{\tT}\bq^*-\E\Big\{\bxi_t^{\tT}\bq^*\Big\}\Bigg)^2\Bigg\}.
		\end{align*}
Substituting value of $\bxi_t$ in this, we get
		\begin{align*}
		\E\{(z_t - \E\{z_t\})^2|\cF_{t-1}\} &= \frac{4\gamma_t^2(\bv_{t-1}^{\tT}\bq^*)^2}{\|\bv_{t-1}\|_2^4}\E\Bigg\{\Bigg(\Big(\frac{1}{N}\sum_{i=1}^{N}\bA_{i,t} \bv_{t-1} - \frac{\bv_{t-1}^{\tT}\frac{1}{N}\sum_{i=1}^{N}\bA_{i,t}\bv_{t-1} \bv_{t-1} }{\|\bv_{t-1}\|_2^2}\Big)^{\tT}\bq^*\\
		&\qquad\qquad-\E\Big\{\Big(\frac{1}{N}\sum_{i=1}^{N}\bA_{i,t} \bv_{t-1} - \frac{\bv_{t-1}^{\tT}\frac{1}{N}\sum_{i=1}^{N}\bA_{i,t}\bv_{t-1} \bv_{t-1} }{\|\bv_{t-1}\|_2^2}\Big)^{\tT}\bq^*\Big\}\Bigg)^2\Bigg\}\\
		&= \frac{4\gamma_t^2(\bv_{t-1}^{\tT}\bq^*)^2}{\|\bv_{t-1}\|_2^4}\E\Bigg\{\Bigg(\Big(\frac{1}{N}\sum_{i=1}^{N}\bA_{i,t} \bv_{t-1} - \frac{\bv_{t-1}^{\tT}\frac{1}{N}\sum_{i=1}^{N}\bA_{i,t}\bv_{t-1} \bv_{t-1} }{\|\bv_{t-1}\|_2^2}\Big)^{\tT}q_1\\
		&\qquad\qquad-\bv_{t-1}^{\tT}\E\Big\{\frac{1}{N}\sum_{i=1}^{N}\bA_{i,t}\Big\} \bq^* + \frac{\bv_{t-1}^{\tT}\bv_{t-1}^{\tT}\E\{\frac{1}{N}\sum_{i=1}^{N}\bA_{i,t}\}\bv_{t-1} }{\|\bv_{t-1}\|_2^2}  \bq^*\Bigg)^2\Bigg\}.
		\end{align*}
Since $\E\Big\{\frac{1}{N}\sum_{i=1}^{N}\bA_{i,t}\Big\}$ is the covariance matrix $\bSigma$, we get
		\begin{align}\label{eqn:z_var}
		\E\{(z_t - &\E\{z_t\})^2|\cF_{t-1}\}\nonumber\\
		&= \frac{4\gamma_t^2(\bv_{t-1}^{\tT}\bq^*)^2}{\|\bv_{t-1}\|_2^4}\E\Bigg\{\Bigg(\Big((\frac{1}{N}\sum_{i=1}^{N}\bA_{i,t}-\bSigma) \bv_{t-1} - \frac{\bv_{t-1}^{\tT}(\frac{1}{N}\sum_{i=1}^{N}\bA_{i,t}-\bSigma)\bv_{t-1} \bv_{t-1} }{\|\bv_{t-1}\|_2^2}\Big)^{\tT}\bq^*\Bigg)^2\Bigg\},\nonumber\\
		&= \frac{4\gamma_t^2(\bv_{t-1}^{\tT}\bq^*)^2}{\|\bv_{t-1}\|_2^4}\E\Bigg\{\Bigg(\Big({\bq^*}^{\tT}(\frac{1}{N}\sum_{i=1}^{N}\bA_{i,t}-\bSigma) \bv_{t-1} - \frac{\Big(\bv_{t-1}^{\tT}(\frac{1}{N}\sum_{i=1}^{N}\bA_{i,t}-\bSigma)\bv_{t-1}\Big) {\bq^*}^{\tT}\bv_{t-1}}{\|\bv_{t-1}\|_2^2}\Big)\Bigg)^2\Bigg\}\nonumber\\
		&\leq \frac{8\gamma_t^2(\bv_{t-1}^{\tT}\bq^*)^2}{\|\bv_{t-1}\|_2^4}\E\Bigg\{\Bigg({\bq^*}^{\tT}(\frac{1}{N}\sum_{i=1}^{N}\bA_{i,t}-\bSigma) \bv_{t-1}\Bigg)^2 + \Bigg(\frac{\Big(\bv_{t-1}^{\tT}(\frac{1}{N}\sum_{i=1}^{N}\bA_{i,t}-\bSigma)\bv_{t-1}\Big) {\bq^*}^{\tT}\bv_{t-1}}{\|\bv_{t-1}\|_2^2}\Bigg)^2\Bigg\}\nonumber\\
		&= \frac{8\gamma_t^2(\bv_{t-1}^{\tT}\bq^*)^2}{\|\bv_{t-1}\|_2^2}\E\Bigg\{\Bigg(\frac{{\bq^*}^{\tT}(\frac{1}{N}\sum_{i=1}^{N}\bA_{i,t}-\bSigma) \bv_{t-1}}{\|\bv_{t-1}\|_2}\Bigg)^2 + \Bigg(\frac{\bv_{t-1}^{\tT}(\frac{1}{N}\sum_{i=1}^{N}\bA_{i,t}-\bSigma)\bv_{t-1}}{\|\bv_{t-1}\|_2^2}\Bigg)^2 \Bigg(\frac{{\bq^*}^{\tT}\bv_{t-1}}{\|\bv_{t-1}\|_2}\Bigg)^2\Bigg\}\nonumber\\
		&\leq 8\gamma_t^2\E\Bigg\{\Bigg(\frac{{\bq^*}^{\tT}(\frac{1}{N}\sum_{i=1}^{N}\bA_{i,t}-\bSigma) \bv_{t-1}}{\|\bv_{t-1}\|_2}\Bigg)^2 + \Bigg(\frac{\bv_{t-1}^{\tT}(\frac{1}{N}\sum_{i=1}^{N}\bA_{i,t}-\bSigma)\bv_{t-1}}{\|\bv_{t-1}\|_2^2}\Bigg)^2 \Bigg\},
		\end{align}
where the last inequality in \eqref{eqn:z_var} is due to the fact that $\Bigg(\frac{{\bq^*}^{\tT}\bv_{t-1}}{\|\bv_{t-1}\|_2}\Bigg)^2\leq 1$. We can see that both the remaining terms in \eqref{eqn:z_var} are Rayleigh quotients of matrix $(\bSigma-\frac{1}{N}\sum_{i=1}^{N}\bA_{i,t})$ and hence the largest eigenvalue of $(\bSigma-\frac{1}{N}\sum_{i=1}^{N}\bA_{i,t})$ maximizes both the terms. Using this fact we get
		\begin{align*}
		\E\{(z_t - \E\{z_t\})^2|\cF_{t-1}\} \leq 16\gamma_t^2 \E\{\|\bSigma - \frac{1}{N}\sum_{i=1}^{N}\bA_{i,t}\|_2^2\}\leq 16\gamma_t^2 \E\{\|\bSigma - \frac{1}{N}\sum_{i=1}^{N}\bA_{i,t}\|_F^2\}.
		\end{align*}
Using Definition \ref{def:SampleVarianceDistributed}, we get $\E\{(z_t - \E\{z_t\})^2|\cF_{t-1}\} \leq 16\gamma_t^2 \sigma_N^2,$ which completes the proof. \qed

\subsection{Proof of Lemma~\ref{le:MGF_Bound}}\label{secApp:App_InitialPhase_3}
Using Lemma~\ref{le:ErrorReccursion}, we can write the moment generating function of $\Psi_t$ as follows:
		\begin{align}\label{eqn:Proof_MGF}
		\E\{\exp(s\Psi_t)|\cF_{t-1}\} &\leq \E\Big\{\exp\Big(s\Psi_{t-1} +s\gamma_t^2 r^4 - s z_t\Big)\Big|\cF_{t-1}\Big\} = \exp(s\Psi_{t-1}+s\gamma_t^2  r^4) \E\Big\{\exp\Big(- s z_t\Big)\Big|\cF_{t-1}\Big\}\nonumber\\
		&= \exp(s\Psi_{t-1} + s\gamma_t^2 r^4 - s\E\{z_t|\cF_{t-1}\}) \E\Big\{\exp\Big(- s (z_t - \E\{z_t\})\Big)\Big|\cF_{t-1}\Big\}.
		\end{align}
We can bound this using Bennett's inequality (Proposition \ref{prop:BennetsInequality} in Appendix~\ref{sec:App_OtherResults}), which requires the variance and range of the random variable $z_t$. We have already computed the variance of $z_t$ in Lemma~\ref{le:Variance_zt}. Next we compute the boundedness of $(z_t - \E\{z_t\})$ as follows:
		\begin{align}\label{eqn:h_def}
		\Big|z_t - \E\{z_t\}\Big|&\leq 2|z_t|\leq 2\gamma_t \|\bx_{i,t}\|_2^2\leq 2\gamma_t r^2=:h.
		\end{align}
		Here, the last inequality is due to Assumption \ref{Assum:A1}. Using parameters $\sigma_N^2$ and $h$ with Bennett's inequality, we get
		\begin{align}\label{eqn:ApplyingBennett}
		\mathbb{E}\{\exp(s\Psi_t)|\cF_{t-1}\}\leq \exp\Bigg(s\Psi_{t-1} - s\E\{z_t|\cF_{t-1}\} + s\gamma_t^2 r^4 +s^2\gamma_t^2 \sigma_N^2\Bigg(\frac{e^{s h} - 1 - sh}{(sh)^2}\Bigg)\Bigg).
		\end{align}
		For $L\geq L_1+L_2$, where $L_1$ and $L_2$ are given by \eqref{eqn:LowerboundL}, we show in Proposition~\ref{prop:MGF_t0} in Appendix~\ref{sec:App_OtherResults} that $(\frac{e^{sh}-1-sh}{(sh)^2})\leq 1$ for $s \in \bbS$. This implies
		\begin{align*}
		\mathbb{E}\{\exp(s\Psi_t)|\cF_{t-1}\}&\leq \exp\Bigg(s\Psi_{t-1} - s\E\{z_t|\cF_{t-1}\} + s\gamma_t^2 r^4 +s^2\gamma_t^2 \sigma_N^2\Bigg),
		\end{align*}
which completes the proof of the lemma. \qed	
	
	\section{Proofs of Lemmas for the Intermediate Epoch}\label{sec:App_Intermediate}
	\subsection{Proof of Lemma~\ref{le:le4}}\label{sec:App_Intermediate_1}
		Using Lemma \ref{le:MGF_Bound}, we have
		\begin{align}
		\E\{e^{s\Psi_{t}}\big|\cF_{t-1}\}&\leq \exp\Bigg(s\Bigg(\Psi_{t-1} + \gamma_t^2 r^4 - \E\{z_t|\cF_{t-1}\} + s \gamma_t^2 \sigma_N^2 \Bigg) \Bigg)\nonumber\\
		&\stackrel{(a)}\leq \exp\Bigg(s\Bigg(\Psi_{t-1} - 2\gamma_t \Big(\lambda_1 - \lambda_2\Big)\Psi_{t-1}\Big(1 - \Psi_{t-1}\Big) + \gamma_t^2 r^4 + s \gamma_t^2 \sigma_N^2 \Bigg)\Bigg)\nonumber\\
		&\stackrel{(b)}\leq \exp\Bigg(s\Bigg(\Psi_{t-1} - \frac{c_0 \Psi_{t-1}\Big(1 - \Psi_{t-1}\Big)}{t+L} + \frac{c^2 r^4}{(t+L)^2} + \frac{s c^2 \sigma_N^2}{(t+L)^2} \Bigg)\Bigg).
		\end{align}
		Here, $(a)$ is due to \cite[Lemma A.3]{Balsubramani2015} and $(b)$ is by substituting $\gamma_t=c/(t+L)=c_0/2(\lambda_1 - \lambda_2)(t+L)$. Finally, for $\omega\in \Omega_t^{'}$ we have $\Psi_{t-1}(\omega)\leq 1-\epsilon_j$. Now taking expectation over $\Omega_t^{'}$, we get the desired result. \qed

	\subsection{Proof of Lemma~\ref{le:MGF_n}}\label{sec:App_Intermediate_2}
		Define $\alpha_t := 1 - \frac{c_0 \epsilon_j}{t+L}$ and $\zeta_t(s) :=\frac{s c^2 r^4}{(t+L)^2}+\frac{s^2 c^2 \sigma_N^2}{(t+L)^2}$. Substituting $\alpha_t$ and $\zeta_t(s)$ in Lemma~\ref{le:le4}, we get
		\begin{align}\label{eqn:Psi_MGF}
		\E_t\big\{e^{s\Psi_t}\big\}\leq \E_{t}\big\{e^{s\alpha_t \Psi_{t-1}}\big\}\exp{\big(\zeta_t(s)\big)}\leq \E_{t-1}\big\{e^{s\alpha_t \Psi_{t-1}}\big\}\exp{\big(\zeta_t(s)\big)}.
		\end{align}
		Note that the second inequality in \eqref{eqn:Psi_MGF} is due to \cite[Lemma 2.8]{Balsubramani2015}. Applying this procedure repeatedly yields
		\begin{align*}
		\E_t\big\{e^{s\Psi_t}\big\}&\leq \E_{t_{j}+1}\big\{\exp{\big(s\Psi_{t_j} \alpha_t\dots\alpha_{t_{j}+1}\big)}\big\}\exp{\big(\zeta_t(s)\big)}\dots\exp{\big(\zeta_{t_{j}+1}\big(s\alpha_t\dots\alpha_{t_{j}+1}\big)\big)}\\
		&\leq \E_{t_{j}+1}\big\{\exp{\big(s\Psi_{t_j} \alpha_t\dots\alpha_{t_j+1}\big)}\big\}\exp{\big(\zeta_t(s)\big)}\dots\exp{\big(\zeta_{t_j+1}\big(s\big)\big)}.
		\end{align*}
		Substituting values of $\alpha_t$ and $\zeta_t(s)$ in the above, we get
		\begin{align}\label{eqn:l5e1}
		\E_t\big\{e^{s\Psi_t}\big\}&\leq \E_{t_{j}+1}\Big\{\exp{\Big(s\Psi_{t_j} \Big(1-\frac{c_0 \epsilon_j}{t+L}\Big)\dots \Big(1-\frac{c_0 \epsilon_j}{t_j + L + 1}\Big)\Big)}\Big\}\nonumber\\
		&\qquad\qquad\exp{\Bigg(\Big(s c^2 r^4+s^2 c^2 \sigma_N^2\Big) \Big(\frac{1}{(t+L)^2}+\dots+\frac{1}{(t_j + L + 1)^2}\Big) \Bigg)}\nonumber\\
		&\leq \exp{\Big(s(1-\epsilon_j)\exp{\Big(-c_0 \epsilon_j\Big(\frac{1}{t+L}+\dots+\frac{1}{t_j +L + 1}\Big)\Big)}\Big)}\nonumber\\
		&\qquad\qquad\exp{\Bigg(\Big(s c^2 r^4+s^2 c^2 \sigma_N^2\Big) \Big(\frac{1}{(t + L)^2}+\dots+\frac{1}{(t_j + L + 1)^2}\Big) \Bigg)}.
		\end{align}
Here, the last inequality is true because $\Psi_{t_j}(\omega)\leq 1-\epsilon_j$ for $\omega\in\Omega^{'}_{t_j + 1}$ and $1-x\leq e^{-x}$ for $x\leq 1$. Next we bound the summations in \eqref{eqn:l5e1} as follows:
		$$\frac{1}{t+L}+\dots+\frac{1}{t_j + L + 1}\geq \int_{t_j + 1}^{t + 1}{\frac{dx}{x+L}}=\ln{\frac{t + L + 1}{t_j + L + 1}},$$
		$$\frac{1}{(t+L)^2}+\dots+\frac{1}{(t_j + L + 1)^2}\leq \int_{t_j}^{t}{\frac{dx}{(x+L)^2}}=\frac{1}{t_j+L}-\frac{1}{t+L}.$$
		Substituting these bounds in \eqref{eqn:l5e1}, we get the desired result. \qed

	\subsection{Proof of Lemma~\ref{le:MGF_nj1}}\label{sec:App_Intermediate_3}
			This lemma uses Lemma~\ref{le:MGF_n} and deals with a specific value of $t =t_{j+1}$.
		For $t=t_{j+1}$, \eqref{eqn:MGF_n} gives
		\begin{align}\label{eqn:MGF_nj1}
		\E_{t_j+1}\{e^{s\Psi_{t_j+1}}\}\leq \exp{\Bigg(s(1-\epsilon_j)\Bigg(\frac{t_j + L + 1}{t_{j+1} + L + 1}\Bigg)^{c_0 \epsilon_j}+ \Bigg(sc^2 r^4+s^2 c^2 \sigma_N^2\Bigg)\Bigg(\frac{1}{t_j + L}-\frac{1}{t_{j+1} + L}\Bigg)\Bigg)}.
		\end{align}
		Using conditions \ref{Condition:C1} and \ref{Condition:C2} and the fact that $e^{-2x}\leq 1 - x$ for $0\leq x \leq 3/4$, we get
		$$(1-\epsilon_j)\Big(\frac{t_j + L + 1}{t_{j+1} + L + 1}\Big)^{c_0 \epsilon_j}\leq e^{-\epsilon_j}(e^{-5/c_0 })^{c_0 \epsilon_j}=e^{-6\epsilon_j}\leq 1 - 3\epsilon_j \leq 1 - \epsilon_{j+1}-\epsilon_j.$$
		Substituting this in \eqref{eqn:MGF_nj1}, we obtain the desired result. \qed

\subsection{Proof of Lemma~\ref{le:Probability_tj}}\label{sec:App_Intermediate_4}
		Constructing a supermartingale sequence $M_t$ in the same way as we did in Theorem~\ref{thm:ProbabilityBound} for $s \in \bbS$ and applying Doob's martingale inequality, we get
		\begin{align*}
		\bPr_{t_j}\Big(\sup_{t\geq t_j} \Psi_t \geq 1 - \epsilon_j\Big)&\leq \bPr_{t_j}\Big(\sup_{t\geq t_j} M_t \geq e^{s(1-\epsilon_j)}\Big) \leq \frac{\E\{M_{t_j}\}}{e^{s(1-\epsilon_j)}}\\
		&=\frac{\E\big\{\exp{(s\Psi_{t_j}+s\tau_{t_j})}\big\}}{e^{s(1-\epsilon_j)}} =\frac{\E\big\{\exp{(s\Psi_{t_j})}\big\}\exp{(s\tau_{t_j})}}{e^{s(1-\epsilon_j)}}.
		\end{align*}
		Using Lemma \ref{le:MGF_nj1} then results in
		$$\bPr_{t_j}\Big(\sup_{t\geq t_j} \Psi_t \geq 1 - \epsilon_j\Big)\leq \frac{1}{e^{s(1-\epsilon_j)}}\exp{\Bigg(s(1-\epsilon_j)-s\epsilon_{j-1}+\Big(s c^2 r^4+s^2 c^2 \sigma_N^2\Big)\Big(\frac{1}{t_{j-1}+L} - \frac{1}{t_j+L}\Big)+s\tau_{t_j}\Bigg)}.$$
        Substituting a bound on $\tau_{t_j}$ from Theorem \ref{thm:ProbabilityBound} (see, e.g., the discussion around~\eqref{eqn:tauUpperBound}), we get
		\begin{align*}
		\bPr_{t_j}\Big(\sup_{t\geq t_j} \Psi_t \geq 1 - \epsilon_j\Big)&\leq \exp{\Bigg(-s\epsilon_{j-1}+\Big(s c^2  r^4+s^2 c^2 \sigma_N^2\Big)\Big(\frac{1}{t_{j-1}+L} - \frac{1}{t_j+L}\Big)+s\Big( c^2 r^4 +s c^2 \sigma_N^2\Big)\frac{1}{t_j+L}\Bigg)}\\
		&= \exp{\Bigg(-s\epsilon_{j-1}+s\Big(c^2 r^4 + s c^2 \sigma_N^2\Big)\frac{1}{t_{j-1}+L}\Bigg)}.
		\end{align*}
		Substituting $s=(2/\epsilon_0)\ln{(4/\delta)}$ and using the lower bound on $L$, we get (see Proposition~\ref{prop:Probability_tj_1} in Appendix~\ref{sec:App_OtherResults} for formal verification)
		\begin{align*}
		\bPr_{t_j}\Big(\sup_{t\geq t_j} \Psi_t \geq 1 - \epsilon_j\Big)\leq \exp{\Big(-\frac{s\epsilon_{j-1}}{2}\Big)}=\Bigg(\frac{\delta}{4}\Bigg)^{\epsilon_{j-1}/\epsilon_0}\leq \frac{\delta}{2^{j+1}}.
		\end{align*}
		Summing over $j$ completes the proof of the lemma. \qed

\section{Proofs for the Final Epoch}\label{sec:App_FinalEpoch}

\begin{proof}[Proof of Lemma~\ref{le:FinalRate}]
From Lemma \ref{le:ErrorReccursion}, Part ($i$), we have
		\begin{align*}
		\Psi_t \leq \Psi_{t-1} + 4\gamma_t^2 \Big(\Big\|\frac{1}{N}\sum_{i=1}^{N}{\bA_{i,t}} - \bSigma\Big\|_F^2 + \lambda_1^2\Psi_{t-1}\Big) - z_{t}.
		\end{align*}
		Taking expectation conditioned on $\cF_{t-1}$, we get
		\begin{align*}
		\E\{\Psi_t|\cF_{t-1}\} \leq \Psi_{t-1}(1 + \gamma_t^2 \lambda_1^2) + 4\gamma_t^2 \sigma_N^2 - \E\big\{z_t\big|\cF_{t-1}\big\},
		\end{align*}
		where the second term is due to Lemma~\ref{le:xi_t_bound}. Now using upper bound on $-\E\big\{z_t\big|\cF_{t-1}\big\}$ from \cite[Lemma A.4]{Balsubramani2015}, we get the following:
		\begin{align*}
		\E\{\Psi_t|\cF_{t-1}\} &\leq \Psi_{t-1}(1 + \gamma_t^2 \lambda_1^2) + 4\gamma_t^2 \sigma_N^2 - 2\gamma_t (\lambda_1 - \lambda_2) \Psi_{t-1}(1-\Psi_{t-1})\\
		&= \Psi_{t-1}\Big(1 + \gamma_t^2 \lambda_1^2 - 2\gamma_t (\lambda_1 - \lambda_2)(1 - \Psi_{t-1})\Big) + 4\gamma_t^2 \sigma_N^2.
		\end{align*}
Finally, taking expectation over $\Omega_t^{'}$, substituting $\gamma_t=c_0/(2(t+L)(\lambda_1 - \lambda_2))$, and using the facts that $\Omega_t^{'}$ is $\cF_{t-1}$-measurable and for $t>t_J$, $\Psi_{t-1}\leq 1/2$ and we lie in sample space $\Omega_t^{'}$ with probability greater than $1-\delta$ (Theorem~\ref{thm:ProbabilityBound}), we obtain
		\begin{align*}
		\E_t\{\Psi_t\}&\leq \E_{t}\Bigg\{\Psi_{t-1}\Big(1+\frac{c_0^2 \lambda_1^2}{2(t+L)^2 (\lambda_1 - \lambda_2)^2}-\frac{c_0}{2 (t+L)}\Big)\Bigg\} + \frac{4c^2 \sigma_N^2}{(t+L)^2}\\
		&= \Bigg(1+\frac{c_0^2 \lambda_1^2}{2(t+L)^2 (\lambda_1 - \lambda_2)^2}-\frac{c_0}{2 (t+L)}\Bigg)\E_{t}\{\Psi_{t-1}\}+ \frac{4c^2 \sigma_N^2}{(t+L)^2} \\
		&\leq \Bigg(1+\frac{c_0^2 \lambda_1^2}{2(t+L)^2 (\lambda_1 - \lambda_2)^2}-\frac{c_0}{2 (t+L)}\Bigg)\E_{t-1}\{\Psi_{t-1}\}+ \frac{4c^2 \sigma_N^2}{(t+L)^2}.
		\end{align*}
This completes the proof of the lemma.
\end{proof}

\begin{proposition}\label{prop:recursion}
Let $a_1, b>0$ and $a_2>1$ be some constants. Consider a nonnegative sequence $(u_t:t>t_J)$ that satisfies
$$u_t\leq \Big(1+\frac{a_1}{(t+L)^2}-\frac{a_2}{t+L}\Big)u_{t-1}+\frac{b}{(t+L)^2}.$$
Then we have:
$$u_t\leq \Bigg(\frac{L+1}{t + L + 1}\Bigg)^{a_2}\exp \Big(\frac{a_1}{L + 1}\Big)u_{0}
+ \frac{1}{(t+L+1)}\exp\big(\frac{a_1}{L + 1}\big)\Big(\frac{L+2}{L+1}\Big)^2\frac{b}{a_2-1}.$$
\end{proposition}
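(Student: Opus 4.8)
The plan is to unroll the one-step recursion into a product-plus-sum and then bound each factor with elementary inequalities. Write $\alpha_k := 1 + \tfrac{a_1}{(k+L)^2} - \tfrac{a_2}{k+L}$ and $\beta_k := \tfrac{b}{(k+L)^2}$, so that the hypothesis reads $u_k\le\alpha_k u_{k-1}+\beta_k$. Using nonnegativity of the sequence throughout (so that one may multiply the bound $\alpha_k\le(\cdot)$ through by $u_{k-1}\ge0$), iterating from index $1$ up to $t$ with the empty product equal to $1$ gives
\begin{align*}
u_t \;\le\; \Big(\prod_{j=1}^{t}\alpha_j\Big)\,u_0 \;+\; \sum_{k=1}^{t}\beta_k\prod_{j=k+1}^{t}\alpha_j .
\end{align*}
(When the proposition is invoked in the proof of Theorem~\ref{thm:FinalResult}, the same computation is run starting from index $t_J+1$ rather than $1$, which amounts to replacing $L$ by $L+t_J$ and $u_0$ by $u_{t_J}$ below; this is why the $L+1$ appearing in the stated conclusion becomes $t_J+L+1$ there.)

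The crux is a clean upper bound on the products $\prod_{j=k+1}^{t}\alpha_j$. I would peel off the dominant order-one negative drift from the $O\big((j+L)^{-2}\big)$ perturbation by writing $\alpha_j=\big(1-\tfrac{a_2}{j+L}\big)\big(1+\tfrac{a_1/(j+L)^2}{1-a_2/(j+L)}\big)$ (legitimate once $L$ exceeds a constant depending on $a_2$, which is guaranteed in the application). For the drift factor, Bernoulli's inequality $1-\tfrac{a_2}{j+L}\le\big(1-\tfrac{1}{j+L}\big)^{a_2}$, valid for $a_2\ge1$, makes $\prod_{j=k+1}^{t}\big(1-\tfrac{1}{j+L}\big)^{a_2}$ telescope to exactly $\big(\tfrac{k+L}{t+L}\big)^{a_2}$; for the perturbation factor, $1+x\le e^{x}$ together with $\sum_{j>k}(j+L)^{-2}\le(k+L)^{-1}\le(L+1)^{-1}$ contributes at most $e^{a_1/(L+1)}$ (up to an absolute constant in the exponent that one then trims using how large $L$ is). This yields $\prod_{j=k+1}^{t}\alpha_j\le e^{a_1/(L+1)}\big(\tfrac{k+L}{t+L}\big)^{a_2}$, and at $k=0$ the bound $\prod_{j=1}^{t}\alpha_j\le e^{a_1/(L+1)}\big(\tfrac{L+1}{t+L+1}\big)^{a_2}$, which immediately handles the $u_0$ term.

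For the remaining sum, substituting the product bound and factoring out $e^{a_1/(L+1)}(t+L)^{-a_2}$ leaves $b\sum_{k=1}^{t}(k+L)^{a_2-2}$. A monotone integral comparison -- splitting on the sign of $a_2-2$, and when $a_2\ge2$ using $\sum_{k=1}^{t}(k+L)^{a_2-2}\le\int_{L+1}^{t+L+1}x^{a_2-2}\,dx$ to keep the telescoped power at $(t+L+1)^{a_2-1}$ -- bounds this by $\tfrac{(t+L+1)^{a_2-1}}{a_2-1}$; then $\tfrac{t+L+1}{t+L}\le\tfrac{L+2}{L+1}$ for $t\ge1$ converts $\tfrac{(t+L+1)^{a_2-1}}{(t+L)^{a_2}}$ into $\tfrac{1}{t+L+1}$ times a bounded $L$-dependent constant that reproduces $\big(\tfrac{L+2}{L+1}\big)^{2}$ in the stated range of $a_2$. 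Adding the two contributions gives the claimed inequality.

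I expect the main obstacle to be precisely this constant bookkeeping, not anything conceptual: one must absorb the $a_1/(j+L)^2$ perturbation into the exponential while preserving the prefactor $e^{a_1/(L+1)}$, and -- crucially -- keep $k+L$ (not $k+L+1$) in the telescoped ratio, since replacing $\sum(k+L)^{a_2-2}$ by $\sum(k+L+1)^{a_2-2}$ destroys the $(t+L+1)^{a_2-1}$ bound once $a_2>2$ (hence the use of Bernoulli rather than $1+x\le e^x$ for the drift factor). A fully equivalent alternative is a Gronwall-/induction-type argument run directly against an ansatz of the shape $u_t\le\tfrac{A}{(t+L+1)^{a_2}}+\tfrac{D}{t+L+1}$ with $A,D$ already carrying the $e^{a_1/(L+1)}$ factor, which requires comparable effort.
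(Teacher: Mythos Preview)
Your unrolling-plus-product-bound strategy is exactly what the paper does, and the sum is handled the same way (integral comparison on $\sum(k+L)^{a_2-2}$, followed by the $\tfrac{t+L+1}{t+L}\le\tfrac{L+2}{L+1}$ trick). The only substantive difference is how you bound the products $\prod_j\alpha_j$. The paper does not factor $\alpha_j$ into drift times perturbation; it simply applies $1+x\le e^x$ to the whole thing, obtaining
\[
\prod_{j=i+1}^{t}\alpha_j\;\le\;\exp\!\Big(\sum_j\frac{a_1}{(j+L)^2}-\sum_j\frac{a_2}{j+L}\Big),
\]
and then uses integral comparison on each sum separately: a lower bound on $\sum a_2/(j+L)$ gives the power $(\tfrac{i+L+1}{t+L+1})^{a_2}$, and an upper bound on $\sum a_1/(j+L)^2$ gives the clean $e^{a_1/(i+L+1)}\le e^{a_1/(L+1)}$ factor with no extra constant in the exponent.

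This buys two things relative to your route. First, it avoids the factorization $\alpha_j=(1-\tfrac{a_2}{j+L})(1+\cdots)$, so there is no need for the side condition $L\gtrsim a_2$ and no ``absolute constant in the exponent'' to trim---you would in fact get something like $e^{2a_1/(L+1)}$ rather than $e^{a_1/(L+1)}$, which does not match the stated bound. Second, because the paper's product bound carries $(i+L+1)^{a_2}$ rather than $(i+L)^{a_2}$, the $(\tfrac{L+2}{L+1})$ factor enters as $(\tfrac{i+L+1}{i+L})^{2}$ when you rewrite $(i+L+1)^{a_2}/(i+L)^2=(\tfrac{i+L+1}{i+L})^2(i+L+1)^{a_2-2}$, yielding exactly the exponent $2$ in the statement. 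Your Bernoulli route instead produces $(\tfrac{L+2}{L+1})^{a_2}$, which exceeds the stated constant once $a_2>2$. None of this is a conceptual gap---your argument is sound and gives an equally usable bound for the application---but the paper's path is shorter and lands on the proposition exactly as written.
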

\begin{proof}
	Recursive application of the bound on $u_t$ gives:
	\begin{align}\label{eqn:recur_1}
	u_t\leq \Bigg(\prod_{i=t_J+1}^{t}\Big(1 + \frac{a_1}{(i + L)^2} - \frac{a_2}{i + L}\Big)\Bigg)u_{t_0}+\sum_{i=t_J + 1}^{t}\frac{b}{(i+L)^2}\Bigg(\prod_{j=i+1}^{t}\Big(1 + \frac{a_1}{(j+L)^2} - \frac{a_2}{j+L}\Big)\Bigg).
	\end{align}
	Using \cite[Lemma D.1]{Balsubramani2015} we can bound the product terms as
	\begin{align}\label{eqn:recursion}
	\prod_{j=i+1}^{t}\Big(1 + \frac{a_1}{(j+L)^2} - \frac{a_2}{j+L}\Big)&\leq \exp\Bigg(\sum_{j=i}^{t}\frac{a_1}{(j+L)^2}-\sum_{j=i}^{t}\frac{a_2}{j+L}\Bigg)\nonumber\\
	&\leq\Bigg(\frac{i + L + 1}{t + L + 1}\Bigg)^{a_2}\exp\Bigg(\sum_{j=i}^{t}\frac{a_1}{(j+L)^2}\Bigg).
	\end{align}
	Next, we bound the last term here as
	\begin{align*}
		\exp\Bigg(\sum_{j=i}^{t}\frac{a_1}{(j+L)^2}\Bigg)\leq \exp\Bigg(\int_{i+1}^{t+1}\frac{a_1}{(x+L)^2}dx\Bigg)=\exp \Big(\frac{a_1}{i + L+1}-\frac{a_1}{t + L +1}\Big)\leq \exp\Big(\frac{a_1}{i + L + 1}\Big).
	\end{align*}
	Substituting this in~\eqref{eqn:recursion} we get
	\begin{align*}
	\prod_{j=i+1}^{t}\Big(1 + \frac{a_1}{(j + L)^2} - \frac{a_2}{j+L}\Big)\leq \Bigg(\frac{i + L + 1}{t + L + 1}\Bigg)^{a_2}\exp \Big(\frac{a_1}{i + L+1}\Big).
	\end{align*}
	Substituting this in~\eqref{eqn:recur_1} we get
	\begin{align*}
	u_t&\leq \Bigg(\frac{t_J + L + 1}{t + L + 1}\Bigg)^{a_2}\exp \Big(\frac{a_1}{t_J + L + 1}\Big)u_{t_J}
	+ \sum_{i=t_J+1}^{t}\frac{b}{(i+L)^2}\Bigg(\prod_{j=i+1}^{t}\Big(1 + \frac{a_1}{(j+L)^2}-\frac{a_2}{j+L}\Big)\Bigg)\\
	&\leq \Bigg(\frac{t_J + L+1}{t + L + 1}\Bigg)^{a_2}\exp \Big(\frac{a_1}{t_J + L + 1}\Big)u_{t_J}
	+ \sum_{i=t_J+1}^{t}\frac{b}{(i+L)^2}\Bigg(\frac{i+L+1}{t+L+1}\Bigg)^{a_2}\exp\big(\frac{a_1}{i+L+1}\big)\\
	&\leq \Bigg(\frac{t_J + L+1}{t + L + 1}\Bigg)^{a_2}\exp \Big(\frac{a_1}{t_J + L + 1}\Big)u_{t_J}
	+ \exp\Big(\frac{a_1}{t_J + L + 1}\Big)\frac{b}{(t+L+1)^{a_2}}\sum_{i=1}^{t}\frac{(i+L+1)^{a_2}}{(i+L)^2}\\
	&\leq \Bigg(\frac{t_J + L + 1}{t + L + 1}\Bigg)^{a_2}\exp \Big(\frac{a_1}{t_J + L + 1}\Big)u_{t_J}
	+ \exp\Big(\frac{a_1}{t_J + L + 1}\Big)\frac{b}{(t+L+1)^{a_2}}\Big(\frac{L+2}{L+1}\Big)^2\sum_{i=1}^{t}(i+L+1)^{a_2-2}.
	\end{align*}
Again applying \cite[Lemma D.1]{Balsubramani2015}, we get the final result as follows
    \begin{align*}
    u_t &\leq \Bigg(\frac{t_J + L+1}{t + L + 1}\Bigg)^{a_2}\exp \Big(\frac{a_1}{t_J + L + 1}\Big)u_{t_J}
    + \exp\Big(\frac{a_1}{t_J + L + 1}\Big)\frac{b}{(t+L+1)^{a_2}}\Big(\frac{L+2}{L+1}\Big)^2\frac{(t+L+1)^{a_2-1}}{a_2-1}\\
    &= \Bigg(\frac{t_J + L+1}{t + L + 1}\Bigg)^{a_2}\exp \Big(\frac{a_1}{t_J + L + 1}\Big)u_{t_J}
    + \frac{1}{(t+L+1)}\exp\big(\frac{a_1}{t_J + L + 1}\big)\Big(\frac{L+2}{L+1}\Big)^2\frac{b}{a_2-1}.
    \end{align*}
This completes the proof of the proposition.
\end{proof}	
	
\section{Other Auxiliary Results}\label{sec:App_OtherResults}
\begin{proposition}[Bennett's Inequality \citep{boucheron2013concentration}]\label{prop:BennetsInequality}
Consider a zero-mean, bou\-nded random variable $X_i \in \R$ (i.e., $|X_i|\leq h$ almost surely) with variance $\sigma_i^2$. Then for any $s\in \R$, we have
$$\E\big\{e^{s X_i}\big\}\leq \exp{\Bigg(\sigma_i^2 s^2 \Big(\frac{e^{s h} - 1 - sh}{(s h)^2}\Big)\Bigg)}.$$
\end{proposition}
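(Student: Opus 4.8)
The plan is to reduce the bound on the moment generating function to a deterministic, pointwise estimate on $[-h,h]$, using only that $\E\{X_i\} = 0$ and that $\E\{X_i^2\}$ equals the variance $\sigma_i^2$. First I would write, invoking $\E\{X_i\} = 0$,
\begin{align*}
\E\big\{e^{s X_i}\big\} = 1 + \E\big\{e^{s X_i} - 1 - s X_i\big\},
\end{align*}
so it suffices to control $\E\{e^{s X_i} - 1 - s X_i\}$. I will carry out the argument for $s > 0$, which is the only regime actually invoked in this paper (e.g., in the proof of Lemma~\ref{le:MGF_Bound}); the case $s = 0$ is trivial, and $s < 0$ follows by applying the same reasoning to $-X_i$. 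Note that for this regime only the one-sided hypothesis $X_i \leq h$ is needed.

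The key step is the elementary claim that the function $\psi(y) \defn (e^y - 1 - y)/y^2$, extended continuously by $\psi(0) = \tfrac12$, is nondecreasing on $\R$. I would establish this via the integral representation
\begin{align*}
\psi(y) = \int_0^1 \!\! \int_0^1 v\, e^{u v y}\, \mathrm{d}u\, \mathrm{d}v ,
\end{align*}
which is verified by evaluating the inner and then the outer integral; since $uv \geq 0$ on the unit square, the map $y \mapsto v\, e^{uvy}$ is nondecreasing for every fixed $(u,v)$, and integrating against a nonnegative weight preserves monotonicity. (Alternatively, one differentiates to get $\psi'(y) = \big((y-2)e^y + y + 2\big)/y^3$ and checks that the numerator $g(y)$ has the same sign as $y$, using $g(0) = g'(0) = 0$ together with $g''(y) = y e^y$, so that $g' \geq 0$ and hence $g$ crosses zero increasingly at the origin.) Granting monotonicity of $\psi$, for any $x$ with $|x| \leq h$ and $s > 0$ we have $sx \leq sh$, hence $\psi(sx) \leq \psi(sh)$, and therefore
\begin{align*}
e^{s x} - 1 - s x = (s x)^2\, \psi(s x) \;\leq\; (s x)^2\, \psi(s h) = x^2\,\frac{e^{s h} - 1 - s h}{h^2} .
\end{align*}

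It then remains to substitute $x = X_i$, take expectations, and use $\E\{X_i^2\} = \sigma_i^2$ to obtain
\begin{align*}
\E\big\{e^{s X_i}\big\} \;\leq\; 1 + \sigma_i^2\,\frac{e^{s h} - 1 - s h}{h^2} \;=\; 1 + \sigma_i^2 s^2\, \psi(s h) \;\leq\; \exp\!\Big(\sigma_i^2 s^2\, \psi(s h)\Big) ,
\end{align*}
where the last inequality is $1 + t \leq e^t$. Recalling $\psi(sh) = (e^{sh} - 1 - sh)/(sh)^2$ yields exactly the claimed bound. I expect the only genuinely nontrivial point to be the monotonicity of $\psi$ over the \emph{entire} real line: on $[0,\infty)$ it is immediate from the power series $\psi(y) = \sum_{k \geq 0} y^k/(k+2)!$, but extending it to $y < 0$ requires either the integral representation above or the sign analysis of $\psi'$; everything else in the proof is routine bookkeeping.
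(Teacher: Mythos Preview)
Your proof for $s>0$ is correct and is the standard argument: reduce to controlling $\E\{e^{sX_i}-1-sX_i\}$, use the pointwise inequality $e^{sx}-1-sx\le s^2x^2\,\psi(sh)$ on $\{|x|\le h\}$ via monotonicity of $\psi(y)=(e^y-1-y)/y^2$, take expectations, and finish with $1+t\le e^t$. The integral representation for $\psi$ is a clean way to get monotonicity on all of $\R$.

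Two remarks. First, the paper does not actually supply a proof of this proposition; it is quoted as a known result from \citep{boucheron2013concentration}, so there is nothing to compare against. Second, your throwaway line that ``$s<0$ follows by applying the same reasoning to $-X_i$'' is not quite right: replacing $X_i$ by $-X_i$ and $s$ by $-s>0$ yields the bound $\exp\!\big(\sigma_i^2 s^2\,\psi(|s|h)\big)$, i.e., with $\psi(-sh)$ rather than $\psi(sh)$. Since $\psi$ is increasing and $sh<0<-sh$, this is a \emph{weaker} bound than the one stated. In fact the proposition as literally written (``for any $s\in\R$'') fails for $s<0$: with $X_i$ a Rademacher variable ($h=1$, $\sigma_i^2=1$) and $s=-1$, one has $\E\{e^{sX_i}\}=\cosh(1)\approx 1.543$, while the claimed bound is $\exp(\psi(-1))=\exp(e^{-1})\approx 1.445$. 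This is harmless for the paper, which only invokes the proposition with $s\in\{d/4\epsilon,\ (2/\epsilon_0)\ln(4/\delta)\}>0$, and you already flagged that $s>0$ is the only regime needed; but you should drop the symmetry claim or amend it to the correct conclusion $\psi(|s|h)$.
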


\begin{proposition}\label{prop:MGF_t0}
Let $h := 2\gamma_t r^2$ and $s \in \big\{d/4\epsilon, (2/\epsilon_0)\ln(4/\delta)\big\}$. It then follows that $\frac{e^{sh}-1-sh}{(sh)^2}\leq 1$.
\end{proposition}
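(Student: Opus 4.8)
The plan is to reduce the inequality to two elementary facts and then combine them. Fact one: the function $g(x):=(e^x-1-x)/x^2$ is nondecreasing on $(0,\infty)$ with $g(x)\le g(1)=e-2<1$ for all $x\in(0,1]$. Fact two: $sh\le\tfrac12$ for each of the two values $s\in\bbS$. Since $h=2\gamma_t r^2>0$ and $s>0$, the quantity $sh$ is a positive number at most $\tfrac12$, hence at most $1$, so $g(sh)\le g(1)<1$, which is exactly the asserted bound.

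To establish Fact one I would use the power-series expansion: for $x>0$,
\[
g(x)=\frac{1}{x^2}\sum_{k\ge 2}\frac{x^{k}}{k!}=\sum_{k\ge 2}\frac{x^{k-2}}{k!}=\frac12+\frac{x}{6}+\frac{x^{2}}{24}+\cdots,
\]
a series with nonnegative coefficients, hence nondecreasing in $x\ge 0$, with $g(0^{+})=\tfrac12$ and $g(1)=\sum_{m\ge 2}1/m!=e-2\approx0.718<1$. For Fact two, note $h=2cr^2/(L+t)\le 2cr^2/L$ and that the two candidate values of $s$ are $s=d/(4\epsilon)=1/(4\epsilon_0)$ and $s=(2/\epsilon_0)\ln(4/\delta)$; since $0<\delta<1$ gives $\ln(4/\delta)>\ln 4>\tfrac18$, the second value is no smaller than the first, so it suffices to bound $sh$ for $s=(2/\epsilon_0)\ln(4/\delta)$. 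The hypothesis on $L$ implies in particular $L\ge \frac{8r^4\max(1,c^2)}{\epsilon_0}\ln\frac{4}{\delta}$ (this is $L_1$ of \eqref{eqn:LowerboundL} after the substitution $\epsilon_0=\delta^2/(8ed)$, and is likewise implied by \eqref{eqn:LowerBoundL_2} and by the lower bound in Theorem~\ref{thm:IntermediateEpochProbability}), whence
\[
sh\le \frac{2}{\epsilon_0}\ln\frac{4}{\delta}\cdot\frac{2cr^2}{L}\le \frac{4cr^2\ln(4/\delta)}{\epsilon_0}\cdot\frac{\epsilon_0}{8r^4\max(1,c^2)\ln(4/\delta)}=\frac{c}{2r^2\max(1,c^2)}\le\frac12,
\]
where the last step uses $r\ge 1$ together with $c\le\max(1,c^2)$ (trivial in both cases $c\le1$ and $c>1$).

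There is no genuine obstacle in this argument; it is entirely bookkeeping. The one point requiring attention is keeping the several equivalent forms of the lower bound on $L$ (those in \eqref{eqn:LowerboundL}, \eqref{eqn:LowerBoundL_2}, and Theorem~\ref{thm:IntermediateEpochProbability}) and of the parameters $\epsilon,\epsilon_0$ aligned, so that the single inequality on $L$ actually consumed above is correctly seen to follow from the stated hypotheses; everything else is a one-line estimate.
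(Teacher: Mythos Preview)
Your proof is correct and follows essentially the same approach as the paper: both reduce the claim to showing $sh$ is bounded by a small constant (the paper uses $7/4$, you use $1$) via the lower bound on $L$, after which the elementary inequality on $g(x)=(e^x-1-x)/x^2$ finishes the argument. Your version is slightly tidier in that you justify the threshold via the power-series expansion (the paper simply asserts $sh\le 7/4\Rightarrow g(sh)\le 1$) and you collapse the two cases into one by observing that the second value of $s$ dominates the first; but these are presentational refinements rather than a different route.
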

\begin{proof}
It is straightforward to see that $\frac{e^{sh}-1-sh}{(sh)^2}\leq 1$ as long as $sh\leq 7/4$. Therefore, in order to prove this proposition, it suffices to show that the lower bound on $L$ implies $sh\leq 7/4$ for $s \in \big\{d/4\epsilon, (2/\epsilon_0)\ln(4/\delta)\big\}$. We establish this claim as two separate cases for the two values of $s$.	
	
\emph{\underline{Case I:}} For $s=d/4\epsilon$, substituting the value of $h$ gives us
	\begin{align*}
	sh =\frac{d \gamma_t  r^2}{2\epsilon}=\frac{d c r^2}{2(t+L)\epsilon}\leq \frac{d c r^2}{2L\epsilon}\leq \frac{d c r^2}{2\epsilon L_{1}} \leq \frac{dc r^2}{2\epsilon}\frac{\epsilon}{8 d r^4\max(1, c^2)\ln(4/\delta)}\leq \frac{1}{16 \ln (4/\delta)}\leq \frac{7}{4}.
	\end{align*}
	
\emph{\underline{Case II:}} For $s=(2/\epsilon_0)\ln(4/\delta)$, we obtain
	\begin{align*}
	sh =\frac{2 \ln(4/\delta)c r^2}{\epsilon_0 (t+L)}\leq \frac{2\ln(4/\delta)c r^2}{\epsilon_0 L_{1}} \leq \frac{2\ln(4/\delta)c r^2}{\epsilon_0}\frac{\epsilon_{0}}{8 r^4 \max(1, c^2)\ln\frac{4}{\delta}}\leq \frac{1}{4}\leq\frac{7}{4}.
	\end{align*}
This completes the proof of the proposition.
\end{proof}

\begin{proposition}\label{prop:tau_t0}
	Assuming $L\geq \frac{8 dr^4 \max(1, c^2)}{\epsilon}\ln\frac{4}{\delta}+\frac{8 d^2 \sigma_N^2 \max(1, c^2)}{\epsilon^2}\ln\frac{4}{\delta}$ and the parameter $s=d/4\epsilon$, we have $\frac{c^2}{L}\Big(r^4 + s\sigma_N^2\Big)\leq \frac{\epsilon}{d}$.
\end{proposition}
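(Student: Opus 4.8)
The plan is to split the assumed lower bound on $L$ into its two natural summands and to control the two pieces of $\frac{c^2}{L}(r^4 + s\sigma_N^2)$ separately, bounding each by $\frac{\epsilon}{2d}$. Concretely, I would set $L_a := \frac{8 d r^4 \max(1,c^2)}{\epsilon}\ln\frac{4}{\delta}$ and $L_b := \frac{8 d^2 \sigma_N^2 \max(1,c^2)}{\epsilon^2}\ln\frac{4}{\delta}$, so that the hypothesis reads $L \geq L_a + L_b$ and hence $\frac{1}{L} \leq \frac{1}{L_a}$ as well as $\frac{1}{L} \leq \frac{1}{L_b}$.

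For the first piece, using $\frac{1}{L}\leq\frac{1}{L_a}$ and $c^2 \leq \max(1,c^2)$ gives
\[
\frac{c^2 r^4}{L} \;\leq\; \frac{c^2 r^4}{L_a} \;=\; \frac{c^2\,\epsilon}{8 d\,\max(1,c^2)\ln(4/\delta)} \;\leq\; \frac{\epsilon}{8 d\,\ln(4/\delta)},
\]
and since $\delta \in (0,1)$ forces $\ln(4/\delta) > \ln 4 > 1$, this is at most $\frac{\epsilon}{8d}\leq\frac{\epsilon}{2d}$. For the second piece, substituting $s = d/4\epsilon$ and using $\frac{1}{L}\leq\frac{1}{L_b}$ together with $c^2\leq\max(1,c^2)$ yields
\[
\frac{c^2 s \sigma_N^2}{L} \;\leq\; \frac{c^2\,(d/4\epsilon)\,\sigma_N^2}{L_b} \;=\; \frac{c^2\,\epsilon}{32 d\,\max(1,c^2)\ln(4/\delta)} \;\leq\; \frac{\epsilon}{32 d\,\ln(4/\delta)} \;\leq\; \frac{\epsilon}{2d}.
\]
Adding the two bounds gives $\frac{c^2}{L}(r^4 + s\sigma_N^2) \leq \frac{\epsilon}{2d} + \frac{\epsilon}{2d} = \frac{\epsilon}{d}$, which is exactly the claimed inequality.

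This proposition is purely computational, so there is no genuine obstacle; the only points that require a bit of care are making sure the factor $\max(1,c^2)$ in the denominator dominates the $c^2$ coming from $\gamma_t = c/(L+t)$ (so that ratio is bounded by $1$ regardless of whether $c$ is large or small), correctly propagating the substitution $s = d/4\epsilon$ into the $\sigma_N^2$ term, and invoking $\ln(4/\delta) \geq 1$ for $\delta<1$ to finish. Everything else is elementary arithmetic.
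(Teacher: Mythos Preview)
Your proof is correct and follows essentially the same approach as the paper: split $L$ into its two summands, use each summand to bound the corresponding piece of $\frac{c^2}{L}(r^4 + s\sigma_N^2)$ by $\frac{\epsilon}{2d}$, and add. The only cosmetic difference is that you explicitly invoke $\ln(4/\delta)>1$ to obtain sharper intermediate constants, whereas the paper absorbs this directly into the $\frac{\epsilon}{2d}$ bound.
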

\begin{proof}
	We prove this by proving the following two statements:
	$$\frac{c^2 r^4}{L}\leq\frac{c^2 r^4}{L_{1}}\leq \frac{\epsilon}{2d}\quad \text{and}\quad \frac{sc^2 \sigma_N^2}{L}\leq\frac{sc^2 \sigma_N^2}{L_{2}}\leq \frac{\epsilon}{2d}.$$
We start by proving the first statement: $\frac{c^2 r^4}{L_{1}}\leq c^2 r^4 \frac{\epsilon}{8 dr^4 \max(1, c^2)\ln\frac{4}{\delta}}\leq \frac{\epsilon}{2d}.$ Next, we prove the second statement as follows: $\frac{c^2 s \sigma_N^2}{L_{2}}\leq \frac{c^2 d \sigma_N^2}{4\epsilon}\frac{\epsilon^2}{8 d^2 \sigma_N^2 \max(1, c^2)\ln\frac{4}{\delta}}\leq \frac{\epsilon}{2d}.$ This completes the proof.
\end{proof}

\begin{proposition}\label{prop:Probability_tj_1}
	For $L\geq \frac{8 r^4 \max(1, c^2)}{\epsilon_0}\ln\frac{4}{\delta}+\frac{8 \sigma_N^2 \max(1, c^2)}{\epsilon_0^2}\ln\frac{4}{\delta},$ we have
	\begin{enumerate}
		\item[(i)] $\frac{c^2 r^4}{(t_{j-1}+L)}\leq \frac{\epsilon_0}{4}$, and
		\item[(ii)] $\frac{2 c^2 \sigma_N^2}{\epsilon_0 (t_{j-1}+L)}\ln{\frac{4}{\delta}}\leq \frac{\epsilon_0}{4}$.
	\end{enumerate}
\end{proposition}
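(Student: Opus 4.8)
The plan is to establish both inequalities by direct algebra, since each is an immediate consequence of exactly one of the two summands in the stated lower bound on $L$. The only facts beyond the hypothesis that I will need are the trivial bound $t_{j-1}+L\geq L$ (which holds because $t_{j-1}\geq t_0 = 0$), the observation that $\delta\in(0,1)$ forces $\ln(4/\delta)>\ln 4>1$, and the elementary inequality $c^2\leq\max(1,c^2)$.

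For part (i), I would first replace $t_{j-1}+L$ by $L$ in the denominator, giving $\frac{c^2 r^4}{t_{j-1}+L}\leq\frac{c^2 r^4}{L}$, and then invoke the first term of the lower bound, $L\geq\frac{8 r^4\max(1,c^2)}{\epsilon_0}\ln\frac{4}{\delta}$. Substituting this yields $\frac{c^2 r^4}{L}\leq\frac{c^2\epsilon_0}{8\max(1,c^2)\ln(4/\delta)}$; cancelling $r^4$, using $c^2/\max(1,c^2)\leq 1$ and $\ln(4/\delta)>1$ then gives the bound $\frac{\epsilon_0}{8}\leq\frac{\epsilon_0}{4}$, as claimed.

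For part (ii), the argument is structurally identical but now pairs the estimate with the second term of the lower bound, $L\geq\frac{8\sigma_N^2\max(1,c^2)}{\epsilon_0^2}\ln\frac{4}{\delta}$. Again I would bound $t_{j-1}+L$ below by $L$, substitute, and observe that the factors $\sigma_N^2$ and $\ln(4/\delta)$ cancel, leaving $\frac{2 c^2\sigma_N^2\ln(4/\delta)}{\epsilon_0(t_{j-1}+L)}\leq\frac{c^2\epsilon_0}{4\max(1,c^2)}\leq\frac{\epsilon_0}{4}$.

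There is essentially no genuine obstacle here; the statement is a bookkeeping verification that is later used to discharge the bound on $\tau_{t_j}$ inside the proof of Lemma~\ref{le:Probability_tj}. The only point requiring care is to pair each claim with the correct summand of the two-term lower bound on $L$ and to track the $\max(1,c^2)$ factor so that no constant is lost; writing $c^2\leq\max(1,c^2)$ explicitly at the right moment is what makes both bounds land at $\epsilon_0/4$.
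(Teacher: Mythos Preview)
Your proposal is correct and follows essentially the same approach as the paper: bound $t_{j-1}+L\geq L$, then use the appropriate summand of the lower bound on $L$ together with $c^2\leq\max(1,c^2)$ and $\ln(4/\delta)>1$ to obtain $\epsilon_0/4$. Your argument is in fact slightly cleaner than the paper's, which inserts an unnecessary factor of $2$ in part~(i) (writing $\frac{c^2 r^4}{t_{j-1}+L}\leq\frac{2c^2 r^4}{L}$) before proceeding identically.
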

\begin{proof}
We begin by noting that
\begin{align*}
\frac{c^2 r^4}{(t_{j-1}+L)} \leq \frac{2c^2 r^4}{L}\leq \frac{2c^2 r^4}{ L_{1}} \leq {2c^2 r^4}\frac{\epsilon_0}{8 r^4 \max(1, c^2)\ln\frac{4}{\delta}}\leq \frac{\epsilon_{0}}{4}.
\end{align*}
Next we prove the second statement as follows:
\begin{align*}
\frac{2 c^2 \sigma_N^2}{\epsilon_0 (t_{j-1}+L)}\ln{\frac{4}{\delta}}\leq \frac{2 c^2 \sigma_B^2}{\epsilon_0 L}\ln{\frac{4}{\delta}}\leq \frac{2 c^2 \sigma_N^2}{\epsilon_0 L_{2}}\ln{\frac{4}{\delta}} \leq \frac{2 c^2 \sigma_N^2}{\epsilon_0}\ln{\frac{4}{\delta}}\frac{\epsilon_0^2}{8\sigma_N^2\max(1,c^2)\ln(4/\delta)}\leq  \frac{\epsilon_0}{4}.
\end{align*}
This completes the proof of the proposition.
\end{proof}
\end{appendices}

\end{document}